\title[Scale-free network optimization: foundations and algorithms]{Scale-free network  optimization: foundations and algorithms}
\begin{document}

\maketitle

%Comment for COLT version
\thispagestyle{empty}

% !TEX root = REBESCHINI16.tex

\begin{abstract}
We investigate the fundamental principles that drive the development of scalable algorithms for network optimization. Despite the significant amount of work on parallel and decentralized algorithms in the optimization community, the methods that have been proposed typically rely on strict separability assumptions for objective function and constraints. Beside sparsity, these methods typically do not exploit the \emph{strength} of the interaction between variables in the system. We propose a notion of correlation in constrained optimization that is based on the sensitivity of the optimal solution upon perturbations of the constraints. We develop a general theory of sensitivity of optimizers the extends beyond the infinitesimal setting. We present instances in network optimization where the correlation decays exponentially fast with respect to the natural distance in the network, and we design algorithms that can exploit this decay to yield dimension-free optimization. Our results are the first of their kind, and open new possibilities in the theory of local algorithms.
\end{abstract}

\begin{keywords}
sensitivity of optimal points, decay of correlation, scalable algorithms, network flow, Laplacian, Green's function
\end{keywords}

% !TEX root = REBESCHINI16.tex

\section{Introduction}

Many problems in machine learning, networking, control, and statistics
can be posed in the framework of optimization.
%Due to the explosion in size and complexity of modern datasets, it is increasingly important to be able to solve problems with a very large number of features or training examples, hence the need for algorithms that analyze the data in a distributed manner with low computational and communication costs.
Despite the significant amount of work on 
decomposition methods and 
decentralized algorithms in the optimization community, typically the methodologies being considered rely on strict separability assumptions  on the objective function and constraints, so that the problem can exactly decouple across components and each component can be handled by its own processing unit \citep{BT97,Boyd:2011}. These methods are insensitive to the \emph{strength} of interaction among variables in the system, and beside sparsity they typically do not exploit more refined structures.
On the other hand, probability theory has taught us that random variables need not to be independent for distributed methods to be engineered, and that notions of correlation decay can be exploited to develop scale-free algorithms \citep{G14}. 
This paper represents a first attempt to characterize the correlation among variables in network optimization, and to investigate how decay of correlations with respect to the natural distance of the network can be exploited to develop scalable computationally-efficient algorithms. The paper presents three main contributions.

%In this paper we develop a general theory on the sensitivity of optimal points in constrained convex optimization, and we investigate the sensitivity with respect to the topology of the network.

\textbf{1) Sensitivity of optimal points: notion of correlation in optimization.} In Section \ref{sec:Local sensitivity for equality-constrained convex minimization} we develop a general theory on the sensitivity of optimal points in constrained convex optimization. We consider the problem of minimizing a convex function $x\rightarrow f(x)$ subject to $Ax = b$, for a certain matrix $A$ and vector $b\in\operatorname{Im}(A)$, where $\operatorname{Im}(A)$ denotes the image of $A$.
%Denote by $x^\star(b)$ the unique solution to this problem, as a function of the vector $b$.
If the function $f$ is strongly convex, we show that the optimal point $b \rightarrow x^\star(b)$ is continuously differentiable along $\operatorname{Im}(A)$. We explicitly characterize the effect that perturbations have on the optimal solution as a function of the objective function $f$ and the constraint matrix $A$: given a differentiable function $\varepsilon\in\mathbb{R} \rightarrow b(\varepsilon)\in \operatorname{Im}(A)$, we establish an expression for $\frac{d x^\star(b(\varepsilon))}{d\varepsilon}$ in terms of the Moore-Penrose pseudoinverse of the matrix $A\Sigma(b(\varepsilon))A^T$, where $A^T$ is the transpose of $A$, and where $\Sigma(b)$ denotes the inverse of the Hessian of $f$ evaluated at $x^\star(b)$, namely, $\Sigma(b):=\nabla^2 f(x^\star (b))^{-1}$. We provide an interpretation of the derivatives of optimal points as a measure of the correlation between variables in the optimization procedure. Textbook results on the sensitivity analysis for optimization procedures are typically stated only with respect to the optimal objective function, i.e., $f(x^\star(b))$, which in general is a much more well-behaved object than the point where the optimum is attained, i.e., $x^\star(b)$. On the other hand, the literature on the sensitivity of optimal points (see \cite{CCMM2007} and reference therein) is only concerned with establishing infinitesimal perturbations locally, on a neighborhood of a certain $b\in\operatorname{Im}(A)$, while the theory that we develop extends to \emph{finite} perturbations as well via the fundamental theorem of calculus. The workhorse behind our results is Hadamard's global inverse function theorem. The details of the proofs involving Hadamard's theorem are presented in Appendix \ref{sec:Hadamard global inverse theorem}.
%In fact, we do not simply establish formulas for derivatives locally, i.e., on a neighborhood of a certain $b\in\operatorname{Im}(A)$, but we show that the optimal point $x^\star$ is \emph{continuously} differentiable along the \emph{entire} subspace $\operatorname{Im}(A)$, hence allowing the use of the fundamental theorem of calculus to get finite-difference statements. The workhorse behind our results is Hadamard's global inverse function theorem, see Appendix \ref{sec:Hadamard global inverse theorem}.

\textbf{2) Foundation of scale-free network optimization: decay of correlation.} As a paradigm for network optimization, in Section \ref{sec:Optimal Network Flow} we consider the widely-studied min-cost network flow problem, which has been fundamental in the development of the theory of polynomial-times algorithms for optimizations (see \cite{GSW12} and references therein, or \cite{Ahuja:MinCostFlow} for book reference). Here a directed graph $\vec G=(V,\vec E)$ is given, with its structure encoded in the vertex-to-edge incidence matrix $A\in\mathbb{R}^{V\times \vec E}$.
%$A_{ve}=1$ if edge $e\in \vec E$ leaves vertex $v\in V$, $A_{ve}=-1$ if $e$ enters $v$, and $A_{ve}=0$ otherwise.
To each edge $e\in \vec E$ is associated a flow $x_e$ with a cost $f_e(x_e)$, and to each vertex $v\in V$ is associated an external flow $b_v$. The min-cost network flow problem consists in finding the flow $x^\star(b)\in\mathbb{R}^{\vec E}$ that minimizes the total cost $f(x):=\sum_{e\in \vec E} f_e(x)$, and that satisfies the conservation law $Ax=b$.
%:x_e \in \mathbb{R} \rightarrow f_e(x_e)\in \mathbb{R}$, where $x_e$ is the flow along $e$. Here $x^\star(b)=(x^\star(b)_e)_{e\in E}$ represents the flow that minimizes the total cost $f:=\sum_{e\in E} f_e$ in the network and that satisfies the conservation law $Ax=b$, so that at each vertex the sum of the flows is zero.
%To characterize the correlation among variables in this setting, let $b,p\in \operatorname{Im}(A)$, and assume that $p$ is supported on a subset $Z\subseteq V$, namely, $p_v\neq 0$ if and only if $v\in Z$. Define $b(\varepsilon) := b+\varepsilon p$. In this context, the quantity $\frac{d x^\star(b(\varepsilon))_e}{d\varepsilon}$ measures how much a perturbation of the constraints on $Z$ affects the optimal solution at $e\in E$, hence it can be interpreted as a measure of the correlation between the vertices in $Z$ and the edge $e$ in the optimization problem. 
In this setting, the general sensitivity theory that we developed allows to characterize the optimal flow in terms of graph Laplacians; in fact, in this case the matrix $A\Sigma(b)A^T$ corresponds to the Laplacian of an undirected weighted graph naturally associated to $\vec G$.
%where to each edge $e$ is associated the weight $\Sigma(b)_{ee}=1/\frac{d^2 f_e(x^\star(b))}{dx^2}$.
To estimate the \emph{strength} of the correlation, we develop a general connection between the Moore-Penrose pseudoinverse of graph Laplacians and the Green's function of random walks on weighed graphs. To the best of our knowledge, this connection --- which we present as standalone in Appendix \ref{sec:Laplacians and random walks} --- has not been previously investigated in the literature.
%(which we present as self-contained in Appendix \ref{sec:Laplacians and random walks}).
This result allows us to get an upper bound for the correlation term that 
decays exponentially as a function of the graph distance between the edges that are considered and the set of vertices where the perturbation is localized. The rate of the decay is controlled by the second largest eigenvalue in magnitude of the corresponding random walk.
%$\frac{d x^\star(b(\varepsilon))_e}{d\varepsilon}$ that decays exponentially as a function of the graph distance between the edge $e$ and the set $Z$, where the rate is controlled by the second largest eigenvalue in magnitude of the transition matrix of the associated random walk. In fact, we obtain stronger results in the $\ell_2$-norm: for a given subgraph $G'=(V',E')$ of $G$, we show an exponential decay for the quantity $\sum_{e\in E'} ( \frac{d x^\star(b(\varepsilon))_e}{d\varepsilon} )^2$ as a function of the distance between $V'$ and $Z$, where the constants involved in the decay do not depend on the size of $G'$.
This phenomenon can be interpreted as a first manifestation of the decay of correlation principle in constrained optimization, resembling the decay of correlation property in statistical mechanics and probability theory first investigated in the seminal work of Dobrushin \citep{Dob70} (for book references see \cite{Sim93} and \cite{Geo11}).

\textbf{3) Scale-free algorithms.} Finally, in Section \ref{sec:Scale-free algorithm} we investigate applications of our theory to the field of local algorithms.
%We show how the decay of correlation \emph{structurally} exhibited by the min-cost network flow problem can be exploited to develop scalable algorithms to compute the optimal solution. 
To illustrate the main principle behind scale-free algorithms, we consider the case when the solution $x^\star(b)$ is given and we want to compute the solution $x^\star(b+p)$ for the perturbed flow $b+p$, where $p$ is supported on a small subset $Z\subseteq V$. In this setting, we show that the decay of correlation \emph{structurally} exhibited by the min-cost network flow problem can be exploited to design algorithms that yield \emph{scale-free} optimization, in the sense that the computational complexity required to meet a certain precision level does not depend on the dimension of the network $\vec G$. We consider a localized version of the projected gradient descent algorithm, which only updates the edges in a subgraph of $\vec G$ whose vertex set contains $Z$. 
%Assume that the following two conditions are satisfies: $(i)$ The graph $G$ is a $k$-regular expander, namely, it has degree $k$, and if $\mu$ is the second largest eigenvalue in magnitude of the vertex-to-vertex adjacency matrix of $G$, then $\mu$ does not depend on the size of $G$ (which is, both the cardinality of $V$ and $E$); $(ii)$ For each $e\in E$, the function $f_e$ is $\alpha$-strongly convex and $\beta$-smooth, for certain $0<\alpha<\beta$, and let $Q-1+\frac{Q}{k}\mu < 1$ where $Q:=\beta/\alpha$ is the condition number. Then, if $\omega > 2$ denotes the matrix multiplication constant, we show that the localized version of the projected gradient descent algorithm yields an $\varepsilon$-approximation (measured in the $\ell_2$-norm) for $x^\star(b+p)$ with a computational complexity that scales like $O((\|p\| /\varepsilon)^{\omega \log k} \log(\|p\| /\varepsilon))$, where $\|p\|$ is the $\ell_2$-norm of $p$, and where the constants involved do not dependent of the dimension of the graph $G$, but depend only on $\mu$, $Q$, and $k$.
The correlation decay property encodes the fact that when the external flow is locally perturbed %(by a \emph{finite} amount $p$, hence the need of finite sensitivity results) 
it suffices to recompute the solution only for the part of the network that is ``mostly affected" by this perturbation, i.e., the set of nodes that have a distance at most $r$ from the perturbation set $Z$, where the radius $r$ is tuned to meet the desired level of error tolerance, given the size of the perturbation. Hence the savings in the computational complexity compared to global algorithms that update the solution at every edge in $\vec G$. The theory that we develop in the context of the min-cost network flow problem hints to a general framework to study the trade-off between statistical accuracy and computational complexity for local algorithms in optimization. Our results are the first of their kind, and represent a building block to develop more sophisticated algorithms to exploit decay of correlation in more general instances of network optimization. The proof of the results in Section \ref{sec:Scale-free algorithm} are given in Appendix \ref{app:error localized}.

\begin{remark}[Connection with previous work]
Some of the results presented in this paper will appear in a weaker form and without full proofs in \cite{RT16}. There, the sensitivity analysis is developed for matrices $A$'s that are full row rank, so that the matrix $A\Sigma(b)A^T$ is invertible under the assumption that $f$ is strongly convex. In the current work we relax this assumption and we provide results in terms of the pseudoinverse of $A\Sigma(b)A^T$. Moreover, the current paper presents the full details of the proof which involve Hadamard's global inverse function theorem (Appendix \ref{sec:Hadamard global inverse theorem}). Also the min-cost network flow problem was previously investigated in \cite{RT16}, albeit in a more restrictive fashion through the connection with \emph{killed} random walks. The current paper develops a more general theory of correlation for optimization in terms of graph Laplacians and Green's functions of ordinary (i.e., not killed) random walks on graphs (Appendix \ref{sec:Laplacians and random walks}).
The difference is crucial as far as the results on the decay of correlation property are concerned, as the second largest eigenvalue in magnitude of random walks on graphs is typically much more well-behaved than the largest eigenvalue of killed random walks, as far as the dependence with the dimension is concerned.
%The current theory relies on the second largest eigenvalue in magnitude of random walks on graphs, which is a much more well-behaved (and studied) object than the largest eigenvalue of killed random walks. 
The algorithmic part of this paper (Section \ref{sec:Scale-free algorithm} and Appendix \ref{app:error localized}) is completely new.
\end{remark}

\begin{remark}[Notation]
\label{rem:notation}
Throughout, for a given real matrix $M$, we denote by $M^T$ its transpose, by $M^{-1}$ its inverse, and by $M^{+}$ its Moore-Penrose pseudoinverse. We denote by $\operatorname{Ker}(M):=\{x:Mx=0\}$ and $\operatorname{Im}(M):=\{y: y=Mx \text{ for some $x$}\}$ the kernel and the image of $M$, respectively. Given an index set $\mathcal{I}$ and subsets $K,L\subseteq\mathcal{I}$, if $M\in\mathbb{R}^{\mathcal{I}\times \mathcal{I}}$, we let $M_{K,L}\in\mathbb{R}^{K\times L}$ denote the submatrix corresponding to the rows of $M$ indexed by $K$ and the columns of $M$ indexed by $L$. We use the notation $I$ to indicate the identity matrix, $\mathbb{1}$ to indicate the all-one vector (or matrix), and $\mathbb{0}$ to indicate the all-zero vector (or matrix), whose sizes will be implied by the context. Given a vector $x\in\mathbb{R}^\mathcal{I}$, we denote by $x_i$ its $i$-th component, and we let $\| x\| := (\sum_{i\in \mathcal{I}} x^2_i)^{1/2}$ denote its $\ell_2$-norm. Given a subset $K\subseteq\mathcal{I}$ we define the localized $\ell_2$-norm on $K$ as $\| x\|_K := (\sum_{i\in K} x^2_i)^{1/2}$. Clearly, $\| x\|_\mathcal{I} = \| x\|$. We use the notation $|K|$ to denote the cardinality of $K$.
%For $i\in\mathcal{I}$, we use the standard notation $e_i\in\mathbb{R}^\mathcal{I}$ to denote the vector whose only non-zero component equals $1$ and corresponds to the entry associated to index $i$ (notice that $e_i$ is a vector itselft, and it does not indicate the $i$-th component of a certain vector $e$). 
If $\vec G = (V, \vec E)$ denotes a directed graph with vertex set $V$ and edge set $\vec E$, we let $G = (V, E)$ represent the undirected graph naturally associated to $\vec G$, namely, $\{u,v\}\in E$ if and only if either $(u,v)\in \vec E$ or $(v,u)\in \vec E$.
\end{remark}

% !TEX root = REBESCHINI16.tex

\section{Sensitivity of optimal points: notion of correlation in optimization}
%Sensitivity for equality-constrained convex optimization
\label{sec:Local sensitivity for equality-constrained convex minimization}

Let $\mathcal{V}$ be a finite set --- to be referred to as the ``variable set" --- and let $f:\mathbb{R}^\mathcal{V}\rightarrow \mathbb{R}$ be a strictly convex function, twice continuously differentiable. Let $\mathcal{F}$ be a finite set --- to be referred to as the ``factor set" --- and let $A\in\mathbb{R}^{\mathcal{F}\times \mathcal{V}}$.
Consider the following optimization problem over $x\in\mathbb{R}^\mathcal{V}$:
\begin{align*}
	\begin{aligned}
		\text{minimize }\quad   & f(x)\\
		\text{subject to }\quad & Ax = b,
	\end{aligned}
%	\label{original opt problem}
\end{align*}
for $b\in\operatorname{Im}(A)$, so that the feasible region is not empty.
Throughout this paper we think of the function $f$ and the matrix $A$ as fixed, and we consider the solution of the optimization problem above as a function of the vector $b\in\operatorname{Im}(A)$.
%(the methodology that we introduce can be extended to the more general case where the optimization problem is also seen as depending on $f$ and $A$). %\footnote{Analogous results as the ones presented in this paper, see Theorem \ref{thm:comparisontheorem} below, can be easily derived with respect to changes in the cost function $f$ and/or in the constraint matrix $A$. For the sake of simplicity, we limit the current analysis to perturbations of the constraint vector $b$ only.}.
By strict convexity, this problem clearly has a unique optimal solution, which we denote by
$$
	x^\star(b) := {\arg\min}\left\{ f(x) : x\in\mathbb{R}^\mathcal{V},A x = b \right\}.
$$

%This can be immediately seen by contradiction. Assume that $x^\star(b), x'^\star(b)\in\mathbb{R}^\mathcal{V}$, $x^\star(b) \neq x'^\star(b)$, are two minimizers with $f^\star(b) := f(x^\star(b))=f(x'^\star(b))$. For each $\theta\in(0,1)$, the point $x^\star_\theta(b) := \theta x^\star(b) + (1-\theta)x'^\star(b)$ is clearly feasible. However, as $f$ is strongly convex by assumption, then it is also strictly convex and we get
%$$
%	f(x_\theta^\star(b)) < \theta f(x^\star(b)) + (1-\theta) f(x'^\star(b)) = f^\star(b),
%$$
%which contradicts the assumption that $x^\star(b)$ and $x'^\star(b)$ are minimizers.

Theorem \ref{thm:comparisontheorem} below provides a characterization of the way a perturbation of the constraint vector $b$ along the subspace $\operatorname{Im}(A)$ affects the optimal solution $x^\star(b)$, in the case when the function $f$ is strongly convex.
In textbooks, results on the sensitivity analysis for optimization procedures are typically stated only with respect to the optimal objective function, i.e., $f(x^\star(b))$, not with respect to the point where the optimum is attained, i.e., $x^\star(b)$. See \cite{Boyd}, for instance. The reason is that the optimal value typically behaves much more nicely with respect to perturbations than the optimizer itself. In case of linear programming when $f$ is linear, for instance, it is known that the optimal solution is differentiable upon perturbations, while the optimal point might jump as it is restricted to be on the extreme points of the feasible polyhedron \citep{bertsimas-LPbook}. On the other hand, the literature on the sensitivity of optimal points is only concerned with infinitesimal perturbations (see \cite{CCMM2007} and reference therein). The theory that we develop, instead, extends to finite perturbations as well, as we show that if $f$ is strongly convex then the optimal point $x^\star$ is \emph{continuously} differentiable along the \emph{entire} subspace $\operatorname{Im}(A)$, which allows the use of the fundamental theorem of calculus to get finite-difference statements (the results in Section \ref{sec:Scale-free algorithm} rely heavily on this fact). The workhorse that allows us to establish global results is Hadamard's global inverse function theorem. We now present the main result on the sensitivity of optimal points, together with the main outline of its proof. The technical details involving Hadamard's theorem are given in Appendix \ref{sec:Hadamard global inverse theorem}.

%The following result, on the other hand, is completely \emph{local} in nature as it captures how much $x^\star(b)_i$ depends on $b_a$.
%We refer to \cite{CCMM2007} and references therein for a general theory on sensitivity analysis.
%\footnote{
%Using the results in \cite{CCMM2007} 
%Following the proof strategy of Theorem \ref{thm:comparisontheorem}, we could also investigate the sensitivity of the optimal solution with respect to perturbations of the constraint matrix $A$. For the sake of simplicity, in this paper we limit our analysis to perturbations of the constraint vector $b$.}

\begin{theorem}[Sensitivity of the optimal point]\label{thm:comparisontheorem}
Let $f:\mathbb{R}^\mathcal{V}\rightarrow \mathbb{R}$ be a strongly convex function, twice continuously differentiable.
Let $A\in\mathbb{R}^{\mathcal{F}\times \mathcal{V}}$. Define the function
$$
	x^\star:b\in \operatorname{Im}(A)\subseteq\mathbb{R}^\mathcal{F}\longrightarrow
	x^\star(b) := {\arg\min}\left\{ f(x) : x\in\mathbb{R}^\mathcal{V},A x = b \right\}
	\in\mathbb{R}^{\mathcal{V}}.
$$
For each $b\in \operatorname{Im}(A)$, let $\Sigma(b):=\nabla^2 f(x^\star (b))^{-1}$ and define
$$
	D(b)
	:=
	\Sigma(b)A^T
	\left(A\Sigma(b)A^T\right)^{+}.
$$
Then, $x^\star$ is continuously differentiable along the subspace $\operatorname{Im}(A)$, and given a differentiable function $\varepsilon\in\mathbb{R} \rightarrow b(\varepsilon)\in \operatorname{Im}(A)$, we have
$$
	\frac{d x^\star(b(\varepsilon))}{d \varepsilon} 
	= D(b(\varepsilon)) \frac{d b(\varepsilon)}{d \varepsilon}.
$$
\end{theorem}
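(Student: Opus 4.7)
The plan is twofold. First I would establish, via Hadamard's global inverse function theorem, that $b\mapsto x^\star(b)$ is continuously differentiable on the whole of $\operatorname{Im}(A)$; second I would obtain the formula by implicit differentiation of the KKT conditions, handling the non-uniqueness of the Lagrange multiplier that arises when $A$ is not of full row rank via the Moore--Penrose pseudoinverse. Throughout, $x^\star(b)$ is characterized by the feasibility $Ax^\star(b)=b$ together with the stationarity $\nabla f(x^\star(b))\in\operatorname{Im}(A^T)$.

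For the smoothness step, I would apply Hadamard to the auxiliary map $\Psi:\mathbb{R}^\mathcal{V}\to\mathbb{R}^\mathcal{V}$ defined, relative to the orthogonal decomposition $\mathbb{R}^\mathcal{V}=\operatorname{Im}(A^T)\oplus\operatorname{Ker}(A)$, by $\Psi(x):=P_{\operatorname{Im}(A^T)}x+P_{\operatorname{Ker}(A)}\nabla f(x)$. It is $C^1$; its differential $D\Psi(x)v=P_{\operatorname{Im}(A^T)}v+P_{\operatorname{Ker}(A)}\nabla^2 f(x)v$ is invertible everywhere because strong convexity makes the $\operatorname{Ker}(A)$-block positive definite on $\operatorname{Ker}(A)$; and $\Psi$ is proper because its $\operatorname{Im}(A^T)$-component recovers $P_{\operatorname{Im}(A^T)}x$ exactly, while strong convexity forces $\|P_{\operatorname{Ker}(A)}\nabla f(x)\|$ to diverge as $x$ escapes along $\operatorname{Ker}(A)$. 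Hadamard then yields that $\Psi$ is a $C^1$-diffeomorphism of $\mathbb{R}^\mathcal{V}$. The KKT conditions translate into $\Psi(x^\star(b))=A^+ b$ for every $b\in\operatorname{Im}(A)$, so $x^\star(b)=\Psi^{-1}(A^+ b)$ is continuously differentiable in $b\in\operatorname{Im}(A)$.

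For the derivative formula, I would differentiate the KKT pair along $\varepsilon\mapsto b(\varepsilon)$. The stationarity $\nabla f(x^\star(b(\varepsilon)))\in\operatorname{Im}(A^T)$ yields $\nabla^2 f(x^\star)\dot x^\star\in\operatorname{Im}(A^T)$, so $\dot x^\star=\Sigma(b)A^T\mu$ for some $\mu$; substituting into the differentiated feasibility $A\dot x^\star=\dot b$ gives $A\Sigma(b)A^T\mu=\dot b$. The key linear-algebra fact is that $\Sigma(b)\succ 0$ implies $\operatorname{Ker}(A\Sigma(b)A^T)=\operatorname{Ker}(A^T)$ (from $v^TA\Sigma(b)A^Tv=\|\Sigma(b)^{1/2}A^Tv\|^2$), and consequently $\operatorname{Im}(A\Sigma(b)A^T)=\operatorname{Im}(A)$. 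Since $\dot b\in\operatorname{Im}(A)$ the equation is consistent, the choice $\mu=(A\Sigma(b)A^T)^{+}\dot b$ is a particular solution, and any other solution differs by an element of $\operatorname{Ker}(A^T)$ which is annihilated by $A^T$. Therefore $\dot x^\star=\Sigma(b)A^T(A\Sigma(b)A^T)^{+}\dot b=D(b)\dot b$, as claimed.

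The main obstacle is the global smoothness step rather than the final calculation: one must produce a map to which Hadamard applies and verify its three hypotheses (smoothness, everywhere invertible differential, properness) uniformly on all of $\mathbb{R}^\mathcal{V}$, with properness in particular requiring a quantitative strong convexity estimate controlling $P_{\operatorname{Ker}(A)}\nabla f(x)$ as $x$ escapes along $\operatorname{Ker}(A)$ relative to a bounded component in $\operatorname{Im}(A^T)$. Once continuous differentiability is in hand, the derivative formula is a short computation turning on the identity $\operatorname{Im}(A\Sigma(b)A^T)=\operatorname{Im}(A)$, which is precisely what licenses replacing an inverse by the Moore--Penrose pseudoinverse.
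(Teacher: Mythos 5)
Your proposal is correct, and while it shares the paper's overall skeleton (Hadamard for global $C^1$ regularity, then implicit differentiation of the KKT system with the multiplier ambiguity resolved by the pseudoinverse), the smoothness step takes a genuinely different and arguably leaner route. The paper applies Hadamard to the joint primal--dual map $\Phi(x,\nu)=(\nabla f(x)+A^T\nu,\,Ax)$ restricted to $\mathbb{R}^{\mathcal V}\times\operatorname{Im}(A)$, which forces a change of basis to reduce the restriction to a full-row-rank map on a Euclidean space and a two-case norm-coercivity analysis in both the $x$- and $\nu$-directions. You instead apply Hadamard to the purely primal map $\Psi(x)=P_{\operatorname{Im}(A^T)}x+P_{\operatorname{Ker}(A)}\nabla f(x)$ on $\mathbb{R}^{\mathcal V}$ and read off $x^\star(b)=\Psi^{-1}(A^+b)$; this eliminates the Lagrange multiplier from the regularity argument altogether, the invertibility of $D\Psi$ follows from the block-triangular structure with a positive definite $\operatorname{Ker}(A)$-block, and the only nontrivial coercivity estimate is the strong-convexity bound $\|P_{\operatorname{Ker}(A)}\nabla f(x)\|\ge\alpha\|P_{\operatorname{Ker}(A)}x\|-\|\nabla f(P_{\operatorname{Im}(A^T)}x)\|$, which is exactly the estimate the paper also needs in its case (a)(ii) but is here the whole story. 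What the paper's formulation buys in exchange is that it delivers the differentiability of the canonical multiplier $\nu^\star(b)\in\operatorname{Im}(A)$ as a byproduct, which your primal-only argument does not (though the theorem does not require it). In the differentiation step the two arguments coincide up to presentation: you dispose of the non-uniqueness of $\mu$ by observing that $\operatorname{Ker}(A\Sigma(b)A^T)=\operatorname{Ker}(A^T)$ is annihilated by $\Sigma(b)A^T$, whereas the paper selects the particular solution lying in $\operatorname{Im}(A)$; both legitimately yield $\dot x^\star=\Sigma(b)A^T(A\Sigma(b)A^T)^{+}\dot b$.
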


%\noindent{\color{red}\textbf{QUESTION: do we need strong convexity, or strict convexity is enough?}
%See http://math.stackexchange.com/questions/3352/positive-definite-hessians-from-strictly-convex-functions. We need strong convexity as we need $\nabla^2f(x)$ to be invertible at any point $x$, which is not guaranteed by strict convexity alone. For an example $f(x)=x^4$ is strictly convex but not strongly convex, as $\nabla^2 f(x)=12x^2$ is equal to $0$ at $x=0$.}\\

%\begin{remark}[Mean value theorem does not hold for vector functions]
%Note that in general the mean value theorem does not hold for vector valued functions. That is, although \eqref{meanvaluethm} holds for any $i\in V$, it does not hold for all of them simultaneously, that is, in general we can not find $\theta \in (0,1)$ so that
%\begin{align*}
%	x^\star(b)
%	-
%	x^\star(b')
%	=
%	D(b_{\theta})
%	(b-b').
%\end{align*}
%See the example given in the Wikipedia page \url{http://en.wikipedia.org/wiki/Mean_value_theorem} involving sine and cosine. In practice, this is not a problem, as the proof of the mean value theorem is not constructive and by no way one can find the mean value. In applications one only needs mean inequality.
%\end{remark}

\begin{proof}
The Lagrangian of the optimization problem is the function $\mathcal{L}$ from $\mathbb{R}^\mathcal{V}\times\mathbb{R}^\mathcal{F}$ to $\mathbb{R}$ defined as
$$
	\mathcal{L}(x,\nu) := f(x) + \sum_{a\in \mathcal{F}} \nu_a (A^T_a x - b_i),
$$
where $A^T_i$ is the $i$-th row of the matrix $A$ and $\nu = (\nu_a)_{a\in\mathcal{F}}$ is the vector formed by the Lagrangian multipliers.
Let us define the function $\Phi$ from $\mathbb{R}^\mathcal{V}\times\mathbb{R}^\mathcal{F}$ to $\mathbb{R}^\mathcal{V}\times\mathbb{R}^\mathcal{F}$ as
$$
	\Phi
	( 
	x,
	\nu 
	)
	:=
	\left( 
	\begin{array}{c}
	\nabla_x \mathcal{L}(x,\nu) \\
	A x 
	\end{array} \right)
	=
	\left( 
	\begin{array}{c}
	\nabla f(x) + A^T\nu \\
	A x 
	\end{array} \right).
$$
%Let $\varepsilon\in\mathbb{R} \rightarrow b(\varepsilon)\in \operatorname{Im}(A)$ be a differentiable function. 
For any fixed $\varepsilon\in\mathbb{R}$, as the constraints are linear, the Lagrange multiplier theorem says that for the unique minimizer $x^\star(b(\varepsilon))$ there exists $\nu'(b(\varepsilon))\in\mathbb{R}^\mathcal{F}$ so that
\begin{align}
	\Phi
	( 
	x^\star(b(\varepsilon)),
	\nu'(b(\varepsilon))
	)
	=
	\left( 
	\begin{array}{c}
	\mathbb{0} \\
	b(\varepsilon)
	\end{array} \right).
	\label{diff}
\end{align}
As $A^T(\nu+\mu) = A^T\nu$ for each $\mu\in\operatorname{Ker}(A^T)$, the set of Lagrangian multipliers $\nu'(b(\varepsilon))\in\mathbb{R}^\mathcal{F}$ that satisfies \eqref{diff} is a translation of the null space of $A^T$. We denote the unique translation vector by $\nu^\star(b(\varepsilon))\in\operatorname{Im}(A)$.
By Hadamard's global inverse function theorem, as shown in Lemma \ref{lem:Hadamard global inverse theorem} in Appendix \ref{sec:Hadamard global inverse theorem}, the restriction of the function $\Phi$ to $\mathbb{R}^\mathcal{V}\times \operatorname{Im}(A)$ is a $C^1$ diffeomorphism, namely, it is continuously differentiable, bijective, and its inverse is also continuously differentiable.
In particular, this means that the functions $x^\star:b\in\operatorname{Im}(A) \rightarrow x^\star(b)\in\mathbb{R}^\mathcal{V}$ and $\nu^\star:b\in\operatorname{Im}(A)\rightarrow \nu^\star(b)\in\operatorname{Im}(A)$ are continuously differentiable along the subspace $\operatorname{Im}(A)$. Differentiating both sides of \eqref{diff} with respect to $\varepsilon$, we get, by the chain rule,
$$
	\left( \begin{array}{cc}
	H & A^T \\
	A & \mathbb{0} 
	\end{array} \right)
	\left( \begin{array}{c}
	x'\\
	\tilde \nu
	\end{array} \right)
	=
	\left( \begin{array}{c}
	\mathbb{0}\\
	\frac{d b(\varepsilon)}{d \varepsilon}
	\end{array} \right),
$$
where $H:=\nabla^2 f(x^\star (b(\varepsilon)))$, $x' := \frac{d x^\star(b(\varepsilon))}{d \varepsilon}$, $\tilde \nu := \frac{d \nu^\star(b(\varepsilon))}{d \varepsilon}$.
%\textbf{Q. What happens if $H(b)=0$, which is the case of a linear objective function? In particular, when is the matrix $J(b)$ invertible in this case? This is the case if $f$ is linear, which is not strongly convex. See paper ``Close formulas in local sensitivity analysis for some classes of linear and non-linear problems", page 361}.\\
As the function $f$ is strongly convex, the Hessian $\nabla^2 f(x)$ is positive definite for every $x\in\mathbb{R}^\mathcal{V}$, hence it is invertible for every $x\in\mathbb{R}^\mathcal{V}$. Solving the linear system for $x'$ first, from the first equation $Hx'+ A^T\tilde\nu =\mathbb{0}$ we get $x' = - H^{-1} A^T \tilde\nu$. Substituting this expression in the second equation $Ax' = \frac{d b(\varepsilon)}{d \varepsilon}$, we get $L\tilde \nu = - \frac{d b(\varepsilon)}{d \varepsilon}$, where $L:=AH^{-1}A^T$.
%The solution of $LV' = - v$ restricted to $\operatorname{Im}(A)$ is given by $-L^+v$.
%The unique solution of $LV' = - v$ that is restricted to the row space of $L$, namely $\operatorname{Im}(L^T) = \operatorname{Im}(A)$, is given by $V' = -L^+v$.
The set of solutions to $L\tilde \nu = - \frac{d b(\varepsilon)}{d \varepsilon}$ can be expressed in terms of the pseudoinverse of $L$ as follows (see \cite{Barata:2012fk}[Theorem 6.1], for instance):
$$
	\left\{\tilde \nu\in\mathbb{R}^\mathcal{F} : L\tilde \nu = - \frac{d b(\varepsilon)}{d \varepsilon}\right\}
	=
	-L^+\frac{d b(\varepsilon)}{d \varepsilon} + \operatorname{Ker}(L).
$$
We show that $\operatorname{Ker}(L) = \operatorname{Ker}(A^T)$. We show that $L\nu =\mathbb{0}$ implies $A^T\nu =\mathbb{0}$, as the opposite direction trivially holds. In fact, let $A' := A\sqrt{H^{-1}}$, where $\sqrt{H^{-1}}$ if the positive definite matrix that satisfies $\sqrt{H^{-1}} \sqrt{H^{-1}} = H^{-1}$. The condition $L\nu = A' A'^T \nu = \mathbb{0}$ is equivalent to $A'^T \nu\in\operatorname{Ker}(A')$. At the same time, clearly, $A'^T \nu\in\operatorname{Im}(A'^T)$. However, $\operatorname{Ker}(A')$ is orthogonal to $\operatorname{Im}(A'^T)$, so it must be $A'^T\nu =\mathbb{0}$ which implies $A^T\nu =\mathbb{0}$ as $\sqrt{H^{-1}}$ is positive definite. By \cite{Barata:2012fk}[Prop. 3.3] it follows that $\operatorname{Im}(L^+) = \operatorname{Ker}(L)^\perp = \operatorname{Ker}(A^T)^\perp = \operatorname{Im}(A)$, so $\tilde\nu = -L^+\frac{d b(\varepsilon)}{d \varepsilon}$ is the unique solution to $L\tilde \nu = - \frac{d b(\varepsilon)}{d \varepsilon}$ that belongs to $\operatorname{Im}(A)$. 
Substituting this expression into $x' = - H^{-1} A^T \tilde\nu$,
%noticing that $\operatorname{Ker}(L) = \operatorname{Ker}(A^T)$ as $\operatorname{Im}(AH^{-1})^\perp = \operatorname{Ker}(H^{-1}A^T)$ by the fundamental theorem of linear algebra, 
we finally get
$
	x' = H^{-1} A^T L^+ \frac{d b(\varepsilon)}{d \varepsilon}.
$
The proof follows as $\Sigma(b)=H^{-1}$.
\end{proof}

Theorem \ref{thm:comparisontheorem} characterizes the behavior of the optimal point $x^\star(b)$ upon perturbations of the constraint vector $b$ along the subspace $\operatorname{Im}(A)\subseteq \mathbb{R}^\mathcal{F}$. If the matrix $A$ is full row rank, i.e., $\operatorname{Im}(A) = \mathbb{R}^\mathcal{F}$, then the optimal point $x^\star$ is everywhere continuously differentiable, and we can compute its gradient. In this case the statement of Theorem \ref{thm:comparisontheorem} simplifies, as $(A\Sigma(b)A^T)^+=(A\Sigma(b)A^T)^{-1}$. The following corollary makes this precise.

\begin{corollary}[Sensitivity of the optimal point, full rank case]
\label{cor:fullrank}
Consider the setting of Theorem \ref{thm:comparisontheorem}, with the matrix $A\in\mathbb{R}^{\mathcal{F}\times \mathcal{V}}$ having full row rank, i.e., $\operatorname{Im}(A) = \mathbb{R}^\mathcal{F}$. 
Then, the function $b\in \mathbb{R}^\mathcal{F}\rightarrow x^\star(b)\in\mathbb{R}^{\mathcal{V}}$ is continuously differentiable and 
$$
	\frac{d x^\star(b)}{d b}
	= D(b)
	=
	\Sigma(b)A^T \left(A\Sigma(b)A^T\right)^{-1}.
$$
\end{corollary}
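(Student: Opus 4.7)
The plan is to read the corollary as a direct specialization of Theorem \ref{thm:comparisontheorem} to the case $\operatorname{Im}(A)=\mathbb{R}^\mathcal{F}$, and simply check that in this regime two simplifications occur: the domain of differentiability becomes the whole space, and the Moore-Penrose pseudoinverse of $A\Sigma(b)A^T$ coincides with its ordinary inverse.

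First, under the full row rank hypothesis we have $\operatorname{Im}(A)=\mathbb{R}^\mathcal{F}$, so the conclusion of Theorem \ref{thm:comparisontheorem} that $x^\star$ is continuously differentiable along $\operatorname{Im}(A)$ is precisely the statement that $x^\star:\mathbb{R}^\mathcal{F}\rightarrow\mathbb{R}^\mathcal{V}$ is continuously differentiable, so the full Jacobian $\frac{d x^\star(b)}{d b}$ exists. Next, I would verify that $A\Sigma(b)A^T$ is invertible. Since $f$ is strongly convex, $\Sigma(b)=\nabla^2 f(x^\star(b))^{-1}$ is positive definite and admits a positive definite square root $\sqrt{\Sigma(b)}$. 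For any $v\in\mathbb{R}^\mathcal{F}$, $v^T A\Sigma(b)A^T v=\|\sqrt{\Sigma(b)}A^T v\|^2$, so $A\Sigma(b)A^Tv=0$ forces $A^Tv=0$; because $A$ has full row rank, $A^T$ has trivial kernel, hence $v=0$. Thus $A\Sigma(b)A^T$ is nonsingular and $(A\Sigma(b)A^T)^+=(A\Sigma(b)A^T)^{-1}$, so that the matrix $D(b)$ defined in Theorem \ref{thm:comparisontheorem} reduces to $\Sigma(b)A^T(A\Sigma(b)A^T)^{-1}$.

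To identify the Jacobian column by column, I would pick any $b\in\mathbb{R}^\mathcal{F}$ and any standard basis vector $e_a\in\mathbb{R}^\mathcal{F}$, and apply Theorem \ref{thm:comparisontheorem} to the affine path $b(\varepsilon):=b+\varepsilon e_a\in\operatorname{Im}(A)=\mathbb{R}^\mathcal{F}$. The theorem yields $\frac{d x^\star(b(\varepsilon))}{d\varepsilon}\big|_{\varepsilon=0}=D(b)\,e_a$, which is exactly the $a$-th column of $D(b)$. Ranging $a$ over $\mathcal{F}$ and assembling the columns gives $\frac{d x^\star(b)}{d b}=D(b)$, as claimed.

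There is no genuine obstacle here; the only point that requires a short argument is the invertibility of $A\Sigma(b)A^T$ under strong convexity and full row rank, and this is the standard Schur-complement-style calculation sketched above. Everything else is bookkeeping: reinterpreting ``differentiable along $\operatorname{Im}(A)$'' as ``differentiable on $\mathbb{R}^\mathcal{F}$'' and recovering the Jacobian from the one-parameter formula in Theorem \ref{thm:comparisontheorem}.
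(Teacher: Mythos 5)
Your argument is correct and follows the same route as the paper: the core step in both is verifying that $A\Sigma(b)A^T$ is positive definite (hence invertible with pseudoinverse equal to inverse) from full row rank of $A$ and positive definiteness of $\Sigma(b)$, after which the statement specializes Theorem \ref{thm:comparisontheorem} directly. Your column-by-column extraction of the Jacobian via paths $b+\varepsilon e_a$ spells out bookkeeping the paper treats as immediate, but the substance is identical.
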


\begin{proof}
The proof follows immediately from Theorem \ref{thm:comparisontheorem}, once we notice that the matrix $L(b):=A\Sigma(b)A^T$ is positive definite for every $b \in\mathbb{R}^{\mathcal{F}}$, hence invertible, and $L(b)^+=L(b)^{-1}$. To see this, let $\nu\in \mathbb{R}^\mathcal{F}, \nu\neq \mathbb{0}$. Since $A^T$ has full column rank, we have $\rho:=A^T \nu \neq \mathbb{0}$, and as $\nabla^2f(x^\star (b))$ is positive definite by the assumption of strong convexity, also its inverse $\Sigma(b)$ is positive definite and we have
$
	\nu^TL(b)\nu
	=\nu^T A \Sigma(b) A^T\nu = \rho^T \Sigma(b)\rho > 0.
$
\end{proof}

If the matrix $A$ is full row rank, then the quantity $\frac{\partial x^\star(b)_i}{\partial b_a}$ represents a natural notion of the correlation between variable $i\in\mathcal{V}$ and factor $a\in\mathcal{F}$ in the optimization procedure, and the quantity $D(b)_{ia}$ in Corollary \ref{cor:fullrank} characterizes this correlation as a function of the constraint matrix $A$, the objective function $f$, and the optimal solution $x^\star (b)$. Theorem \ref{thm:comparisontheorem} allows us to extend the notion of correlation between variables and factors to the more general case when the matrix $A$ is not full rank. As an example, let $b,p\in \operatorname{Im}(A)$, and assume that $p$ is supported on a subset $F\subseteq \mathcal{F}$, namely, $p_a\neq 0$ if and only if $a\in F$. Define $b(\varepsilon) := b+\varepsilon p$. Then, the quantity $\frac{d x^\star(b(\varepsilon))_i}{d\varepsilon}$ measures how much a perturbation of the constraints in $F$ affects the optimal solution at $i\in \mathcal{V}$, hence it can be interpreted as a measure of the correlation between variable $i$ and the factors in $F$, which is characterized by the quantity $(D(b(\varepsilon)) \frac{d b(\varepsilon)}{d \varepsilon})_{i}=\sum_{a\in F} D(b(\varepsilon))_{ia} p_a$ in Theorem \ref{thm:comparisontheorem}.

\begin{remark}[Previous literature on notions of correlation in optimization]
There is only one paper that we are aware of where notions of correlation among variables in optimization procedures have been considered, which is \cite{MVR10}. In this paper the authors use a notion of correlation similar to the one that we are proposing to prove the convergence of the min-sum message passing algorithm to solve the class of separable \emph{unconstrained} convex optimization problems. Yet, in that work correlations are simply regarded as a tool to prove convergence guarantees for the specific algorithm at hand, and no general theory is built around them.
%to guarantee that the updates of the algorithms are contraction mappings under a weighted maximum norm, and this is can be established by assuming some sort of scaled diagonal dominance, 
On the other hand, the need to address diverse large-scale applications in the optimization and machine learning domains prompts to investigate the \emph{foundations} of notions of correlation in optimization, and to develop a general theory that can inspire a principled use of these concepts for local algorithms. This is one of the main goal of our paper.
\end{remark}

%in the optimization problem.  as we discuss in the next section in the context of the min-cost network flow problem.

In the next section we investigate the notion of correlation just introduced in the context of network optimization, when the constraints naturally reflect a graph structure, and we investigate the behavior of the correlations as a function of the natural distance in the graph.

\section{Foundation of scale-free network optimization: decay of correlation}\label{sec:Optimal Network Flow}

As a paradigm for network optimization, we consider the minimum-cost network flow problem, a cornerstone in the development of the theory of polynomial-times algorithms for optimizations. We refer to \cite{GSW12} for an account of the importance that this problem has had in the field of optimization, and to \cite{Ahuja:MinCostFlow} for book reference.
%The problem formulation as here presented is taken from Chapter 10 in \cite{Boyd}.

%(This formulation of this example is taken from Boyd's book, page 550.) 
Consider a directed graph $\vec{G}:=(V,\vec{E})$, %not bipartite, 
with vertex set $V$ and edge set $\vec{E}$, with no self-edges and no multiple edges. Let $G=(V,E)$ be the undirected graph naturally associated with $\vec{G}$, that is, $\{u,v\}\in E$ if and only if either $(u,v)\in \vec{E}$ or $(v,u)\in \vec{E}$. Without loss of generality, assume that $G$ is connected, otherwise we can treat each connected component on its own. For each $e\in \vec{E}$ let $x_e$ denote the flow on edge $e$, with $x_e>0$ if the flow is in the direction of the edge, $x_e<0$ if the flow is in the direction opposite the edge. For each $v\in V$ let $b_v$ be a given external flow on the vertex $v$: $b_v>0$ represents a source where the flow enters the vertex, whereas $b_v<0$ represents a sink where the flow enters the vertex. Assume that the total of the source flows equals the total of the sink flows, that is, $\mathbb{1}^Tb = \sum_{v\in V} b_v = 0$, where $b=(b_v)_{v\in V}\in\mathbb{R}^V$ is the flow vector. We assume that the flow satisfies a conservation equation so that at each vertex the total flow is zero. This conservation law can be expressed as $A x = b$, where $A\in\mathbb{R}^{V\times \vec{E}}$ is the \emph{vertex-to-edge incidence matrix} defined as
%for each $v\in V$ and $e\in \vec{E}$ as
$$
	A_{ve}
	:=
	\begin{cases}
	1 & \text{if edge } e \text{ leaves node } v,\\
	-1 & \text{if edge } e \text{ enters node } v,\\
	0 & \text{otherwise}.
	\end{cases}
$$

\noindent For each edge $e\in \vec{E}$ let $f_e:\mathbb{R}\rightarrow\mathbb{R}$ be its associated cost function, assumed to be strongly convex and twice continuously differentiable.
%Let $b\in\mathbb{R}^V$ be given, with $\mathbb{1}^Tb=0$. 
The min-cost network flow problem reads
%that we will consider in the rest of this paper is given by
\begin{align*}
\begin{aligned}
	\text{minimize }\quad   & f(x) := \sum_{e\in \vec{E}} f_e(x_e)\\
	\text{subject to }\quad & Ax = b.
\end{aligned}
%	\label{network flow global}
\end{align*}
It can be shown that since $G$ is connected $\operatorname{Im}(A)$ consists of all vectors orthogonal to the vector $\mathbb{1}$, i.e., $\operatorname{Im}(A) = \{ y \in \mathbb{R}^V: \mathbb{1}^T y = 0 \}$. See \cite{citeulike:12634920}, for instance.
%(notice that $AA^T$ can be interpreted as a graph Laplacian, and $\operatorname{Im}(A)=\operatorname{Im}(AA^T)$ (we found online exercise, Berkeley??). Then for a connected graph the results follows from Theorem 2.3 in Laplacian Solvers and Their Algorithmic Applications).
Henceforth, for each $b\in\mathbb{R}^V$ such that $\mathbb{1}^Tb=0$, we let $x^\star(b)$ denote the unique optimal point of the network flow problem.

We first apply the sensitivity theory developed in Section \ref{sec:Local sensitivity for equality-constrained convex minimization} to characterize the correlation between vertices (i.e., factors) and edges (i.e., variables) in the network flow problem. Then, we investigate the behavior of these correlations in terms of the natural distance on the graph $G$.
%with the function
%$$
%	x^\star:b\in \operatorname{Im}(A)\subseteq\mathbb{R}^V\longrightarrow
%	x^\star(b) := {\arg\min}\left\{ f(x) : x\in\mathbb{R}^E,A x = b \right\}
%	\in\mathbb{R}^{E}.
%$$

\subsection{Correlation in terms of graph Laplacians}
In the setting of the min-cost network flow problem, Theorem \ref{thm:comparisontheorem} immediately allows us to characterize the derivatives of the optimal point $x^\star$ along the subspace $\operatorname{Im}(A)$ as a function of graph Laplacians, as we now discuss. 
%First of all, note that the structure $G = (V, E)$ represented by the incidence matrix $A$ can be interpreted as a factor graph with variable set $\mathcal{V} := E$, factor set $\mathcal{F} := V$, and where there is an (undirected) edge between $e\in \mathcal{V}$ and $v\in \mathcal{F}$ if and only if $A_{ve}\neq 0$.
%Henceforth, let us adopt the standard neighborhood notation for factor graphs:
%\begin{align*}
%	\partial e := \{v\in V : A_{ve} \neq 0\},
%	\qquad
%	\partial v := \{e\in E : A_{ve} \neq 0\}.
%\end{align*}
%First, let us introduce a few quantities of interest. 
For $b\in\mathbb{R}^V$ such that $\mathbb{1}^Tb=0$, let $\Sigma(b) := \nabla^2 f(x^\star (b))^{-1}\in\mathbb{R}^{\vec{E}\times \vec{E}}$, which is a diagonal matrix with entries given by, for each $e\in \vec{E}$,
$$
	\sigma(b)_{e} := \Sigma(b)_{ee}:=\left(\frac{\partial^2 f_e(x^\star(b)_e)}{\partial x_e^2}\right)^{-1} > 0.
$$
Each term $\sigma(b)_{e}$ is strictly positive as $f_e$ is strongly convex by assumption.
Let $W(b)\in\mathbb{R}^{V\times V}$ be the symmetric matrix defined as follows, for each $u,v\in V$,
\begin{align*}
	W(b)_{uv}
	:=
	\begin{cases}
	\sigma(b)_{e} & \text{if } e=(u,v) \in \vec{E} \text{ or } e=(v,u) \in \vec{E},\\
	0 & \text{otherwise},
	\end{cases}
%	\label{def:W}
\end{align*}
and let $D(b)\in\mathbb{R}^{V\times V}$ be the diagonal matrix with entries given by, for each $v\in V$,
\begin{align*}
	d(b)_{v} := D(b)_{vv} := \sum_{u\in V} W(b)_{vu}.
	%\sum_{e\in\partial v} \sigma(b)_e
%	\label{def:D}.
\end{align*}
Let $L(b):= D(b)-W(b)$ be the graph Laplacian of the undirected weighted graph $(V,E,W(b))$, where to each edge $e=\{u,v\}\in E$ is associated the weight $W(b)_{uv}$.
%Entry-wise, for each $u,v\in V$,
%\begin{align}
%\begin{aligned}
%	L(b)_{uv}
%%	&= 
%%	\sum_{i\in E'} (A_{G'})_{ai} w_i (A_{G'})_{A'i}\nonumber\\
%	&=
%	\begin{cases}
%	\sum_{w\in V} W(b)_{vw} & \text{if } u=v,\\
%	- W(b)_{uv} & \text{if } u\neq v,\\
%	0 & \text{otherwise}.
%	\end{cases}
%\end{aligned}
%\label{graph laplacian components}
%\end{align}
A direct application of Theorem \ref{thm:comparisontheorem} shows that the derivatives of the optimal point $x^\star$ along the subspace $\operatorname{Im}(A)$ can be expressed in terms of the Moore-Penrose pseudoinverse of $L(b)$.

\begin{lemma}[Sensitivity for min-cost network flow problem]\label{lem:comparisontheoremnetworkflow}
For $b\in\mathbb{R}^V$ such that $\mathbb{1}^Tb=0$, let
$$
	D(b)
	:=
	\Sigma(b)A^T L(b)^{+}.
$$
Then, $x^\star$ is continuously differentiable along the subspace $\operatorname{Im}(A)$, and given a differentiable function $\varepsilon\in\mathbb{R} \rightarrow b(\varepsilon)\in \operatorname{Im}(A)$, we have
$$
	\frac{d x^\star(b(\varepsilon))}{d \varepsilon} 
	= D(b(\varepsilon)) \frac{d b(\varepsilon)}{d \varepsilon}.
$$
\end{lemma}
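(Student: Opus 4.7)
The plan is to observe that Lemma \ref{lem:comparisontheoremnetworkflow} is essentially a direct specialization of Theorem \ref{thm:comparisontheorem} to the min-cost network flow setting, with the sole non-trivial task being to identify the matrix $A\Sigma(b)A^T$ appearing in Theorem \ref{thm:comparisontheorem} with the graph Laplacian $L(b)=D(b)-W(b)$ defined in the statement. All the required hypotheses of Theorem \ref{thm:comparisontheorem} are in place: the cost $f(x)=\sum_{e\in\vec E} f_e(x_e)$ is strongly convex and twice continuously differentiable because each $f_e$ is, and $A$ is the vertex-to-edge incidence matrix, so $b\in\operatorname{Im}(A)$ translates to $\mathbb{1}^Tb=0$ (the statement we are given).

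The core computation I would carry out is the entry-wise identity $A\Sigma(b)A^T=L(b)$. Expanding
\[
(A\Sigma(b)A^T)_{uv}=\sum_{e\in\vec E} A_{ue}\,\sigma(b)_e\, A_{ve},
\]
I would split into the diagonal and off-diagonal cases. For $u=v$, only edges incident to $u$ contribute; each such edge gives $A_{ue}^2=1$, producing $\sum_{e\ni u}\sigma(b)_e=d(b)_u=D(b)_{uu}$. For $u\neq v$, the product $A_{ue}A_{ve}$ is nonzero only for an edge with endpoints $\{u,v\}$, and in either orientation $(u,v)\in\vec E$ or $(v,u)\in\vec E$ the signs of $A_{ue}$ and $A_{ve}$ are opposite, giving a contribution $-\sigma(b)_e=-W(b)_{uv}$. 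Together, this yields $A\Sigma(b)A^T=D(b)-W(b)=L(b)$, exactly as desired. The absence of self-edges and multiple edges in $\vec G$ ensures that there is no ambiguity in this case analysis.

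With the Laplacian identification in hand, the rest is immediate: substituting $L(b)^+=(A\Sigma(b)A^T)^+$ into the formula for $D(b)$ in Theorem \ref{thm:comparisontheorem} reproduces the $D(b)=\Sigma(b)A^T L(b)^+$ of the lemma, and the continuous differentiability of $x^\star$ along $\operatorname{Im}(A)$, together with the chain-rule formula for $\tfrac{d x^\star(b(\varepsilon))}{d\varepsilon}$, is inherited verbatim from Theorem \ref{thm:comparisontheorem}.

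I do not anticipate a real obstacle here, since the argument is a mechanical transcription; if anything, the only item that requires a moment of care is the sign bookkeeping in the off-diagonal case of $(A\Sigma(b)A^T)_{uv}$, where one must verify that both possible orientations of the single edge between $u$ and $v$ lead to the same value $-\sigma(b)_e$, so that the resulting matrix is symmetric and matches $-W(b)_{uv}$ as defined.
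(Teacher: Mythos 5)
Your proposal is correct and follows exactly the paper's route: the paper's own proof is the one-line observation that the lemma follows from Theorem \ref{thm:comparisontheorem} with $\mathcal{V}:=\vec E$, $\mathcal{F}:=V$, and the identification $A\Sigma(b)A^T=L(b)$, which you simply spell out entry-wise. The extra sign bookkeeping you include is a correct (and harmless) elaboration of the step the paper leaves to the reader.
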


\begin{proof}
The proof follows immediately from Theorem \ref{thm:comparisontheorem}, upon choosing variable set $\mathcal{V} := \vec{E}$ and factor set $\mathcal{F} := V$, and noticing that $A\Sigma(b)A^T = L(b)$.
%This can immediately checked by comparing $(A\Sigma(b)A^T)_{uv} = \sum_{e\in E} A_{ue} \Sigma(b)_{ee} A_{ve}$ with \eqref{graph laplacian components}, upon recalling the definition of the matrix $A$.
%$$
%	(A\Sigma(b)A^T)_{uv} = \sum_{e\in E} A_{ue} \Sigma(b)_{ee} A_{ve}
%	= 
%	\begin{cases}
%	\sum_{e\in E: e=(u,w) \text{ or } e=(w,u)} 
%	\sigma(b)_e A_{ue} A_{ve}
%	&\text{if }\partial a \cap \partial A' \neq \varnothing,\\
%	0 &\text{otherwise}.
%	\end{cases}
%$$
\end{proof}

Let $b,p\in\mathbb{R}^V$ such that $\mathbb{1}^Tb=\mathbb{1}^Tp =0$, and assume that $p$ is supported on a subset $Z\subseteq V$, namely, $p_v\neq 0$ if and only if $v\in Z$. Define $b(\varepsilon) := b+\varepsilon p$. Then, as discussed in Section \ref{sec:Local sensitivity for equality-constrained convex minimization}, the quantity $\frac{d x^\star(b(\varepsilon))_e}{d\varepsilon}$ can be interpreted as a measure of the correlation between edge $e\in \vec{E}$ and the vertices in $Z$ in the network flow problem. How does this notion of correlation behave with respect to the graph distance between $e$ and $Z$? We now address this type of questions, and we present upper bounds that decay exponentially fast with rate controlled by the second largest eigenvalue in magnitude of the diffusion random walk naturally defined on $(V,E,W(b))$.

\subsection{Decay of correlation}
Lemma \ref{lem:comparisontheoremnetworkflow} expresses the correlation quantity for the min-cost network flow problem in terms of the Moore-Penrose pseudoinverse of the Laplacian $L(b):= D(b)-W(b)$ for the undirected weighted graph $(V,E,W(b))$. To investigate the behavior of this quantity as a function of the natural distance in the \emph{unweighted} graph $G=(V,E)$, we develop a general connection between the pseudoinverse of the Laplacian and the Green's function of the random walk with transition matrix $P(b):=D(b)^{-1}W(b)$. To the best of our knowledge, this connection --- which we present as standalone in Appendix \ref{sec:Laplacians and random walks} --- has not been previously investigated in the literature. Presently, we only state the main result on the decay of correlation for the min-cost network flow problem.
%. For each $b\in\operatorname{Im}(A)$, let $P(b):=D(b)^{-1}W(b)$ be the transition matrix of the random walk on $(V,E,W(b))$. 
%$d(u,w) := \inf\{ t\ge 0 : P(b)^t_{uw} \neq 0\}$.
%Henceforth, let $X_0,X_1,X_2,\ldots$ be the time-homogeneous Markov chain on $V$ with transition law defined as follow, for $v,w\in V$,
%$$
%	\mathbf{P}^b_w(X_{1}=v) \equiv \mathbf{P}^b(X_{1}=v | X_{0}=w) := P(b)_{wv},
%$$
%where the superscript makes the choice of $b\in\operatorname{Im}(A)$ explicit. 
%Appendix \ref{sec:Laplacians and random walks} develops a general connection between the pseudoinverse of Laplacians can be characterize by the \emph{difference} of the Green's function of the random walk, presenting results that, to the best of our knowledge, are new in the literature. In particular, we show that .

%As the graph is not bipartite by assumption, we have $\lambda_n(b) > -1$. 
Let $n:=|V|$ be the cardinality of $V$, and for each $b\in\operatorname{Im}(A)$ let $-1\le\lambda_n(b) \le \lambda_{n-1}(b) \le \cdots \le \lambda_2(b) < \lambda_1(b) =1$ be the real eigenvalues of $P(b)$.\footnote{This characterization of eigenvalues for random walks on connected weighted graphs follows from the Perron-Frobenius theory. See \cite{lovasz1993random}, for instance.} Define $\lambda(b):=\max\{|\lambda_2(b)|, |\lambda_n(b)|\}$ and $\lambda:=\sup_{b\in\operatorname{Im}(A)} \lambda(b)$.
%Given a vector $h\in\mathbb{R}^E$ and subset $E'\subseteq E$, let $\| h \|_{E'} := \sqrt{\sum_{e\in E'} h_e^2}$ and denote $\| h \|:=\| h \|_E$. Analogously, given a vector $g\in\mathbb{R}^V$ and subset $V'\subseteq V$, let $\| g \|_{V'} := \sqrt{\sum_{v\in V'} g_v^2}$ and denote $\| g \|:=\| g \|_V$. 
For each $v\in V$, let $\mathcal{N}(v):=\{w\in V: \{v,w\}\in E\}$ be the set of node neighbors of $v$ in the graph $G$. Let $d$ be the graph-theoretical distance between vertices in the graph $G$, namely, $d(u,v)$ is the length of the shortest path between vertices $u,v\in V$. Recall the definition of the localized $\ell_2$-norm from Remark \ref{rem:notation}.
The following result shows that the solution of the min-cost network flow problem satisfies a decay of correlation bound in the localized $\ell_2$-norm, with exponential rate given by $\lambda$. The proof is given at the end of Appendix \ref{sec:Laplacians and random walks}.
%that controls the convergence of random walks on graphs.

\begin{theorem}[Decay of correlation in the $\ell_2$-norm]
\label{thm:Decay of correlation}
Let $\varepsilon\in\mathbb{R} \rightarrow b(\varepsilon)\in \operatorname{Im}(A)$ be a differentiable function such that for any $\varepsilon\in\mathbb{R}$ we have $\frac{d b(\varepsilon)_v}{d \varepsilon} \neq 0$ if and only if $v\in Z$. Then, for any $(U,\vec{F})$ subgraph of $\vec{G}=(V,\vec E)$, we have
$$
	\sup_{\varepsilon\in\mathbb{R}} 
	\left\| \frac{d x^\star(b(\varepsilon))}{d \varepsilon} \right\|_{\vec{F}}
	\le c\,
	\frac{\lambda^{d(U,Z)}}{1-\lambda}
	\, \sup_{\varepsilon\in\mathbb{R}} 
	\left\|\frac{d b(\varepsilon)}{d \varepsilon}\right\|_Z,
$$
with $c:= \sup_{b\in\operatorname{Im}(A)}\frac{\max_{v\in U} \sqrt{2 |\mathcal{N}(v) \cap U|}}{\min_{v\in U} d(b)_v} \max_{u,v\in U} W(b)_{uv}$.
\end{theorem}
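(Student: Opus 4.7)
The plan is to combine Lemma~\ref{lem:comparisontheoremnetworkflow}, which gives
\[
\frac{dx^\star(b(\varepsilon))}{d\varepsilon} = \Sigma(b(\varepsilon)) A^T L(b(\varepsilon))^+ p(\varepsilon), \qquad p(\varepsilon) := \frac{db(\varepsilon)}{d\varepsilon},
\]
with the Green's function representation of $L(b)^+$ promised in Appendix~\ref{sec:Laplacians and random walks}. Writing $P(b) := D(b)^{-1} W(b)$ and $\pi(b)_v := d(b)_v/\sum_u d(b)_u$, the weighted graph $(V,E,W(b))$ induces a reversible random walk with stationary distribution $\pi(b)$, and the connection developed in the appendix yields
\[
y(\varepsilon) := L(b(\varepsilon))^+ p(\varepsilon) = \sum_{t=0}^\infty P(b(\varepsilon))^t D(b(\varepsilon))^{-1} p(\varepsilon),
\]
which is valid for any $p$ with $\mathbb{1}^T p = 0$ (both the conservation law on $b$ and the assumption on $Z$-supported perturbations fit here). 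Fix $\varepsilon$ and drop it from the notation.

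The first key observation is locality: the entry $(P(b)^t D(b)^{-1} p)_u$ is a weighted sum over walks of length $t$ from $u$ to the support of $p$, hence vanishes whenever $t < d(U,Z)$ and $u \in U$. The second input is the spectral gap: $P(b)$ is self-adjoint on $L^2(\pi(b))$ with spectral radius on $\mathbb{1}^{\perp_\pi}$ at most $\lambda(b) \le \lambda$. Since $\mathbb{1}^T p = 0$ is equivalent to $\langle D(b)^{-1} p, \mathbb{1}\rangle_{\pi(b)} = 0$, the standard reversible-chain inequality gives $\|P(b)^t D(b)^{-1} p\|_{\pi(b)} \le \lambda^t \|D(b)^{-1} p\|_{\pi(b)}$. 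I would then convert between the $\pi$-weighted norm and the Euclidean norms $\|\cdot\|_U$ and $\|\cdot\|_Z$ via $\min_{v\in U} d(b)_v$ on one side and the entries of $W(b)$ on the other, producing a pointwise bound of the form $\|P(b)^t D(b)^{-1} p\|_U \le \lambda^t\, \kappa(b)\, \|p\|_Z$ for a suitable vertex-level constant $\kappa(b)$. Restricting the Green's function series to $u \in U$, using locality to start the summation at $t = d(U,Z)$, and applying Minkowski's inequality to the geometric tail gives
\[
\|y\|_U \le \kappa(b)\, \frac{\lambda^{d(U,Z)}}{1-\lambda}\, \|p\|_Z.
\]

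To pass from this vertex-level bound to the edge-level statement, note that componentwise $(dx^\star/d\varepsilon)_e = \sigma(b)_e(y_u - y_v)$ for $e = (u,v) \in \vec{F}$, with $\sigma(b)_e = W(b)_{uv}$. Using $(y_u - y_v)^2 \le 2(y_u^2 + y_v^2)$ and charging each squared vertex value to the at most $|\mathcal{N}(v) \cap U|$ incident edges of $\vec{F}$ yields $\|A^T y\|_{\vec{F}}^{\,2} \le 2\, \max_{v \in U} |\mathcal{N}(v) \cap U|\, \|y\|_U^2$. Combining with $\max_{e \in \vec{F}} \sigma(b)_e \le \max_{u,v \in U} W(b)_{uv}$, taking square roots, and finally taking the supremum over $\varepsilon$ and $b \in \operatorname{Im}(A)$ produces the claimed inequality with the stated constant $c$.

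The main obstacle I anticipate is the pointwise conversion in the middle step: the $\pi$-weighted Hilbert-space estimate must be localized to $U$ and to $Z$ carefully enough to extract exactly the factor $1/\min_{v \in U} d(b)_v$ appearing in $c$, rather than a crude hybrid involving both $\min_U d(b)_v$ and $\min_Z d(b)_v$. Once the Green's function identity of Appendix~\ref{sec:Laplacians and random walks} and the reversibility-induced diagonalization of $P(b)$ in $L^2(\pi(b))$ are in hand, the remainder is a routine combination of the walk-locality observation, geometric summation, Minkowski, and handshake counting of edges.
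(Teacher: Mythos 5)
Your route is genuinely different from the paper's and arguably more direct. The paper reaches the differences formula $(e_u-e_v)^T L^+ f = \sum_w\sum_t(P^t_{uw}-P^t_{vw})f_w/d_w$ (Lemma~\ref{lem:lap and rws}) through a detour: it first relates the inverse of a \emph{restricted} Laplacian $\bar L$ (one vertex deleted) to the Green's function of the \emph{killed} random walk and to hitting times (Propositions~\ref{prop:killedrv} and~\ref{prop:reducedlapandgreenfunction}), relates $\bar L^{-1}$ to $L^{+}$ (Proposition~\ref{prop:connectionlaplacians}), and then assembles Lemma~\ref{lem:Laplacians and random walks} from identities for commute times and hitting-time differences. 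You short-circuit all of this by noting that the Neumann series $y:=\sum_{t\ge 0}P^tD^{-1}p$ converges once $\mathbb{1}^Tp=0$ (the divergent $\pi$-part cancels) and furnishes a solution of $Ly=p$; since only edge differences $y_u-y_v$ enter the final bound, that is enough. You lose the side-dividends of the paper's excursion (the hitting-time and effective-resistance identities presented as standalone results in Appendix~\ref{sec:Laplacians and random walks}), but the argument is leaner.

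Two caveats. First, the identity you assert, $L^+p=\sum_t P^tD^{-1}p$, is not exactly true and is not what the appendix provides: the series solves $Ly=p$ but is not the minimum-norm solution, since $\mathbb{1}^Ty=\sum_{k\ge 2}\frac{1}{1-\lambda_k}(\psi_k^T D^{-1/2}\mathbb{1})(\psi_k^T D^{-1/2}p)$ is generically nonzero, so $y=L^+p+c\mathbb{1}$ with $c\neq 0$. This is harmless for you because your proof only ever uses $y_u-y_v=(e_u-e_v)^TL^+p$, which is shift-invariant, but you should either state the identity as ``a solution of $Ly=p$'' or pass immediately to differences as the paper's Lemma~\ref{lem:lap and rws} does. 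Second, the obstacle you flag --- a hybrid $\sqrt{\min_{u\in U}d(b)_u\,\min_{z\in Z}d(b)_z}$ where the stated constant has $\min_{u\in U}d(b)_u$ --- is real, but the paper has it too: Lemma~\ref{lem:laplacian rw spectral bound ok} is obtained from Lemma~\ref{lem:Laplacians and random walks bound} by the substitution $f\mapsto f/d$, which produces $\sqrt{\sum_z f_z^2/d_z}\le \|f\|_Z/\sqrt{\min_{z\in Z}d_z}$, not $\|f\|_Z/\sqrt{\min_{u\in U}d_u}$; the two coincide only if $\min_Z d_z\ge\min_U d_u$, which is not assumed. The discrepancy washes out in Theorem~\ref{thm:error localized}, where everything is coarsened through $k_+$, $k_-$, $\alpha$, $\beta$, so the downstream results are unaffected, but you are right that the constant in Theorem~\ref{thm:Decay of correlation} is not, strictly, what either proof yields. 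The rest of your argument --- locality of $P^t$ to start the series at $t=d(U,Z)$, geometric summation, $(y_u-y_v)^2\le 2(y_u^2+y_v^2)$, handshake counting to pick up $\max_{v\in U}|\mathcal{N}(v)\cap U|$, and the passage through $\Sigma A^T$ --- matches the paper's final assembly at the end of Appendix~\ref{sec:Laplacians and random walks}.
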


Recall that $\| \frac{d x^\star(b(\varepsilon))}{d \varepsilon} \|_{\vec{F}} \equiv \sqrt{\sum_{e\in\vec{F}} (\frac{d x^\star(b(\varepsilon))_e}{d \varepsilon})^2}$ and $\|\frac{d b(\varepsilon)}{d \varepsilon}\|_Z \equiv \sqrt{\sum_{v\in Z} (\frac{d b(\varepsilon)_v}{d \varepsilon})^2}$. Clearly, the bound in Theorem \ref{thm:Decay of correlation} controls the effect that localized perturbations that are supported on a subset of vertices $Z\subseteq V$ have on a subset of edges $\vec{F}\subseteq \vec{E}$, as a function of the distance between $\vec{F}$ and $Z$, i.e., $d(U,Z)$ (we only defined the distance among vertices, not edges). A key property --- which is essential for the results in Section \ref{sec:Scale-free algorithm} --- is that this bound does not depend on the cardinality of $\vec{F}$.

%In particular, choose $V'=(u,v)$, for some $u,v\in V$, $u\neq v$, and let $\vec{E'} = \{(u,v)\}$. Let $b,p\in\mathbb{R}^V$ such that $\mathbb{1}^Tb=\mathbb{1}^Tp =0$, and assume that $p$ is supported on a subset $Z\subseteq V$, namely, $p_v\neq 0$ if and only if $v\in Z$. Define $b(\varepsilon) := b+\varepsilon p$. Then Theorem \ref{thm:Decay of correlation} yields
%$$
%	\sup_{\varepsilon\in\mathbb{R}} 
%	\left|\frac{d x^\star(b(\varepsilon))_e}{d \varepsilon} \right|
%	\le c\,
%	\frac{\lambda^{d(\{u,v\},Z)}}{1-\lambda}
%	\,
%	\left\|p\right\|_Z,
%$$

%On the other hand, the cardinality of the support of $\frac{d b(\varepsilon)}{d\varepsilon}$, i.e., $|Z|$, appears through the term $\sup_{\varepsilon\in\mathbb{R}} \|\frac{d b(\varepsilon)}{d \varepsilon}\|_Z$
%	$\sqrt{\sum_{v\in Z} \left(\frac{d b(\varepsilon)_v}{d \varepsilon}\right)^2}$ (which is clearly upper bounded by $c'\sqrt{|Z|}$ if each term $|\frac{d b(\varepsilon)_v}{d \varepsilon}|$ is bounded by $c'$).

In the next section we investigate the consequences of the decay of correlation property established by Theorem \ref{thm:Decay of correlation} in the theory of local algorithms. We show that this is a fundamental property that can be used to develop scale-free algorithms for large network optimization problems.

%TO BE ADDED IN JOURNAL VERSION
%\begin{remark}[Point-to-point decay of correlation]
%In Theorem \ref{thm:Decay of correlation}, choose $V'=\{u,v\}$ for some $u,v\in V, u\neq v$, and $E'=\{(u,v)\}$ (equivalently, $E'=\{(v,u)\}$), $b(\varepsilon)=b+\varepsilon p$, with $p:=e_w-e_z$, $w,z\in V$, $w\neq z$, and $b\in\operatorname{Im}(A)$. Then $\frac{d x^\star(b(\varepsilon))}{\varepsilon} = \nabla_{p} x^\star(b(\varepsilon))$ and
%$$
%	\left | \nabla_{p} x^\star(b(\varepsilon))_{(u,v)} \right|
%	\le \sqrt{2}c(b(\varepsilon))
%	\frac{\lambda(b(\varepsilon))^{d(\{u,v\},\{w,z\})}}{1-\lambda(b(\varepsilon))},
%$$
%where $c(b)= \frac{\sqrt{2}W(b)_{uv}}{\min\{d(b)_w,d(b)_z\}}$. This bound can be interpreted as a point-to-point decay of correlation bound.
%\end{remark}
%
%
%\begin{remark}[Difference with previous results]
%\label{killed and regular rw}
%TO BE DONE.
%\end{remark}

% !TEX root = REBESCHINI16.tex

\section{Scale-free algorithms}\label{sec:Scale-free algorithm}
%We now discuss an application of the decay of correlation properties previously established for the optimal network problem in the context of local algorithms. 
Let us consider the min-cost network flow problem defined in the previous section, for a certain external flow $b\in \mathbb{R}^V$ such that $\mathbb{1}^Tb =0$. Let $Z\subseteq V$, and choose $p\in\mathbb{R}^V$ such that $\mathbb{1}^Tp =0$ and such that $p$ is supported on $Z$, namely, $p_v\neq 0$ if and only if $v\in Z$. Assume that we perturb the external flow $b$ by adding $p$.
%and define $b(\varepsilon)=b+\varepsilon d$.
%$$
%	b(\varepsilon)_v :=
%	\begin{cases}
%	b_v & \text{if } v\not\in \{w,z\},\\
%	b_w + \varepsilon & \text{if } v = w,\\
%	b_z - \varepsilon & \text{if } v = z,\\
%	\end{cases}
%$$
We want to address the following question: given knowledge of the solution $x^\star(b)$ for the unperturbed problem, what is a \emph{computationally efficient} algorithm to compute the solution $x^\star(b+p)$ of the perturbed problem?
%By efficient we mean a method that satisfies the following properties:
%\begin{itemize}
%\item It provides good statistical error bounds.
%\item It has a low-computational complexity, in the sense that:
%\begin{itemize}
%\item it does not involve ``global" computations over the original graph $G$;
%\item it can directly exploit the sparsity in the problem (the sparsity of the matrix $A$).
%\end{itemize}
%\item It is amenable to distributed implementations.
%\end{itemize}
The basic idea that we aim to exploit is that the decay of correlation property established in Theorem \ref{thm:Decay of correlation} implies
% if the cost functions $f_e$'s are not too different from each others (where the notion of similarity is to be defined), 
that a localized perturbation of the external flow affects more the components of $x^\star(b)$ that are close to the perturbed sites. As a result, only a subset of the components of the solution around the perturbed region $Z$ needs to be updated to meet a prescribed level of error precision, yielding savings on the computational complexity.
%\footnote{Our work outlines two different notions of closeness. The first notion is represented by the natural distance in the graph, and follows from the spectral analysis of killed random walks (Section \ref{sec:Killed random walks}). The second notion has to do with the ``effective resistance" of the graph, and it is linked with the Laplacian of the graph (Section \ref{sec:Reduced Laplacians}).}

To formalize this idea, henceforth let $\vec G'=(V',\vec E')$ be a subgraph of $\vec G=(V,\vec E)$ such that $Z\subseteq V'$. Let $G'=(V',E')$ be the undirected graph associated to $\vec G$ (see Remark \ref{rem:notation}), and assume that $G'$ is connected. Define $V'^C:=V\setminus V'$ and $\vec E'^C:=\vec E\setminus \vec E'$. We now introduce a \emph{local} algorithm to approximately compute $x^\star(b+p)$. This algorithm only updates the components of $x^\star(b)$ --- which is assumed to be known --- on the subset $\vec E'$.

%\footnote{
%The notion of ``closedness" needs to be further investigated, as intuitively it should depend on the Hessian of the function $f$. Presently, we simply consider a ``static" notion of ``closedness", just defined in terms of a \emph{arbitrary} ``ball" $G'=(F',V')$ that contains $A'$, namely, $A' \in F'$.
%}

\subsection{Localized projected gradient descent algorithm}

As a general-purpose algorithm for constrained convex optimization, we consider the canonical projected gradient descent algorithm. The same argument about localization that we are about to present can analogously be developed for other optimization procedures (we refer to \cite{B14} for a recent review of algorithmic procedures in large-scale optimization, and to \cite{BT97} for a book reference). Recall that a single iteration of the projected gradient descent algorithm to compute $x^\star(b)$ is the map $T_b := x\in\mathbb{R}^{\vec E}\rightarrow T_b(x)\in\mathbb{R}^{\vec E}$ defined as
$$
	T_b(x) :=
	{\arg\min} \left\{ 
	\| u - (x - \eta \nabla f(x) ) \|
	:
	u\in\mathbb{R}^{\vec E},A u = b
	\right\},
$$
where $\eta > 0$ is a given step size. Let each function $f_e$ be $\alpha$-strongly convex and $\beta$-smooth, i.e.,
$
	\alpha \le \frac{d^2 f_e(x)}{d x^2} \le \beta,
$
for each $x\in\mathbb{R}$. A classical result yields that the projected gradient descent with step size $\eta=\frac{1}{\beta}$ converges to the optimal solution of the problem, namely, 
$
	\lim_{t\rightarrow\infty} T^t_{b}(x) = x^\star(b)
$
for any starting point $x\in\mathbb{R}^{\vec E}$, where $T^t_b$ defines the $t$-th iteration of the algorithm. In the $\ell_2$-norm, the convergence rate is given by (see \cite{B14}[Theorem 3.6], for instance)
$$
	\| T^t_{b}(x) - x^\star(b) \| \le e^{- t/(2Q)} \| x - x^\star(b) \|,
$$
where $Q=\beta/\alpha$ is the so-called \emph{condition number}.
We naturally define the \emph{localized} projected gradient descent on $\vec G'$ as follows (recall from Remark \ref{rem:notation} the notation for submatrices).

\begin{definition}[Localized projected gradient descent]
Given $x\in\mathbb{R}^{\vec E}$ such that $A_{V'^C,\vec E'^C}x_{\vec E'^C} = b_{V'^C}$, the \emph{localized projected gradient descent on $\vec G'$} with step size $\eta>0$ is defined as
$$
	T'_{b}(x) :=
	\arg\min
	\left\{
	\| u - (x - \eta \nabla f(x) ) \|
	:
	u\in\mathbb{R}^{\vec E}:A u = b, u_{\vec E'^C}=x_{\vec E'^C}
	\right\}.
$$
%$\sum_{e\in \partial v} A_{ve} x_e = b(\varepsilon)_v$ for all $v\in V$ for which $\partial v \cap E' = \varnothing$.
\end{definition}
Only the components of $x$ supported on $\vec E'$ are updated by $T'_{b}$, while the components on $\vec E'^C$ stay fixed, playing the role of boundary conditions: for $e\in \vec E'^C$ we have $T'_b(x)_e = x_e$. For this reason, the map $T'_{b}$ is defined only for the points $x\in\mathbb{R}^{\vec E}$ whose coordinates outside $\vec E'$ are consistent with the constraint equations.
%Using the compact notation, the boundary condition reads $A_{F\setminus \partial W}x_{\partial(F\setminus \partial W)}=b(\varepsilon)_{F\setminus \partial W}$.
%In general, note that for $W\subseteq V$ and for $x\in\mathbb{R}^V$ with satisfiable boundary conditions (i.e., $\sum_{i\in \partial a} A_{ai} x_i = b(\varepsilon)_a$ for each $a\in F$ such that $\partial a \cap W = \varnothing$) we have
%The localized gradient descent solves the following problem
%
%Clearly, for any such $x$ we have
%\begin{align}
%	&\left( \lim_{t\rightarrow\infty} T^t_{\varepsilon|G'}(x) \right)_{E'}
%	= \arg\min
%	\left\{
%	f'(x')
%	:
%	x'\in\mathbb{R}^{E'},A (x'x_{E'^C}) = b(\varepsilon)
%	\right\}
%	\nonumber\\
%	&=\! \arg\min
%	\left\{
%	f'(x')
%	\!:\!
%	x'\in\mathbb{R}^{E'},\sum_{i\in E'} A_{ai} u_i 
%	= b(\varepsilon)_a \!-\! \sum_{i\in E'^C} A_{ai} x_i 
%	\,\, \forall a\in V, \partial a \cap E' \neq \varnothing
%	\right\},
%	\label{general solution of localized projected gradient}
%\end{align}
%which can equivalently be written as (recall Definition \ref{def:Local optimal network flow instance})
%$$
%	\left( \lim_{t\rightarrow\infty} T^t_{\varepsilon|G'}(x) \right)_{E'} 
%	= x^\star(G',b'(\varepsilon,c)),
%$$
%where
%\begin{align*}
%	b'(\varepsilon,\theta)_a
%	:=
%	\begin{cases}
%	b(\varepsilon)_a - \sum_{k\in \partial a \cap  E'^C} 
%	A_{ak} x^\star(G,b(\theta))_{k} &\text{if } a\in \Delta',\\
%	b(\varepsilon)_a &\text{otherwise}.
%	\end{cases}
%\end{align*}
The algorithm that we propose to compute $x^\star(b+p)$ given knowledge of $x^\star(b)$ is easily described: it amounts to running for $t$ times the localized projected gradient descent on $\vec G'$ with ``frozen" boundary conditions $x^\star(b)_{\vec E'^C}$ (and step size $\eta=1/\beta$), namely,
$
	T'^t_{b(\varepsilon)}(x^\star(b)).
$
Clearly, $x^\star(b)$ satisfies the flow conservation constraints on $\vec E'^C$, by definition.
%as for each $v\in V$ such that $\partial v \cap E' = \varnothing$ we have $b_v(\varepsilon)=b_a$ and $\sum_{e\in \partial v} A_{ve} x^\star(b)_e = b_v$.\\

%We present a component-wise characterization of the error committed by the algorithm as a function of the running time, the perturbation strength $\varepsilon$ and direction $d$. In the next section we give error bounds proving the convergence of the algorithm to $x^\star(b(\varepsilon))$ in the case when $p=e_w-e_z$ for $(w,z)\in E'$, and when $G'$ is a subgraph centered at $e$.

\subsection{Error analysis: bias-variance decomposition}
We now provide estimates for the error committed by the localized projected gradient descent as a function of the subgraph $\vec G'$ and the running time $t$. The key ingredient behind our estimates is the decay of correlation property for the min-cost network flow problem established in Theorem \ref{thm:Decay of correlation}.

Let us define the error committed by the localized projected gradient descent algorithm after $t \ge 1$ iterations as the vector in $\mathbb{R}^V$ given by
$$
	\operatorname{Error}(p,\vec G',t)
	:= x^\star(b+p) - T'^t_{b+p}(x^\star(b)).
$$
The analysis that we give is based on the following decomposition, that resembles the bias-variance decomposition in statistical analysis:
$
	\operatorname{Error}(p,\vec G',t)
	= \operatorname{Bias}(p,\vec G')
	+ \operatorname{Variance}(p,\vec G',t),
%	\label{bias-variance trade off}
$
where
\begin{align*}
	\operatorname{Bias}(p,\vec G')
	:=& \ x^\star(b+p) - \lim_{t\rightarrow\infty}
	T'^t_{b+p}(x^\star(b)),\\
	\operatorname{Variance}(p,\vec G',t)
	:=& \
	\lim_{t\rightarrow\infty} T'^t_{b+p}(x^\star(b))
	- T'^t_{b+p}(x^\star(b)).
\end{align*}
The bias term is algorithm-independent --- any algorithm that converges to the optimal solution yields the same bias --- and it characterizes the error that we commit by localizing the optimization procedure per se, as a function of the subgraph $\vec G'$. On the other hand, the variance term depends on the specific choice of the algorithm that we run inside $\vec G'$.

Let define the \emph{inner boundary} of $\vec G'$ as
%(see Figure \ref{fig-G3} below)
\begin{align*}
	\Delta(\vec G') &:=
	\{ v\in V' : \mathcal{N}(v) \cap  V'^C \neq \varnothing \}.
%	\label{inner boundary}
\end{align*}
Let $B\in\mathbb{R}^{V\times V}$ be the \emph{vertex-to-vertex adjacency matrix} of the undirected graph $G=(V,E)$, which is the symmetric matrix defined as $B_{uv}:=1$ if $\{u,v\}\in E$, $B_{uv}:=0$ otherwise.
%$$
%	B_{uv}:=
%	\begin{cases}
%	1 &\text{if } \{u,v\}\in E,\\
%	0 &\text{otherwise}.
%	\end{cases}
%$$
Being real and symmetric, the matrix $B$ has $n:=|V|$ real eigenvalues which we denote by $\mu_{n} \le \mu_{n-1} \le \cdots \le \mu_2 \le \mu_1$. Let $\mu:=\max\{|\mu_2|,|\mu_{n}|\}$ be the second largest eigenvalue in magnitude of $B$.

The next theorem yields bounds for the bias and variance error terms in the $\ell_2$-norm.  The bound for the bias decays exponentially with respect to the graph-theoretical distance (i.e., the distance in the unweighted graph $G$) between the inner boundary of $\vec G'$, i.e., $\Delta(\vec G')$, and the region where the perturbation $p$ is supported, i.e., $Z\subseteq V$. The rate is governed by the eigenvalue $\mu$, the condition number $Q$, and the maximum/minimum degree of the graph.
%Note that the theorem involves only \emph{identities}, not bounds. The required theory for the projected gradient descent algorithm is given in Appendix \ref{app:Projected gradient descent}.
The bound for the variance decays exponentially with respect to the running time, with rate proportional to $1/Q$. The proof of this theorem is given in Appendix \ref{app:error localized}, and the key ingredient is the decay of correlation property for the min-cost network flow problem established in Theorem \ref{thm:Decay of correlation}.

\begin{theorem}[Error localized algorithm]
\label{thm:error localized}
Let $k_-$ and $k_+$ be, respectively, the minimum and maximum degree of $G$. Let $\rho:=\frac{Qk_+}{k_-} -1 + \frac{Q}{k_-} \mu$. If $\rho< 1$, then
\begin{align*}
	\| \operatorname{Bias}(p,\vec G') \|
	&\le 
	\| p \| \, \gamma\,
	\frac{\rho^{d(\Delta(\vec G'),Z)}}{(1-\rho)^2} \, \mathbf{1}_{\vec G'\neq \vec G},
	&\| \operatorname{Variance}(p,\vec G',t) \|
	&\le 
	\| p \|\, c\,
	\frac{e^{- t/(2Q)}}{1-\rho},
\end{align*}
with $\gamma := c
	\left( 1
	+
	c\sqrt{k_+-1}
	\right)$ and $c:=\frac{\sqrt{2k_+}}{k_-} Q$.
The bound for total error committed by the algorithm follows by the triangle inequality for the $\ell_2$-norm, namely,
$$
	\| \operatorname{Error}(p,\vec G',t) \|
	\le \| \operatorname{Bias}(p,\vec G') \|
	+ \| \operatorname{Variance}(p,\vec G',t) \|.
$$
\end{theorem}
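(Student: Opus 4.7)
The plan is to first identify $\tilde x(b+p) := \lim_{t\to\infty} T'^t_{b+p}(x^\star(b))$ as the unique minimizer of the \emph{restricted} min-cost network flow problem on $\vec G'$,
$$
    \min \bigl\{ f(u) : Au = b+p,\ u_{\vec E'^C} = x^\star(b)_{\vec E'^C} \bigr\},
$$
which is itself a strongly convex problem (with effective external flow at $v\in V'$ equal to $(b+p)_v - \sum_{e\in\vec E'^C} A_{ve}\,x^\star(b)_e$) and to which Theorem~\ref{thm:Decay of correlation} therefore applies. With this identification, the bias/variance split in the statement cleanly isolates the algorithm-independent truncation error $x^\star(b+p)-\tilde x(b+p)$ from the PGD convergence error $\tilde x(b+p) - T'^t_{b+p}(x^\star(b))$, which I would treat in turn.

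For the variance, the restricted problem inherits the $\alpha$-strong convexity and $\beta$-smoothness of $f$, so the standard projected gradient descent contraction yields
$$
    \|\operatorname{Variance}(p,\vec G',t)\| \le e^{-t/(2Q)}\,\|\tilde x(b+p) - x^\star(b)\|.
$$
The initial distance is then bounded by observing that $x^\star(b)$ is itself the minimizer of the restricted problem with the same boundary $x^\star(b)_{\vec E'^C}$ but external flow $b$, so $\tilde x(b+p)-x^\star(b)$ records the sensitivity of the restricted optimum to the external-flow perturbation $p$ supported on $Z\subseteq V'$. One application of Theorem~\ref{thm:Decay of correlation} on $\vec G'$ with $U=V'$ (so $d(U,Z)=0$) yields $\|\tilde x(b+p)-x^\star(b)\| \le c\|p\|/(1-\rho)$, once the spectral rate $\lambda$ of the weighted random walk has been translated into $\rho$.

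For the bias, the crucial observation is that $x^\star(b+p)$ is itself the minimizer of the restricted problem with boundary $x^\star(b+p)_{\vec E'^C}$, since imposing equality constraints that a global optimum already satisfies preserves optimality. Hence $\operatorname{Bias}(p,\vec G')$ equals the change in the restricted fixed point when its boundary is moved from $x^\star(b)_{\vec E'^C}$ to $x^\star(b+p)_{\vec E'^C}$, which I would decompose into its $\vec E'^C$- and $\vec E'$-parts. The $\vec E'^C$-part equals $y_{\vec E'^C}:=x^\star(b+p)_{\vec E'^C}-x^\star(b)_{\vec E'^C}$, which by Theorem~\ref{thm:Decay of correlation} applied on the full graph $\vec G$ (taking $U$ to contain the endpoints of the edges in $\vec E'^C$, including the inner boundary $\Delta(\vec G')$, and using that any path from $V'^C$ to $Z\subseteq V'$ must cross $\Delta(\vec G')$, so $d(V'^C,Z)\ge d(\Delta(\vec G'),Z)$) is bounded by $c\|p\|\,\rho^{d(\Delta(\vec G'),Z)}/(1-\rho)$. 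The $\vec E'$-part is the change in the restricted optimum under the effective external-flow perturbation $-A_{V',\vec E'^C}y_{\vec E'^C}$, supported on $\Delta(\vec G')$ and of norm at most $\sqrt{k_+-1}\,\|y_{\vec E'^C}\|$; a second application of Theorem~\ref{thm:Decay of correlation}, now to the restricted problem on $\vec G'$ with perturbation set $\Delta(\vec G')$, bounds it by $c\sqrt{k_+-1}\,\|y_{\vec E'^C}\|/(1-\rho)$. Summing the two pieces, using $1-\rho\le 1$ to absorb a factor, and identifying $\gamma = c(1+c\sqrt{k_+-1})$ yields the stated bias bound; the indicator $\mathbf 1_{\vec G'\neq\vec G}$ is automatic because $\Delta(\vec G')=\varnothing$ when $\vec G'=\vec G$, in which case $y_{\vec E'^C}$ is itself empty.

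The main obstacle is not the decomposition but the passage from the abstract spectral rate $\lambda$ of Theorem~\ref{thm:Decay of correlation}---defined through the weighted random walk $P(b)=D(b)^{-1}W(b)$ whose weights depend on the current optimum---to the concrete, $b$-independent rate $\rho = Qk_+/k_- - 1 + Q\mu/k_-$ involving the adjacency eigenvalue $\mu$ of the unweighted graph and the condition number $Q$. The assumptions $\alpha\le f_e''\le\beta$ yield the entrywise sandwich $W(b)_{uv}\in[1/\beta,1/\alpha]$ on each edge and the diagonal bounds $D(b)_{vv}\in[k_-/\beta,k_+/\alpha]$; combining these through a spectral-comparison argument that dominates $P(b)$ by a suitably normalized adjacency matrix produces the worst-case rate $\rho$ and the prefactor $c=\sqrt{2k_+}\,Q/k_-$. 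Beyond this spectral bookkeeping, one has to verify uniformity of the constants over $b\in\operatorname{Im}(A)$, so that the sensitivity estimates can be integrated along the whole path used to invoke the fundamental theorem of calculus in the bias argument.
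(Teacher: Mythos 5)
Your proposal is correct and follows essentially the same route as the paper: the same identification of $\lim_{t\to\infty}T'^t_{b+p}(x^\star(b))$ with the optimum of the restricted problem on $\vec G'$, the same split of the bias into its $\vec E'^C$- and $\vec E'$-parts with two applications of Theorem \ref{thm:Decay of correlation} (one on $\vec G$ for the boundary displacement, one on $\vec G'$ for its propagation inward, picking up the $\sqrt{k_+-1}$ factor), the same PGD-contraction-plus-sensitivity argument for the variance, and the same final eigenvalue-comparison step (the paper's Proposition \ref{prop:interlacing}) to convert the $b$-dependent rates $\lambda,\lambda'$ into $\rho$. The paper implements your finite boundary change via the explicit parametrization $b'(\varepsilon,\theta)$ and the fundamental theorem of calculus, which is exactly the mechanism you flag as needed.
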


Note that the constants appearing in the bounds in Theorem \ref{thm:error localized} do not depend on the choice of the subgraph $\vec G'$ of $\vec G$, but depend only on $\mu$, $Q$, $k_+$, and $k_-$ (a more refined analysis can yield better constants that do depend on the choice of $\vec G'$, but we do not need them for our purposes). In particular, the same constants apply for the analysis of the \emph{global} algorithm, i.e., the projected gradient descent applied to the entire graph $\vec G$. In this case, the bias term clearly equals $0$, so that the error is equivalent to the variance (hence the indicator function $\mathbf{1}_{\vec G'\neq \vec G}$ in Theorem \ref{thm:error localized}).

Analogously to what happens in the statistical setting, in the next section we show that the bias introduced by the localization procedure can be exploited to lower the computational complexity that is associated to the variance term. This is the key idea that allows us to prove dimension-free computational complexity for the localized projected gradient descent algorithm.

\subsection{Dimension-free computational complexity}
The error estimates established in Theorem \ref{thm:error localized} allow to prove that the localized projected gradient descent is scale-free, in the sense that it is guaranteed to meet a prescribed level of error accuracy $\varepsilon >0$ with a computational complexity that does not depend on the dimension of the network $\vec G$.

To illustrate this fact, let $G=(V,E)$ be a $k$-regular graph such that the second largest eigenvalue in magnitude of its vertex-to-vertex adjacency matrix is bounded away from $k$ as a function of the dimension of $G$: namely, $\mu \le \gamma < k$, where $\gamma$ is a universal constant that does not depend on the size $|V|$, nor on the size $|E|$. This is the same as saying that $G$ comes from a family of $k$-regular expander graphs \citep{Hoory06expandergraphs}. Define $\vec G=(V,\vec E)$ by assigning an arbitrary orientation to the edges of $G$.
%\footnote{We think of the degree of the graph $k$ as being fixed. A more refined analysis can be done that takes into consideration the asymptotic behavior with $k$.}. 
Assume that the following holds: $\rho=Q-1+\frac{Q}{k}\mu < 1$, where recall that $Q=\beta/\alpha$ is the condition number. For the sake of simplicity, we introduce a collection of subgraphs of $\vec G$ that are centered on a given vertex and are parametrized by their radii. Fix a vertex $v\in V$.
Let $V_r:=\{w\in V: d(v,w)\le r\}$ denote the ball of radius $r>0$ around vertex $v\in V$, and let $\vec G_r:=(V_r,\vec E_r)$ be the subgraph of $\vec G$ that has vertex set $V_r$, and induced edge set $\vec E_r$. Let $r_{\text{max}}:=\max\{d(v,w):w\in V\}$. Consider a perturbation vector $p\in\mathbb{R}^V$ that is supported on $Z:=V_z$, for a fixed $z>0$. If we run the localized algorithm on $\vec G_r$, with $r> z$, for $t$ time steps, then Theorem \ref{thm:error localized} yields the following estimate (here $d(\Delta(\vec G_r),Z)=r-z$ and $\mathbf{1}_{\vec G_r\neq \vec G}=\mathbf{1}_{r< r_{\text{max}}}$):
\begin{align*}
	\left\| \operatorname{Error}(p,\vec G_r,t) \right\|
	\le  
	\| p \| \, \nu_{\text{bias}}\, e^{- \xi_{\text{bias}} r} \, \mathbf{1}_{r< r_{\text{max}}}
	+
	\| p \|\, \nu_{\text{var}} \, e^{- \xi_{\text{var}} t},
%	=: a(p,r,t),
%	\label{estimate accuracy}
\end{align*}
with $\nu_{\text{bias}}:=\frac{\gamma}{(1-\rho)^2\rho^z}$, $\xi_{\text{bias}}:=\log\frac{1}{\rho}>0$, $\nu_{\text{var}}:=\frac{c}{1-\rho}$, and $\xi_{\text{var}}:=\frac{1}{2Q}>0$, where $\gamma := c
	( 1
	+
	c\sqrt{k-1})$ and $c:=\sqrt{2} Q/\sqrt{k}$. %Clearly, the function $(p,r,t) \rightarrow a(p,r,t)$ implicitly defined in the bound above is non-increasing in both $r$ and $t$.

Let $\kappa(\vec G_r,t)$ be the computational complexity required to run the localized projected gradient descent algorithm on $\vec G_r$ for $t$ time steps. A rough estimate for the asymptotic behavior of $\kappa(\vec G_r,t)$ is easily derived as follows (more refined estimates can be made, but we do not need them to make our point). If $A_r:=A_{V_r,\vec E_r}$ denotes the vertex-to-edge adjacency matrix associated to $\vec G_r$, and $f_r := \sum_{e\in \vec E_r} f_e$ is the restriction of the cost function $f$ to the edges in $\vec G_r$, it is easy to check that a single iteration of the localized projected gradient descent algorithm on $\vec G_r$ reads
%(with $\eta=\frac{1}{\beta}$)
\begin{align*}
	T^{(r)}_{b+p}(x)_{\vec E_r} 
	&=
	(I\!-\!A_r^T(A_rA_r^T)^{+}A_r)(x_{\vec E_r}\!-\!\eta \nabla f_r(x_{\vec E_r}))
	\!+\! A_r^T(A_rA_r^T)^{+}(b_{V_r} \!+\! p_{V_r} \!-\! A_{V_r,\vec E_r^C} x_{\vec E_r^C} ),\\
	T^{(r)}_{b+p}(x)_{\vec E_r^C} 
	&=
	x_{\vec E_r^C},
\end{align*}
for any $x\in\mathbb{R}^{\vec E}$ such that $A_{V_r^C,\vec E_r^C}x_{\vec E_r^C} = b_{V_r^C}$. The exact computation of the matrix $(A_rA_r^T)^{+}$ has an asymptotic complexity that scales like $O(|V_r|^\omega)$ as a function of $r$, where $\omega > 2$ is the matrix multiplication constant.\footnote{The same rationale behind the argument that we make applies if we consider \emph{approximate} algorithms that are taylor-made to take advantage of the Laplacian structure of the matrix $A_rA_r^T$ and yield much better computational complexity to $\delta$-compute $(A_rA_r^T)^+$, of the order of $\tilde O(|\vec{E_r}|\log |V_r| \log(1/\delta))$, see \cite{KoutisMP11}.} As each matrix-vector multiplication has a cost of $O(|V_r|^2)$, then $\kappa(\vec G_r,t)$ scales like $O(|V_r|^\omega + |V_r|^2 t)$. For the sake of simplicity, consider $O(|V_r|^\omega t)$. To estimate the complexity of the local algorithm, we need to bound the growth of $|V_r|$ as a function of $r$. In a $k$-regular graph, we clearly have $|V_r| \le k^r$ (which is realistic for expander graphs, as they are locally tree-like) so that $\kappa(\vec G_{r},t)$ grows at most as
$
	O(e^{(\omega \log k)r}t).
$

We are now in the position to appreciate the computational savings that the localized algorithm offers over the global algorithm (i.e., the projected gradient descent on $\vec G$). Assume that $\vec G$ is an infinite network with $r_{\text{max}}=\infty$. In this case, the computational complexity of the global algorithm is clearly infinity, as the global algorithm updates the components of the solution at every edge of the entire network. On the other hand, the complexity of the localized projected gradient descent algorithm is finite. This can be seen if we seek, for example, for the minimal radius $r$ and time $t$ such that
$
	\nu_{\text{bias}}\, e^{- \xi_{\text{bias}} r} 
	\le \frac{\varepsilon}{2}
$
and 
$
	\nu_{\text{var}} \, e^{- \xi_{\text{var}} t} 
	\le \frac{\varepsilon}{2}.
$
Clearly, these constraints guarantee that $\| \operatorname{Error}(p,\vec G_r,t) \| \le \varepsilon$, and it is easy to see that both the minimal $t$ and the minimal $r$ that satisfy the above inequalities scale like $O(\log (\|p\|/\varepsilon))$, so that the complexity of the localized algorithm
scales like
$
	O((\|p\|/\varepsilon)^{\omega \log k} \log(\|p\|/\varepsilon)),
$
where the constants involved do not dependent of the dimension of the graph $\vec G$, but depend only on $\mu$, $Q$, and $k$.

The decay of correlation property exhibited by the min-cost network flow problem allowed us to show that the bias introduced by localizing the optimization problem to a subgraph $\vec G_r$ saves us from the computational complexity associated to the variance term, which corresponds to running the gradient descent algorithm on $\vec G_r$ for $t$ time steps. In fact, a finer analysis shows that one can exploit the bias-variance trade-off to optimally tune the algorithm, i.e., to find a radius $r(\varepsilon)$ and a time $t(\varepsilon)$ that minimize the computational complexity $\kappa(\vec G_{r(\varepsilon)},t(\varepsilon))$ which is required to reach the prescribed level of error accuracy $\varepsilon$. These ideas suggest a general framework to study the trade-off between statistical accuracy and computational complexity for local algorithms in optimization.
\acks{We would like to thank Rasmus Kyng and Sushant Sachdeva for useful discussions.}

\bibliography{bib}

\appendix

% !TEX root = REBESCHINI16.tex

\section{Hadamard's global inverse function theorem}
\label{sec:Hadamard global inverse theorem}

Recall that a function from $\mathbb{R}^m$ to $\mathbb{R}^m$ is said to be \emph{$C^k$} if it has continuous derivatives up to order $k$. A function is said to be a \emph{$C^k$ diffeomorphism} if it is $C^k$, bijective, and its inverse is also $C^k$. The following important result characterizes when a $C^k$ function is a $C^k$ diffeomorphism.

\begin{theorem}[Hadamard's global inverse function theorem]
\label{thm:Hadamard global inverse theorem}
Let $\Psi$ be a $C^k$ function from $\mathbb{R}^m$ to $\mathbb{R}^m$. Then, $f$ is a $C^k$ diffeomorphism if and only if the following two conditions hold:
\begin{enumerate}
\item The determinant of the differential of $\Psi$ is different from zero at any point, namely,
$|\frac{d}{dz}\Psi (z)| \neq 0$ for any $z\in\mathbb{R}^m$.
\item The function $\Psi$ is \emph{norm coercive}, namely, for any sequence of points $z_1,z_2,\ldots \in \mathbb{R}^m$ with $\| z_k \|\rightarrow\infty$ it holds $\| \Psi(z_k) \|\rightarrow\infty$ (for any choice of the norm $\| \cdot \|$, as norms are equivalent in finite dimension).
%Namely, $\lim_{\| z \|_2 \rightarrow \infty} \|\Psi(z)\|_2 = \infty$.
\end{enumerate}
\end{theorem}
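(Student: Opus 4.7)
}
The plan is to handle the two directions separately, with the "only if" direction being routine and essentially all of the work concentrated in "if". For necessity, assume $\Psi$ is a $C^k$ diffeomorphism with inverse $\Psi^{-1}$. Differentiating the identity $\Psi^{-1}\circ\Psi = \operatorname{id}$ via the chain rule at any $z$ gives that $\frac{d}{dz}\Psi(z)$ has a two-sided inverse, so its determinant is nonzero, establishing condition~(1). For condition~(2), suppose $\|z_k\|\to\infty$ but $\|\Psi(z_k)\|$ stays bounded along some subsequence. Then that subsequence of $\Psi(z_k)$ lies in a compact set, and by continuity of $\Psi^{-1}$ the preimage $z_k = \Psi^{-1}(\Psi(z_k))$ would also remain bounded along the subsequence, contradicting $\|z_k\|\to\infty$.

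For sufficiency, assume (1) and (2). First, by the \emph{classical} (local) inverse function theorem applied at every point, condition~(1) implies that $\Psi$ is a local $C^k$ diffeomorphism at every $z\in\mathbb{R}^m$; in particular $\Psi$ is an open map and $\Psi(\mathbb{R}^m)$ is open in $\mathbb{R}^m$. Next, I would translate norm coercivity into properness: if $K\subseteq \mathbb{R}^m$ is compact, then $\Psi^{-1}(K)$ is closed by continuity of $\Psi$, and must be bounded, for otherwise one could extract a sequence $z_k\in\Psi^{-1}(K)$ with $\|z_k\|\to\infty$ while $\Psi(z_k)\in K$ stays bounded, contradicting~(2). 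Hence $\Psi$ is a proper map.

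The heart of the argument is to upgrade "proper local homeomorphism" to "bijection". I would proceed in two steps. First, show $\Psi$ is surjective: the image $\Psi(\mathbb{R}^m)$ is open (by the local diffeomorphism property) and closed (by properness: if $y_k=\Psi(z_k)\to y$, then $\{y_k\}\cup\{y\}$ is compact, so $\{z_k\}$ lies in a compact set and admits a convergent subsequence $z_{k_j}\to z$ with $\Psi(z)=y$), and is nonempty, so by connectedness of $\mathbb{R}^m$ it equals $\mathbb{R}^m$. Second, show $\Psi$ is injective by verifying that $\Psi$ is a covering map and invoking simple connectedness: for any $y\in\mathbb{R}^m$, the fiber $\Psi^{-1}(y)$ is a compact, discrete subset (discrete by the local homeomorphism property, compact by properness), hence finite, say of cardinality $N(y)$; a standard argument using evenly covered neighborhoods shows $N(\cdot)$ is locally constant, hence constant on the connected space $\mathbb{R}^m$; since $\mathbb{R}^m$ is simply connected, any connected covering must be trivial, forcing $N\equiv 1$ and $\Psi$ injective. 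Once bijectivity is in hand, the $C^k$ smoothness of $\Psi^{-1}$ is immediate from the local inverse function theorem applied at each point.

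The main obstacle is the covering-map step: one must carefully argue that the proper local homeomorphism $\Psi$ genuinely trivializes over small neighborhoods, so that path-lifting (or equivalently the locally-constant cardinality of the fiber) is available. The technical subtlety is that without properness a local diffeomorphism need not be a covering (e.g.\ $\exp:\mathbb{C}\setminus\{0\}\to\mathbb{C}\setminus\{0\}$ is fine but $\exp$ on a proper subset can fail), so the finiteness of fibers extracted from properness is what makes the evenly-covered neighborhoods exist; this is the only place where both hypotheses interact non-trivially.
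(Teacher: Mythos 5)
Your proposal is correct, but note that the paper does not actually prove this theorem: it is quoted as a known result and the ``proof'' consists of a citation to the literature (Wu--Desoer, Corollary of Lemma 2), after which the paper's real work goes into verifying the two hypotheses for the specific Lagrangian map $\Phi$. What you have written is a genuine self-contained proof, and it is the standard topological one: necessity via the chain rule and continuity of the inverse; sufficiency by combining the local inverse function theorem (openness of the image, discreteness of fibers) with the reformulation of norm coercivity as properness (closedness of the image, compactness and hence finiteness of fibers), then surjectivity by connectedness and injectivity by showing that a proper local homeomorphism is a covering map of the simply connected space $\mathbb{R}^m$, which must be trivial. The one step you flag as delicate --- the existence of evenly covered neighborhoods --- is handled exactly as you suggest: given the finite fiber $\{z_1,\dots,z_N\}$ over $y$ with disjoint neighborhoods $U_i$ mapped diffeomorphically onto neighborhoods of $y$, properness forces $\Psi^{-1}(V)\subseteq\bigcup_i U_i$ for $V$ small (otherwise a sequence of preimages outside $\bigcup_i U_i$ of points converging to $y$ would accumulate, by properness, at a point of the fiber, contradicting openness of $\bigcup_i U_i$), whence the fiber cardinality is locally constant. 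Your parenthetical about $\exp$ is a little garbled as an example, but it plays no role in the argument. In short: the approach is sound and complete at the level of detail one would expect in an appendix; the paper simply outsources this theorem rather than proving it.
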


\begin{proof}
See \cite{WD72}[Corollary of Lemma 2], for instance.
\end{proof}

The following result, which is the backbone behind the proof of Theorem \ref{thm:comparisontheorem}, comes as a corollary to the previous theorem.

\begin{lemma}[Diffeomorphism for Lagrangian multipliers map]
\label{lem:Hadamard global inverse theorem}
Let $f:\mathbb{R}^n\rightarrow \mathbb{R}$ be a strongly convex function, twice continuously differentiable.
Let $A\in\mathbb{R}^{p\times n}$ be a given matrix. Define the function $\Phi$ from $\mathbb{R}^n\times\mathbb{R}^p$ to $\mathbb{R}^n\times\mathbb{R}^p$ as
$$
	\Phi
	(
	x,
	\nu 
	)
	:=
	\left( 
	\begin{array}{c}
	\nabla f(x) + A^T\nu \\
	A x 
	\end{array} \right),
$$
for any $x\in\mathbb{R}^n$, $\nu\in\mathbb{R}^p$.
Then, the restriction of the function $\Phi$ to $\mathbb{R}^n\times \operatorname{Im}(A)$ is a $C^1$ diffeomorphism.
\end{lemma}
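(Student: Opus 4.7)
The plan is to apply Hadamard's global inverse function theorem (Theorem \ref{thm:Hadamard global inverse theorem}) to $\Phi$ restricted to the subspace $\mathbb{R}^n\times \operatorname{Im}(A)$, which, after choosing any orthonormal basis for $\operatorname{Im}(A)$, can be identified with a Euclidean space of dimension $n+r$ where $r:=\operatorname{rank}(A)$. I would first observe that the restriction is well-defined as a self-map: for any $(x,\nu)\in\mathbb{R}^n\times\operatorname{Im}(A)$, the image $\Phi(x,\nu)=(\nabla f(x)+A^T\nu,\,Ax)$ has second coordinate in $\operatorname{Im}(A)$, and $\Phi$ is $C^1$ because $f\in C^2$. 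It then suffices to verify the two hypotheses of Hadamard's theorem for this restriction.

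For the non-vanishing Jacobian condition, I would view the differential as a linear map on $\mathbb{R}^n\times \operatorname{Im}(A)$ sending $(x',\tilde\nu)$ to $(\nabla^2 f(x) x'+A^T\tilde\nu,\,Ax')$. Suppose this image is zero: pairing the first equation with $x'$ and using $Ax'=0$ yields $x'^T\nabla^2 f(x) x'=0$, and strong convexity of $f$ forces $x'=0$. The first equation then reduces to $A^T\tilde\nu=0$, and since $\tilde\nu\in\operatorname{Im}(A)=\operatorname{Ker}(A^T)^\perp$ we must have $\tilde\nu=0$. Hence the differential is injective, and by a dimension count it is bijective with nonzero determinant at every point.

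The main obstacle will be norm coercivity. Take $(x_k,\nu_k)\in\mathbb{R}^n\times\operatorname{Im}(A)$ with $\|(x_k,\nu_k)\|\to\infty$; I need to show $\|\Phi(x_k,\nu_k)\|\to\infty$. I split into two cases. If $\|x_k\|$ remains bounded, then $\|\nu_k\|\to\infty$, and because $A^T$ is injective on $\operatorname{Im}(A)=\operatorname{Ker}(A^T)^\perp$, there is a constant $c>0$ with $\|A^T\nu_k\|\ge c\|\nu_k\|\to\infty$, while $\nabla f(x_k)$ stays bounded, so the first coordinate of $\Phi$ blows up. If instead $\|x_k\|\to\infty$, decompose $x_k=x_k^\parallel+x_k^\perp$ with $x_k^\parallel\in\operatorname{Im}(A^T)$ and $x_k^\perp\in\operatorname{Ker}(A)$. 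If $\|Ax_k\|=\|Ax_k^\parallel\|$ is unbounded we are immediately done; otherwise $\|x_k^\parallel\|$ stays bounded and hence $\|x_k^\perp\|\to\infty$.

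The key idea is to test the first coordinate of $\Phi$ against $x_k^\perp$: since $A^T\nu_k\perp x_k^\perp$, pairing with $x_k^\perp$ eliminates $\nu_k$ and gives
$$
	(\nabla f(x_k)+A^T\nu_k)^T x_k^\perp \,=\, \nabla f(x_k)^T x_k^\perp.
$$
Applying strong convexity of $f$ with parameter $m>0$ along the segment $t\mapsto x_k^\parallel + t x_k^\perp$ yields $\nabla f(x_k)^T x_k^\perp \ge \nabla f(x_k^\parallel)^T x_k^\perp + m\|x_k^\perp\|^2 \ge m\|x_k^\perp\|^2 - C\|x_k^\perp\|$, where $C$ is an upper bound on $\|\nabla f(x_k^\parallel)\|$ available because $\{x_k^\parallel\}$ is bounded and $\nabla f$ is continuous. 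Cauchy--Schwarz then gives $\|\nabla f(x_k)+A^T\nu_k\|\ge m\|x_k^\perp\|-C\to\infty$, proving norm coercivity. Both hypotheses of Hadamard's theorem holding, the restriction is a $C^1$ diffeomorphism, as claimed.
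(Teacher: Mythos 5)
Your proof is correct and follows essentially the same route as the paper's: Hadamard's global inverse function theorem applied to the restriction identified with $\mathbb{R}^{n+r}$, with the Jacobian condition coming from strong convexity and norm coercivity handled by the same case analysis using the orthogonal decomposition of $x$ along $\operatorname{Im}(A^T)\oplus\operatorname{Ker}(A)$. The only cosmetic differences are that you check the Jacobian condition by direct injectivity of the differential rather than via the Schur-complement determinant, and in the coercivity argument you pair the first component with $x_k^\perp$ rather than projecting it onto $\operatorname{Ker}(A)$ --- both routes reduce to the same strong-convexity estimate.
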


\begin{proof}
Let us interpret $\Phi$ as the representation of a transformation $\mathcal{T}$ in the standard basis of $\mathbb{R}^n\times \mathbb{R}^p$. Recall the orthogonal decomposition $\mathbb{R}^p = \operatorname{Im}(A) \oplus \operatorname{Ker}(A^T)$. Let the vectors $u_1,\ldots,u_r\in\mathbb{R}^{p}$ form an orthogonal basis for $\operatorname{Im}(A)$, where $r$ is the rank of $A$, and let the vectors $v_1,\ldots,v_{p-r}\in\mathbb{R}^{p}$ form an orthogonal basis for $\operatorname{Ker}(A^T)$. Define the orthogonal matrix $Z=[u_1,\ldots,u_r,z_1,\ldots,z_{p-r}]$, which represents a change of basis in $\mathbb{R}^p$. As we have
$$
	\Phi
	(
	x,
	\nu 
	)
	=
	\left( 
	\begin{array}{c}
	\nabla f(x) + A^T Z Z^T \nu \\
	Z Z^T A x 
	\end{array} \right),
$$
then the transformation $\mathcal{T}$ is represented in the standard basis for $\mathbb{R}^n$ and in the basis $Z$ for $\mathbb{R}^p$ by the following map $\widetilde\Phi$
$$
	\widetilde\Phi
	(
	x,
	\tilde\nu 
	)
	:=
	\left( 
	\begin{array}{c}
	\nabla f(x) + \widetilde A^T \tilde\nu \\
	\widetilde A x 
	\end{array} \right),
$$
where $\widetilde A := Z^T A$. In fact,
$$
	\widetilde\Phi(x,Z^T\nu) = 
	\left( \begin{array}{cc}
	I & \mathbb{O} \\
	\mathbb{O}^T & Z^T 
	\end{array} \right)
	\Phi(x,\nu),
$$
where $I\in \mathbb{R}^{n\times n}$ is the identity matrix, and $\mathbb{O}\in \mathbb{R}^{n\times p}$ is the all-zero matrix. As
$$
	A^TZ = [A^Tu_1,\ldots,A^Tu_r,A^Tv_1,\ldots,A^Tv_{p-r}]
	= [A^Tu_1,\ldots,A^Tu_r,\mathbb{O}_{n\times (p-r)}],
$$
we have
$$
	\widetilde A = (A^TZ)^T 
	= \left[
	\begin{array}{c}
	B\\
	\mathbb{O}_{(p-r)\times n}
	\end{array} \right],
$$
where $B:=[u_1,\ldots,u_r]^T A\in\mathbb{R}^{r\times n}$. Therefore, the restriction of the transformation $\mathcal{T}$ to the invariant subspace $\mathbb{R}^n\times\operatorname{Im}(A)$ is represented in the standard basis for $\mathbb{R}^n$ and in the basis $\{u_1,\ldots,u_r\}$ for $\operatorname{Im}(A)$ by the following map
$$
	\Psi
	(
	x,
	\xi 
	)
	:=
	\left( 
	\begin{array}{c}
	\nabla f(x) + B^T \xi \\
	B x 
	\end{array} \right).
$$
%Clearly, the subspace $\mathbb{R}^\mathcal{V}\times \operatorname{Im}(A)$ is left invariant by $\Phi$.
%, as $A^T\mu = 0$ for any $\mu\in\operatorname{Ker}(A^T)$, and $\operatorname{Ker}(A^T)\perp\operatorname{Im}(A)$. 
As the function $f$ is twice continuously differentiable, clearly the function $\Psi$ is continuously differentiable, i.e., $C^1$. We now check that the two conditions of Theorem \ref{thm:Hadamard global inverse theorem} are satisfied.
%\begin{enumerate}
%\item The determinant of the Jacobian of $\Phi$ is different from zero at any point, namely,
%$|J (z)| \neq 0$ for any $z\in\mathbb{R}^\mathcal{V}\times\mathbb{R}^\mathcal{F}$.
%\item The function $\Phi$ is \emph{norm coercive}, namely, for any sequence of points $z_1,z_2,\ldots \in \mathbb{R}^\mathcal{V}\times\mathbb{R}^\mathcal{F}$ with $\| z_k \|\rightarrow\infty$ it holds $\| \Phi(z_k) \|\rightarrow\infty$ (for any choice of the norm $\| \cdot \|$, as norms are equivalent in finite dimension).
%%Namely, $\lim_{\| z \|_2 \rightarrow \infty} \|\Phi(z)\|_2 = \infty$.
%\end{enumerate}

The differential of $\Psi$ evaluated at $(x,\xi)\in\mathbb{R}^n\times\mathbb{R}^r$ is given by the Jacobian matrix
$$
	J(x,\xi)
%	\equiv
%	J\left( \begin{array}{c}
%	x\\
%	\xi
%	\end{array}
%	\right)
	:= 
	\left( \begin{array}{cc}
	\nabla^2 f(x) & B^T \\
	B & \mathbb{O} 
	\end{array} \right).
$$
As $f$ is strongly convex, $\nabla^2 f(x)$ is positive definite so invertible. Then, the determinant of the Jacobian can be expressed as $|J(x,\xi)|= |\nabla^2 f(x)||-B\nabla^2 f(x)^{-1}B^T|$. As $B$ has full row rank by definition, $B\nabla^2 f(x)^{-1}B^T$ is positive definite and we clearly have $|J(x,\xi)| \neq 0$.%In fact, for any $y\in \mathbb{R}^\mathcal{F}, y\neq 0$, as $B^T$ has full column rank it holds $w:=B^Ty \neq 0$ and
%$$
%	y^TB\nabla^2 f(x)^{-1}B^Ty = w^T\nabla^2 f(x)^{-1}w > 0,
%$$
%where we used that also $\nabla^2 f(x)^{-1}$ is positive definite.

%(\textbf{Question: Why should this be the case? Are there any general conditions that assure this? Note that if $f(x) := \frac{1}{2} x^TPx + q^Tx +r$, then $\nabla^2 f(x) := P$ and we can directly express $x^\star(v)$ and $p^\star(v)$ as linear functions of $v$, which are clearly differentiable}).

To prove that the function $\Psi$ is norm coercive, let us choose $\| \cdot \|$ to be the Euclidean norm and consider a sequence $(x_1,\xi_1),(x_2,\xi_2),\ldots \in \mathbb{R}^n\times\mathbb{R}^r$ with $\| (x_k,\xi_k) \|\rightarrow\infty$. As for any $x\in\mathbb{R}^{n}, \xi\in\mathbb{R}^{r}$ we have $\| (x,\xi) \|^2 = \|x\|^2 + \|\xi\|^2$,
%As it clearly holds $\|(x_k,\xi_k)\|^2 = \|x_k\|^2 + \|\xi_k\|^2$, 
clearly for the sequence to go to infinity one of the following two cases must happen:
\begin{enumerate}[(a)]
\item $\|x_k\|\rightarrow\infty$;
\item $\|x_k\|\le c$ for some $c< \infty$, $\|\xi_k\|\rightarrow\infty$.
\end{enumerate}
Before we consider these two cases separately, let us note that, for any $x\in\mathbb{R}^{n}, \xi\in\mathbb{R}^{r}$,
\begin{align}
	\| \Psi(x,\xi) \|^2 = \| \nabla f(x) + B^T\xi \|^2 + \| Bx \|^2.
	\label{bound}
\end{align}
Let $\alpha > 0$ be the strong convexity parameter, and recall the following definition of strong convexity, for any $x,y\in\mathbb{R}^{n}$,
\begin{align}
	(\nabla f(x)-\nabla f(y))^T(x-y) \ge \alpha \| x-y \|^2.
	\label{strong convexity}
\end{align}

\begin{enumerate}[(a)]
\item
Assume $\|x_k\|\rightarrow\infty$.
%In this case, as a priori it could be that although both sequences $(x_k)_{k\ge 1}$ and $(\xi_k)_{k\ge 1}$ go to infinity, the sequence $(\nabla f(x_k) + A^T\xi_k)_{k\ge 1}$ stays bounded. As we will now show, this is not the case under the assumptions of the theorem.
%Let $\langle B^T\rangle$ denote the subspace spanned by the linearly independent columns of $B^T$. 
Let $P_{\parallel}$ be the projection operator on $\operatorname{Im}(B^T)$, i.e., $P_{\parallel}:=B^T(BB^T)^{-1}B$, and let $P_{\perp}=I-P_{\parallel}$ be the projection operator on $\operatorname{Ker}(B)$, the orthogonal complement of $\operatorname{Im}(B^T)$. As for any $x\in\mathbb{R}^n$ we have the decomposition $x=P_{\parallel}x + P_{\perp}x$ with $(P_{\parallel}x)^TP_{\perp}x =0$, clearly $\|x\|^2=\|P_\parallel x\|^2+\|P_\perp x\|^2$. So, the condition $\|x_k\|\rightarrow\infty$ holds only if one of the two cases happens:
\begin{enumerate}[(i)]
\item $\|P_\parallel x_k\|\rightarrow\infty$;
\item $\|P_\parallel x_k\|\le c$ for some $c< \infty$, $\|P_\perp x_k\|\rightarrow\infty$.
\end{enumerate}

Consider the case (i) first. Let $x\in\mathbb{R}^n$ so that $P_\parallel x \neq \mathbb{O}$. As $BP_\perp = \mathbb{O}$, from \eqref{bound} we have, for any $\xi\in\mathbb{R}^{r}$,
$$
	\| \Psi(x,\xi) \|^2 \ge \| Bx \|^2 = \| B P_\parallel x \|^2
	%= \frac{(P_\parallel x)^TB^TBP_\parallel x}{\|P_\parallel x\|^2} \|P_\parallel x\|^2
	\ge \min_{y\in\mathbb{R}^{n} : y\in \operatorname{Im}(B^T), y\neq \mathbb{O}} \frac{y^TB^TBy}{\|y\|^2} \|P_\parallel x\|^2
	= \lambda \|P_\parallel x\|^2,
$$
where $\lambda$ is the minimum eigenvalue of $B^TB$ among those corresponding to the eigenvectors spanning the subspace $\operatorname{Im}(B^T)$. Clearly, if $\lambda\neq 0$ (notice $\lambda \ge 0$ by definition) then the above yields that $\| \Psi(x_k,\xi_k) \| \rightarrow \infty$ whenever $\|P_\parallel x_k\|\rightarrow\infty$. To prove this, assume by contradiction that $\lambda=0$. Then, there exists $y\in\mathbb{R}^{n}$ satisfying $y\in \operatorname{Im}(B^T), y\neq \mathbb{O}$, such that $B^TBy=\lambda y= \mathbb{O}$. As $B^T$ has full column rank by assumption, the latter is equivalent to $By=\mathbb{O}$ so that $P_\perp y = y\neq \mathbb{O}$, which contradicts the hypothesis that $y\in \operatorname{Im}(B^T)$.

Consider now the case (ii). Decomposing the gradient on $\operatorname{Im}(B^T)$ and its orthogonal subspace, from \eqref{bound} we have, for any $x\in\mathbb{R}^{n}, \xi\in\mathbb{R}^{r}$,
\begin{align*}
	\| \Psi(x,\xi) \|^2 \ge \| P_\perp\nabla f(x) + P_\parallel\nabla f(x) + B^T\xi \|^2
	= \| P_\perp\nabla f(x) \|^2 + \| P_\parallel\nabla f(x) + B^T\xi \|^2,
\end{align*}
so that $\| \Psi(x,\xi) \| \ge \| P_\perp\nabla f(x) \|$. 
Choosing $y=P_\parallel x$ in \eqref{strong convexity} we have
$$
	(P_\perp\nabla f(x)-P_\perp\nabla f(P_\parallel x))^TP_\perp x=
	(\nabla f(x)-\nabla f(P_\parallel x))^TP_\perp x \ge \alpha \| P_\perp x \|^2,
$$
and applying Cauchy-Schwarz we get, for any $x$ such that $P_\perp x \neq \mathbb{O}$,
$$
	\|P_\perp\nabla f(x)\|
	\ge \alpha \| P_\perp x \| - \| P_\perp\nabla f(P_\parallel x) \|.
$$
By assumption $f$ is twice continuously differentiable, so $\nabla f$ is continuous and it stays bounded on a bounded domain. Hence, we can conclude that $\| \Psi(x_k,\xi_k) \| \rightarrow \infty$ if $\|P_\perp x_k\|\rightarrow\infty$ with $(P_\parallel x_k)_{k\ge 1}$ bounded.

%\item By Cauchy-Schwarz and the triangle inequality, this yields for any $x\neq y$,
%$$
%	\|\nabla f(x)\|
%	\ge \alpha \| x -y \| - \| \nabla f(y) \|.
%$$
%
%By choosing $y=0$, the previous inequality and Cauchy-Schwarz yields, for any $x\neq 0$,
%$$
%	\|\nabla f(x)\| \ge \nabla f(x)^T \frac{x}{\| x \|} \ge \alpha \| x \| + \nabla f(0)^T \frac{x}{\| x \|}
%	\ge \alpha \| x \| - \| \nabla f(0) \|.
%$$
%
%In case (a), we can proceed as follows, for any $x\in\mathbb{R}^{\mathcal{V}}, \xi\in\mathbb{R}^{\mathcal{F}}$,
%$$
%	\| \Phi(x,\xi) \|
%	\ge \| \nabla f(x) + A^T\xi \|
%	\ge \| \nabla f(x) \| - \| A^T\xi \|
%	\ge \alpha \| x \| - \|\nabla f(0)\| - \| A^T\xi \|,
%$$
%which clearly shows that $\| \Phi(x_k,\xi_k) \| \rightarrow \infty$ if $\|x_k\|\rightarrow\infty$ and $(\xi_k)_{k\ge 1}$ is bounded.

\item Assume $\|\xi_k\|\rightarrow\infty$ and $(x_k)_{k\ge 1}$ bounded. Notice that for any $\xi\in\mathbb{R}^{r}$, $\xi \neq \mathbb{O}$, we have
$$
	\| B^T\xi \|^2 = \frac{\xi^TBB^T\xi}{\|\xi\|^2} \|\xi\|^2
	\ge \min_{y\in\mathbb{R}^{r} : y\neq \mathbb{O}} \frac{y^TBB^Ty}{\|y\|^2} \|\xi\|^2
	= \lambda_{\text{min}} \|\xi\|^2,
$$
where $\lambda_{\text{min}}$ is the minimum eigenvalue of $BB^T$, which is strictly positive as $BB^T$ is positive definite by the assumption that $B$ has full row rank. From \eqref{bound} we have
$$
	\| \Psi(x,\xi) \|
	\ge \| \nabla f(x) + B^T\xi \|
	\ge \| B^T\xi \| - \| \nabla f(x) \|
	\ge \sqrt{\lambda_{\text{min}}} \|\xi\| - \| \nabla f(x) \|,
$$
that, by continuity of $\nabla f$, shows that $\| \Psi(x_k,\xi_k) \| \rightarrow \infty$ if $\|\xi_k\|\rightarrow\infty$ and $(x_k)_{k\ge 1}$ is bounded.
\end{enumerate}
\end{proof}

% !TEX root = REBESCHINI16.tex

\section{Laplacians and random walks}\label{sec:Laplacians and random walks}
Let $G=(V,E,W)$ be a simple (i.e., no self-loops, and no multiple edges), connected, undirected, weighted graph, where to each edge $\{v,w\}\in E$ is associated a non-negative weight $W_{vw}=W_{wv}>0$, and $W_{vw}=0$ if $\{v,w\}\not\in E$. Let $D$ be a diagonal matrix with entries $d_v=D_{vv}=\sum_{w\in V} W_{vw}$ for each $v\in V$.
For each vertex $v\in V$,
%$e\in E$, let
%\begin{align*}
%	\partial v := \{e\in E: e=\{v,w\} \text{ for some } w\in V\},\\
%	\partial e := \{v\in V: e=\{v,w\} \text{ for some } w\in V\},
%\end{align*}
let $\mathcal{N}(v):=\{w\in V: \{v,w\}\in E\}$ be the set of node neighbors of $v$.
In this section we establish several connections between the graph Laplacian $L:=D-W$ and the random walk $X:=(X_t)_{t\ge 0}$ with transition matrix $P:=D^{-1}W$.
Henceforth, for each $v\in V$, let $\mathbf{P}_{v}$ be the law of a time homogeneous Markov chain $X_{0},X_{1},X_2,\ldots$ on $V$ with transition matrix $P$ and initial condition $X_0=v$. Analogously, denote by $\mathbf{E}_{v}$ the expectation with respect to this law. The hitting time to the site $v\in V$ is defined as
$
	T_v := \inf\{t\ge0 : X_t = v\}.
$
Let $\pi$ be the unique stationary distribution of the random walk, namely, $\pi^TP = \pi^T$. By substitution it is easy to check that $\pi_v:=\frac{d_v}{\sum_{v\in V} d_v}$ for each $v\in V$. We adopt the notation $e_v\in\mathbb{R}^V$ to denote the vector whose only non-zero component equals $1$ and corresponds to the entry associated to $v\in V$.

\subsection{Restricted Laplacians and killed random walks}
\label{sec:Restricted Laplacians and killed random walks}
The connection between Laplacians and random walks that we present in Section \ref{sec:Pseudoinverse of graph Laplacians and Green's function of random walks} below is established by investigating \emph{restricted} Laplacians and \emph{killed} random walks. Throughout this section, let $\bar z\in V$ be fixed, and define $\bar W$ and $\bar D$ as the matrix obtained by removing the $\bar z$-th row and $\bar z$-th column form $W$ and $D$, respectively. Let $\bar V:=V\setminus \{\bar z\}$ and $\bar E:=E\setminus \{\{u,v\}\in E: u=z \text{ or } v=z\}$. %and assume that the graph $\bar G=(\bar V,\bar E,\bar W)$ is also connected. 
Let $\bar L:=\bar D-\bar W$ be the restricted Laplacian that we obtain by removing the $\bar z$-th row and $\bar z$-th column form $L$. On the other hand, let $\bar P:=\bar D^{-1}\bar W$ be the transition matrix of the transient part of the killed random walk that is obtained from $X$ by adding a cemetery at site $\bar z$. Creating a cemetery at $\bar z$ means modifying the walk $X$ so that $\bar z$ becomes a recurrent state, i.e., once the walk is in state $\bar z$ it will go back to $\bar z$ with probably $1$. This is clearly done by replacing the $\bar z$-th row of $P$ by a row with zeros everywhere but in the $\bar z$-th coordinate, where the entry is equal to $1$. The relation between the transition matrix $\bar P$ of the killed random walk and the law of the random walk $X$ itself is made explicit in the next proposition.

\begin{proposition}\label{prop:killedrv}
For any $v,w\in \bar V$, $t\ge 0$, we have
$
	\bar P^t_{vw}
	=
	\mathbf{P}_v(X_t=w,T_{\bar z}>t).
$
\end{proposition}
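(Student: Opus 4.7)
The plan is to prove the identity by induction on $t \ge 0$, exploiting the Markov property and the fact that $\bar P$ agrees with $P$ on entries indexed by $\bar V \times \bar V$.

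For the base case $t = 0$, I would note that $\bar P^0$ is the identity on $\bar V$, so $\bar P^0_{vw} = \mathbb{1}_{v=w}$. On the probabilistic side, since $v \in \bar V$ we have $X_0 = v \neq \bar z$, which forces $T_{\bar z} \ge 1 > 0$, hence $\mathbf{P}_v(X_0 = w, T_{\bar z} > 0) = \mathbf{P}_v(X_0 = w) = \mathbb{1}_{v=w}$. This matches.

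For the inductive step, assuming the identity holds at time $t$, I would write
$$\bar P^{t+1}_{vw} = \sum_{u \in \bar V} \bar P^t_{vu}\, \bar P_{uw} = \sum_{u \in \bar V} \mathbf{P}_v(X_t = u, T_{\bar z} > t)\, P_{uw},$$
using the inductive hypothesis and the fact that for $u, w \in \bar V$ we have $\bar P_{uw} = P_{uw}$. The key observation is then the set identity $\{T_{\bar z} > t+1\} = \{T_{\bar z} > t\} \cap \{X_{t+1} \neq \bar z\}$, together with the fact that $w \in \bar V$ implies $\{X_{t+1} = w\} \subseteq \{X_{t+1} \neq \bar z\}$. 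Conditioning on $X_t$ and applying the Markov property of the original chain $X$ under $\mathbf{P}_v$, I get
$$\mathbf{P}_v(X_t = u, X_{t+1} = w, T_{\bar z} > t+1) = \mathbf{P}_v(X_t = u, T_{\bar z} > t)\, P_{uw}$$
for every $u \in \bar V$. Summing over $u \in \bar V$ (which is legitimate since $\{T_{\bar z} > t\}$ forces $X_t \in \bar V$) produces $\mathbf{P}_v(X_{t+1} = w, T_{\bar z} > t+1)$, closing the induction.

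There is no real obstacle here beyond bookkeeping: the only subtle point is to recognize that the two ingredients needed for the step to go through are (i) $w \in \bar V$, so the event $\{X_{t+1} = w\}$ automatically enforces the additional restriction $X_{t+1} \neq \bar z$ required to upgrade $\{T_{\bar z} > t\}$ to $\{T_{\bar z} > t+1\}$, and (ii) $u \in \bar V$ in the summation, so that the entries of $\bar P$ and $P$ coincide and the Markov property of $X$ can be invoked without interference from the cemetery modification.
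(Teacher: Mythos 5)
Your proof is correct and follows essentially the same route as the paper's: induction on $t$, with the base case handled by noting $v\in\bar V$ forces $T_{\bar z}>0$, and the inductive step carried by the Markov property together with the decomposition $\{T_{\bar z}>t+1\}=\{T_{\bar z}>t\}\cap\{X_{t+1}\neq\bar z\}$. The only cosmetic difference is the direction of the computation (you expand $\bar P^{t+1}_{vw}$ via the matrix product and match it to the probability, while the paper conditions the probability and recognizes $\bar P_{X_t w}$), which does not change the substance of the argument.
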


\begin{proof}
We prove the statement by induction. Clearly, for any $v,w\in \bar V$, we have
$
	\mathbf{P}_v(X_0=w,T_{\bar z}>0) = \mathbf{P}_v(X_0=w)
	= \mathbf{1}_{v=w} = \bar P^0_{vw},
$
which proves the statement for $t=0$ ($\mathbf{1}_{v=w}$ is the indicator function).
Assume that the statement holds for any time $s\ge 0$ up to $t >0$. By the properties of conditional expectation, noticing that $\{T_{\bar z} > t+1\}=\{X_0\neq \bar z,\ldots,X_{t+1}\neq \bar z\}$, we have
\begin{align*}
	\mathbf{P}_v(X_{t+1}=w,T_{\bar z}>t+1)
	&= \mathbf{E}_v [\mathbf{P}_v(X_{t+1}=w,T_{\bar z}>t+1 | X_0,\ldots,X_t)]\\
	&= \mathbf{E}_v [\mathbf{1}_{\{X_0\neq \bar z,\ldots,X_{t}\neq \bar z\}}\mathbf{P}_v(X_{t+1}=w,X_{t+1}\neq \bar z | X_0,\ldots,X_t)].
\end{align*}
for any $v,w\in \bar V$. By the Markov property, on the event $\{X_{t}\neq \bar z\}$, we have
$$	
	\mathbf{P}_v(X_{t+1}=w,X_{t+1}\neq \bar z | X_0,\ldots,X_t) 
%	= \mathbf{P}_v(X_{n+1}=w,X_{n+1}\neq \bar z | X_n)
	= \mathbf{P}_{X_t}(X_{1}=w,X_{1}\neq \bar z)
%	= \mathbf{P}_{X_n}(X_{1}=w,T_{\bar z}>1)
	= \bar P_{X_tw},
$$
so that by the induction hypothesis we have
\begin{align*}
	\mathbf{P}_v(X_{t+1}=w,T_{\bar z}>t+1)
	&= \mathbf{E}_v [\mathbf{1}_{\{T_{\bar z} > t\}} \bar P_{X_tw}]
	= \sum_{u\in V\setminus\{\bar z\}} 
	\mathbf{P}_v(X_t=u,T_{\bar z}>t) \bar P_{uw}
	= \bar P^{t+1}_{vw},
\end{align*}
which proves the statement for $t+1$.
\end{proof}

The following proposition relates the inverse of the reduced Laplacian $\bar L$ with the Green function of the killed random walk, namely, the function $(u,w)\in V^2 \rightarrow \sum_{t=0}^\infty \bar P^t_{uv}$, with the hitting times of the original random walk $X$.

\begin{proposition}\label{prop:reducedlapandgreenfunction}
For each $v,w\in \bar V$, we have
\begin{align*}
	\bar L^{-1}_{vw} 
	= \frac{1}{d_w} \sum_{t = 0}^\infty \bar P^t_{vw}
	= \bar L^{-1}_{ww}\mathbf{P}_v(T_w<T_{\bar z}),
	\qquad
	\bar L^{-1}_{ww}
	= \frac{1}{d_w} \mathbf{E}_w\left[\sum_{t = 0}^{T_{\bar z}} \mathbf{1}_{X_t=w} \right].
\end{align*}
\end{proposition}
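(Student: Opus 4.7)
My plan is to prove the three claimed identities in sequence. The first identity will come from the Neumann series expansion of the inverse of $I-\bar P$, the second from the strong Markov property applied at the hitting time $T_w$, and the final closed form for $\bar L^{-1}_{ww}$ will follow by specializing $v=w$ in the Green's function representation.

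First I would factor $\bar L = \bar D - \bar W = \bar D(I-\bar P)$, so that $\bar L^{-1} = (I-\bar P)^{-1}\bar D^{-1}$. Because $G$ is connected and $\bar z$ acts as a cemetery, the killed walk on $\bar V$ is transient, so the spectral radius of $\bar P$ is strictly less than $1$ and the Neumann series $(I-\bar P)^{-1} = \sum_{t=0}^\infty \bar P^t$ converges. Reading off entry $(v,w)$ and using $(\bar D^{-1})_{ww}=1/d_w$ gives
$$
\bar L^{-1}_{vw} = \frac{1}{d_w}\sum_{t=0}^\infty \bar P^t_{vw},
$$
which is the first equality. Substituting Proposition \ref{prop:killedrv} then rewrites this as a Green's function for $X$:
$$
\bar L^{-1}_{vw} = \frac{1}{d_w}\sum_{t=0}^\infty \mathbf{P}_v(X_t=w,T_{\bar z}>t) = \frac{1}{d_w}\mathbf{E}_v\!\left[\sum_{t=0}^{T_{\bar z}}\mathbf{1}_{X_t=w}\right],
$$
where on the event $\{T_{\bar z}>t\}$ we have $X_t\neq \bar z$, and at $t=T_{\bar z}$ we have $X_t=\bar z\neq w$, so extending the sum up to $T_{\bar z}$ adds a vanishing term. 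Specializing $v=w$ yields the last identity of the proposition.

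For the middle equality, I would decompose the expectation according to whether $w$ is visited before $\bar z$. On $\{T_w\ge T_{\bar z}\}$, the walk hits $\bar z$ before ever reaching $w$, so $\sum_{t=0}^{T_{\bar z}}\mathbf{1}_{X_t=w}=0$. On $\{T_w<T_{\bar z}\}$, apply the strong Markov property at the stopping time $T_w$: conditional on $\mathcal{F}_{T_w}$, the post-$T_w$ trajectory has the law of $X$ started from $w$, and the remaining occupation time of $w$ before hitting $\bar z$ has expectation $\mathbf{E}_w\bigl[\sum_{t=0}^{T_{\bar z}}\mathbf{1}_{X_t=w}\bigr]$. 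Combining,
$$
\mathbf{E}_v\!\left[\sum_{t=0}^{T_{\bar z}}\mathbf{1}_{X_t=w}\right]
= \mathbf{P}_v(T_w<T_{\bar z})\,\mathbf{E}_w\!\left[\sum_{t=0}^{T_{\bar z}}\mathbf{1}_{X_t=w}\right],
$$
and dividing by $d_w$ gives $\bar L^{-1}_{vw}=\bar L^{-1}_{ww}\,\mathbf{P}_v(T_w<T_{\bar z})$.

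The only delicate point is justifying convergence of the Neumann series, i.e., checking that the killed walk is genuinely transient on $\bar V$; this is where connectedness of $G$ matters, since it guarantees that from every $v\in\bar V$ there is positive probability of hitting the cemetery $\bar z$ in finitely many steps. Once this is in place, the rest is a clean application of the strong Markov property and Proposition \ref{prop:killedrv}, with no further obstacles.
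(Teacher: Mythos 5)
Your proof is correct and takes essentially the same approach as the paper's: the Neumann series $\bar L^{-1}=\sum_{t\ge 0}\bar P^t\bar D^{-1}$, Proposition \ref{prop:killedrv} to express the Green's function as an occupation time up to $T_{\bar z}$, and the strong Markov property to factor out $\mathbf{P}_v(T_w<T_{\bar z})$. The only cosmetic difference is that the paper applies the strong Markov property at $S:=T_w\wedge T_{\bar z}$ and splits on whether $X_S=w$ or $X_S=\bar z$, whereas you apply it at $T_w$ restricted to the event $\{T_w<T_{\bar z}\}$; both are valid.
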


\begin{proof}
Let us first assume that $\bar G$ is connected. The matrix $\bar P$ is sub-stochastic as, clearly, if $v \not\in \mathcal{N}(\bar z)$ then $\sum_{w\in V} \bar P_{vw} = 1$, while if $v \in \mathcal{N}(\bar z)$ then $\sum_{w\in V} \bar P_{vw} < 1$. Then $\bar P$ is irreducible (in the sense of Markov chains, i.e., for each $v,w\in \bar V$ there exists $t$ to that $\bar P^t_{vw}\neq 0$) and the spectral radius of $\bar P$ is strictly less than $1$ (see Corollary 6.2.28 in \cite{Horn:1985:MA:5509}, for instance), so that the Neumann series $\sum_{t=0}^\infty \bar P^t$ converges.
%As $(V',E'\setminus\bar\partial\bar a)$ is connected by assumption (see Definition \ref{def:Connected local structure}), then it is easy to see that $\| P^{|V'|} \|_\infty < 1$, so that $\lim_{t\rightarrow\infty} P^t = 0$ and $\sum_{t=0}^\infty P^t <\infty$. 
%Then, by Lemma \ref{lem:Neumannsparsity} in Appendix \ref{app:Neumann} the row Jacobi Neumann series expansion for $\bar L^{-1}$ reads
The Neumann series expansion for $\bar L^{-1}$ yields
$$
	\bar L^{-1}
	= \sum_{t=0}^\infty (I-\bar D^{-1} \bar L)^t \bar D^{-1}
	= \sum_{t=0}^\infty \bar P^t \bar D^{-1},
$$
or, entry-wise, $\bar L^{-1}_{vw} = \frac{1}{d_w} \sum_{t = 0}^\infty \bar P^t_{vw}$. As $\bar P^t_{vw}=\mathbf{P}_v(X_t=w,T_{\bar z}>t)$ by Proposition \ref{prop:killedrv}, by the Monotone convergence theorem we can take the summation inside the expectation and get
$$
	\sum_{t = 0}^\infty \bar P^t_{vw} 
	= \sum_{t = 0}^\infty \mathbf{E}_v[\mathbf{1}_{X_t=w}\mathbf{1}_{T_{\bar z}>t}]
	= \mathbf{E}_v
	\left[\sum_{t = 0}^{T_{\bar z}-1} \mathbf{1}_{X_t=w}\right]
	= \mathbf{E}_v\left[\sum_{t = 0}^{T_{\bar z}} \mathbf{1}_{X_t=w}\right],
$$
where in the last step we used that $X_{T_{\bar z}}=\bar z$ and $\bar z\neq w$. Recall that if $S$ is a stopping time for the Markov chain $X:=X_0,X_1,X_2,\ldots$, then by the strong Markov property we know that, conditionally on $\{S<\infty\}$ and $\{X_S=w\}$, the chain $X_S,X_{S+1},X_{S+2},\ldots$ has the same law as a time-homogeneous Markov chain $Y:=Y_{0},Y_1,Y_2,\ldots$ with transition matrix $P$ and initial condition $Y_0=w$, and $Y$ is independent of $X_0,\ldots,X_S$. The hitting times $T_w$ and $T_{\bar z}$ are two stopping times for $X$, and so is their minimum $S:=T_w\wedge T_{\bar z}$. As either $X_S=w$ or $X_S=\bar z$, we have
$$
	\mathbf{E}_v\left[\sum_{t = 0}^{T_{\bar z}} \mathbf{1}_{X_t=w}\right]
	= \mathbf{E}_v\left[\sum_{t = 0}^{T_{\bar z}} \mathbf{1}_{X_t=w} \Bigg\vert X_S=w\right]
	\mathbf{P}_v(X_S=w),
$$
where we used that, conditionally on $\{X_S=\bar z\}=\{T_w > T_{\bar z}\}$, clearly $\sum_{t = 0}^{T_{\bar z}} \mathbf{1}_{X_t=w} = 0$. Conditionally on $\{X_S=w\}=\{T_w < T_{\bar z}\}=\{S=T_w\}$, we have $T_{\bar z}=S + \inf\{t\ge 0:X_{S+t}=\bar z\}$, and the strong Markov property yields (note that the event $\{S<\infty\}$ has probability one from any starting point, as the graph $G$ is connected by assumption so that the Markov chain will almost surely eventually hit either $w$ or $\bar z$)
$$
	\mathbf{E}_v\left[\sum_{t = 0}^{T_{\bar z}} \mathbf{1}_{X_t=w} \Bigg\vert X_S=w\right]
	=
	\mathbf{E}_v\left[\sum_{t = 0}^{\inf\{t\ge 0:X_{S+t}=\bar z\}} 
	\mathbf{1}_{X_{S+t}=w} \Bigg\vert X_S=w\right]
	= 
	\mathbf{E}_w\left[\sum_{t = 0}^{T_{\bar z}} \mathbf{1}_{X_t=w} \right].
$$
Putting everything together we have
$
	\bar L^{-1}_{vw}
	= \frac{1}{d_w} \mathbf{E}_w[\sum_{t = 0}^{T_{\bar z}} \mathbf{1}_{X_t=w} ]
	\mathbf{P}_v(T_w<T_{\bar z}).
$
As $\mathbf{P}_w(T_w<T_{\bar z})=1$, clearly
$
	\bar L^{-1}_{ww}
	= \frac{1}{d_w} \mathbf{E}_w[\sum_{t = 0}^{T_{\bar z}} \mathbf{1}_{X_t=w} ]
$
so that
$
	\bar L^{-1}_{vw}
	= \bar L^{-1}_{ww}
	\mathbf{P}_v(T_w<T_{\bar z}).
$
The argument just presented extends easily to the case when $\bar G$ is not connected. In fact, in this case the matrix $\bar P$ has a block structure, where each block corresponds to a connected component and to a sub-stochastic submatrix, so that the argument above can be applied to each block separately.
\end{proof}

The following result relates the inverse of the reduced Laplacian $\bar L$ with the pseudoinverse of the Laplacian $L$, which we denote by $L^+$. It is proved in \cite{FPRS07}[Appendix B].

%While this identity is already known in the literature, see \cite{FPRS07}[eq.\ (17) in Appendix B], where it is proved using the properties of the Moore-Penrose pseudoinverse, we presently give a proof that relies on Lemma \ref{lem:pseudoinverse-inverse} and makes it manifest the fact that $\bar L$ and $L$ are related by a particular change of basis.

\begin{proposition}\label{prop:connectionlaplacians}
For any $v,w\in \bar V$, we have
$
	\bar L^{-1}_{vw}
	= (e_{v}-e_{\bar z})^T 
	L^{+} (e_{w}-e_{\bar z}).
$
\end{proposition}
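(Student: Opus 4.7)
The plan is to recast the identity as a linear system and exploit the image structure of $L$. Since $G$ is connected, $\operatorname{Ker}(L) = \operatorname{span}(\mathbb{1})$ and hence $\operatorname{Im}(L) = \{y : \mathbb{1}^T y = 0\}$. Because $\mathbb{1}^T(e_w - e_{\bar z}) = 0$, the vector $e_w - e_{\bar z}$ lies in $\operatorname{Im}(L)$. Set $x := L^+(e_w - e_{\bar z})$; then $Lx = L L^+(e_w - e_{\bar z}) = e_w - e_{\bar z}$, since $L L^+$ acts as the identity on $\operatorname{Im}(L)$.

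Next I would kill the kernel ambiguity by translating $x$ along $\mathbb{1}$. Let $x' := x - x_{\bar z}\mathbb{1}$. Since $L\mathbb{1} = 0$, we still have $Lx' = e_w - e_{\bar z}$, but now $x'_{\bar z} = 0$. Reading off the $v$-th component of $Lx' = e_w - e_{\bar z}$ for $v \in \bar V$, and using both $x'_{\bar z} = 0$ and the fact that $\bar L_{vu} = L_{vu}$ for $u,v \in \bar V$, the sum over $V$ collapses to a sum over $\bar V$ and yields
$$
\sum_{u \in \bar V} \bar L_{vu}\, x'_u \;=\; \mathbf{1}_{v=w},
$$
that is, $\bar L\, (x'|_{\bar V}) = e_w|_{\bar V}$. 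The reduced Laplacian $\bar L$ of a connected graph is invertible (e.g.\ by the matrix-tree theorem, or by noting that $\operatorname{Ker}(L) = \operatorname{span}(\mathbb{1})$ restricts trivially to $\bar V$), so $x'_v = \bar L^{-1}_{vw}$ for every $v \in \bar V$.

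To conclude, I would simply evaluate the left-hand side of the claimed identity: since adding a multiple of $\mathbb{1}$ does not affect the differencing on the left,
$$
(e_v - e_{\bar z})^T L^+ (e_w - e_{\bar z}) \;=\; x_v - x_{\bar z} \;=\; x'_v \;=\; \bar L^{-1}_{vw},
$$
which is exactly the statement. There is no real obstacle: the key observation is that the pseudoinverse $L^+$ differs from any particular solution of $Lx = e_w - e_{\bar z}$ only by a multiple of $\mathbb{1}$, and the differencing $e_v - e_{\bar z}$ on the left annihilates precisely this ambiguity, so the identity reduces to the classical relationship between a Laplacian solve and its reduced version. The only step that requires care is verifying $e_w - e_{\bar z} \in \operatorname{Im}(L)$ so that $LL^+$ restores the vector; everything else is bookkeeping.
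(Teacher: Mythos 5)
Your proof is correct. Note that the paper does not actually prove this proposition itself --- it defers entirely to \cite{FPRS07}[Appendix B] --- so your argument is a self-contained replacement rather than a variant of an in-paper proof. The route you take is the natural one: verify $e_w - e_{\bar z}\in\operatorname{Im}(L)$ so that $x:=L^+(e_w-e_{\bar z})$ solves $Lx=e_w-e_{\bar z}$, normalize by subtracting $x_{\bar z}\mathbb{1}$ to pin down the kernel ambiguity, observe that the resulting system restricted to $\bar V$ is exactly $\bar L(x'|_{\bar V})=e_w|_{\bar V}$, and then note that the differencing $(e_v-e_{\bar z})^T$ on the left recovers precisely $x'_v$. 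Every step checks out. The only place where your justification is a little terse is the invertibility of $\bar L$: the phrase ``$\operatorname{Ker}(L)=\operatorname{span}(\mathbb{1})$ restricts trivially to $\bar V$'' should really be run through the quadratic form --- if $\bar L y=0$, extend $y$ by zero at $\bar z$ to $\tilde y$ and compute $\tilde y^TL\tilde y=\sum_{v\in\bar V}y_v(\bar Ly)_v=0$, so $\tilde y\in\operatorname{Ker}(L)=\operatorname{span}(\mathbb{1})$, and $\tilde y_{\bar z}=0$ forces $\tilde y=0$. (This also covers the case where deleting $\bar z$ disconnects the graph, which your phrasing glosses over; the matrix-tree theorem for weighted graphs is an equally valid fallback.) With that small repair the argument is complete.
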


Proposition \ref{prop:reducedlapandgreenfunction} and Proposition \ref{prop:connectionlaplacians} allow us to relate the quantity $L^+$ to the \emph{difference} of the Green's function of the random walk, as we discuss next.

\subsection{Pseudoinverse of graph Laplacians and Green's function of random walks}
\label{sec:Pseudoinverse of graph Laplacians and Green's function of random walks}
We now relate the Moore-Penrose pseudoinverse of the Laplacian $L:=D-W$ with the Green's function
% which, for each $v,w\in V$, is defined as
$
	(u,v)\in V^2
	\rightarrow
	\sum_{t=0}^\infty P^t_{uv} =
	\mathbf{E}_u[
	\sum_{t = 0}^\infty 
	\mathbf{1}_{X_t=v}]
$
of the random walk, which represents the expected number of times the Markov chain $X$ visits site $v$ when it starts from site $u$.
Notice that as the graph $G$ is finite and connected, then the Markov chain $X$ is recurrent and the Green's function itself equals infinity for any $u,v\in V$. In fact, the following result involves \emph{differences} of the Green's function, not the Green's function itself. To the best of our knowledge, this connection --- which represents the key result that will allow us to bound functions of $L^{+}$ by spectral properties of $P$ --- has not been previously investigated in the literature.\footnote{Notice that the Green's function associated with the pseudoinverse of (discrete) Laplacians \citep{Chung2000191} differs from the Green's function of random walks that we presently consider in this paper.}

\begin{lemma}
\label{lem:Laplacians and random walks}
For any $u,v,w,z\in V$, we have
\begin{align*}
	(e_{u}-e_{v})^T L^{+} (e_{w}-e_{z})
%	&=
%	(e_{w}-e_{z})^T L^{+} (e_{w}-e_{z})
%	\left\{
%	\mathbf{P}_u(T_w<T_z) -
%	\mathbf{P}_v(T_w<T_z)
%	\right\}\\
%	&=
%	\frac{\mathbf{E}_vT_w- \mathbf{E}_uT_w
%	- (\mathbf{E}_vT_z-\mathbf{E}_uT_z)}
%	{\mathbf{E}_wT_z+\mathbf{E}_zT_w}\\
%	&=
%	\frac{1}{d_w} \sum_{n=0}^\infty (P^n_{uw} - P^n_{vw})
%	-
%	\frac{1}{d_z} \sum_{n=0}^\infty (P^n_{uz} - P^n_{vz}),\\
	&=
	\sum_{t=0}^\infty (e_{u}-e_{v})^T P^{t} 
	\left( \frac{e_{w}}{d_w}-\frac{e_{z}}{d_z} \right),
\end{align*}
and the same formulas hold if we swap the role of $u\leftrightarrow w$ and $v\leftrightarrow z$.
\end{lemma}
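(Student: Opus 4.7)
The plan is to derive the identity from a finite telescoping equation followed by a limit argument, with the factorization $L=D(I-P)$ as the algebraic engine.

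First, I would use the fact that $G$ is connected to get $\ker L=\mathrm{span}(\mathbb{1})$ and the symmetry of $L$ to conclude $\mathrm{Im}(L)=\mathbb{1}^\perp$. Since $e_w-e_z\in\mathbb{1}^\perp$, this yields $LL^+(e_w-e_z)=e_w-e_z$, and combined with $L=D(I-P)$ it gives $(I-P)L^+(e_w-e_z)=D^{-1}(e_w-e_z)=e_w/d_w-e_z/d_z$. Applying the elementary telescoping identity $I-P^k=\sum_{t=0}^{k-1}P^t(I-P)$ to the vector $L^+(e_w-e_z)$ and pairing with $e_u-e_v$, I obtain, for every $k\geq 1$, the exact finite identity
\begin{align*}
\sum_{t=0}^{k-1}(e_u-e_v)^T P^t\bigl(e_w/d_w-e_z/d_z\bigr)
=(e_u-e_v)^T L^+(e_w-e_z)-(e_u-e_v)^T P^k L^+(e_w-e_z).
\end{align*}

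The remaining task is to send $k\to\infty$ and show that the residual vanishes. I would exploit reversibility of $P$ with respect to $\pi_v=d_v/\sum_u d_u$: the similar matrix $\tilde P:=D^{1/2}PD^{-1/2}=D^{-1/2}WD^{-1/2}$ is symmetric, hence orthogonally diagonalizable with real eigenvalues in $[-1,1]$ and top eigenvector proportional to $D^{1/2}\mathbb{1}$. Writing $P^k=D^{-1/2}\tilde P^k D^{1/2}$, the residual becomes $a^T\tilde P^k b$ with $a:=D^{-1/2}(e_u-e_v)$ and $b:=D^{1/2}L^+(e_w-e_z)$. Since $e_u-e_v\in\mathbb{1}^\perp$, the vector $a$ is orthogonal to $D^{1/2}\mathbb{1}$, so the $\lambda_1=1$ coefficient in the spectral expansion of $a^T\tilde P^k b$ drops out, and Cauchy--Schwarz on the remaining terms gives $|a^T\tilde P^k b|\leq \lambda^k\|a\|\|b\|$, which decays geometrically under the standing hypothesis $\lambda<1$.

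The swapped identity follows at once: the symmetry $L^+=(L^+)^T$ equates the two left-hand sides, while reversibility $d_a P^t_{ab}=d_b P^t_{ba}$ gives, term by term, $(e_u-e_v)^T P^t(e_w/d_w-e_z/d_z)=(e_w-e_z)^T P^t(e_u/d_u-e_v/d_v)$. The main obstacle is controlling the residual in the limit step: this is routine in the spectral-gap regime $\lambda<1$ central to Theorem~\ref{thm:Decay of correlation}, but care is needed in degenerate periodic (e.g., bipartite) situations, where an eigenvalue of modulus one other than $\lambda_1$ could otherwise obstruct convergence of the infinite sum on the right-hand side.
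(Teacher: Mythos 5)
Your proof is correct under the standing assumption $\lambda<1$, but it takes a genuinely different route from the paper. The paper's argument is probabilistic and assembled from several external ingredients: it first passes to the \emph{restricted} Laplacian $\bar L$ obtained by deleting the row and column of $z$, identifies $\bar L^{-1}$ with the Green's function of the random walk \emph{killed} at $z$ (whose Neumann series converges because the killed kernel is sub-stochastic and irreducible, with no spectral-gap hypothesis needed), relates $\bar L^{-1}$ to $L^+$ via a cited identity, and then converts differences of hitting probabilities into differences of Green's functions through the potential-theoretic identities of Aldous--Fill (commute times, effective resistance) and a lemma of Friedrich et al. Your argument is instead purely linear-algebraic and self-contained: the observation that $L L^+(e_w-e_z)=e_w-e_z$ on $\operatorname{Ker}(L)^\perp=\mathrm{Im}(L)$, combined with $L=D(I-P)$, gives $(I-P)L^+(e_w-e_z)=e_w/d_w-e_z/d_z$, and the telescoping sum plus the spectral bound $|a^T\tilde P^k b|\le\lambda^k\|a\|\|b\|$ (valid because $D^{-1/2}(e_u-e_v)$ is orthogonal to the top eigenvector $D^{1/2}\mathbb{1}$) finishes the proof. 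What your approach buys is brevity and transparency --- every step is verifiable without outside references, and reversibility gives the swapped identity for free. What it costs is the explicit reliance on $\lambda<1$ to kill the remainder; you are right to flag the bipartite case, where the series on the right-hand side is only Ces\`aro/Abel summable --- but this degeneracy equally afflicts the paper's proof (Friedrich's identity expresses hitting-time differences as the same conditionally convergent series), and the paper assumes $\lambda<1$ in all subsequent lemmas anyway, so you are on equal footing.
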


\begin{proof}
%We have
%$$
%	\mathbf{P}_u(T_w<T_z)
%	= \left\{\mathbf{E}_u[T_z] + \mathbf{E}_z[T_w] - \mathbf{E}_u[T_w]\right\} \pi_{w}
%	\mathbf{P}_{w}(T_z<T^+_w).
%$$
%$$
%	\mathbf{P}_v(T_w<T_z)
%	= \left\{\mathbf{E}_v[T_z] + \mathbf{E}_z[T_w] - \mathbf{E}_v[T_w]\right\} \pi_{w}
%	\mathbf{P}_{w}(T_z<T^+_w).
%$$
Using first Proposition \ref{prop:connectionlaplacians} and then Proposition \ref{prop:reducedlapandgreenfunction} we obtain, for any $u,v,w,z\in V$ (choose $\bar z$ to be $z$ in Section \ref{sec:Restricted Laplacians and killed random walks}),
\begin{align*}
	(e_{u}-e_{v})^T L^{+} (e_{w}-e_{z})
	&= 
	(e_{u}-e_{z})^T L^{+} (e_{w}-e_{z})
	-(e_{v}-e_{z})^T L^{+} (e_{w}-e_{z})
	=\bar L^{-1}_{uw} - \bar L^{-1}_{vw}
	\\
	&= (e_{w}-e_{z})^T L^{+} (e_{w}-e_{z})
	\left\{
	\mathbf{P}_u(T_w<T_z) -
	\mathbf{P}_v(T_w<T_z)
	\right\}.
\end{align*}
%\begin{align*}
%	\mathbf{P}_u(T_w<T_z) -
%	\mathbf{P}_v(T_w<T_z)
%	&=
%	\frac{\mathbf{E}_uT_z - \mathbf{E}_vT_z - (\mathbf{E}_uT_w - \mathbf{E}_vT_w)}
%	{\mathbf{E}_wT_z+\mathbf{E}_zT_w}\\
%	&=
%	\frac{\frac{1}{\pi_z} \sum_{n=0}^\infty (P^n_{vz} - P^n_{uz}) - 
%	\frac{1}{\pi_w} \sum_{n=0}^\infty (P^n_{vw} - P^n_{uw})}
%	{\mathbf{E}_wT_z+\mathbf{E}_zT_w}
%\end{align*}
From (3.27) in the proof of Proposition 3.10 in Chapter 3 in \cite{aldous-fill-2014}, upon identifying $v\rightarrow u,x\rightarrow v,v_0\rightarrow w,a\rightarrow z$, we immediately have the following relation between the difference of potentials and hitting times of the random walk $X$:
\begin{align*}
	\mathbf{P}_u(T_w<T_z) -
	\mathbf{P}_v(T_w<T_z)
	=
	\pi_{w}\mathbf{P}_{w}(T_z<T^+_w)
	\left\{\mathbf{E}_uT_z - \mathbf{E}_vT_z + \mathbf{E}_vT_w- \mathbf{E}_uT_w\right\},
\end{align*}
where $\pi_v:=\frac{d_v}{\sum_{v\in V} d_v}$ is the $v$-th component of the stationary distribution of the random walk $X$, and $
	T^+_v := \inf\{t\ge 1 : X_t = v\}.
$
From Corollary 8 in Chapter 2 in \cite{aldous-fill-2014}, we have
$$
	\pi_{w}\mathbf{P}_{w}(T_z<T^+_w)
	= 
	\begin{cases}
		\frac{1}{\mathbf{E}_wT_z+\mathbf{E}_zT_w} &\text{if } w\neq z,\\
		\pi_w &\text{if } w= z,
	\end{cases}
$$
and we recall the connection between commute times and effective resistance (see, for example, Corollary 3.11 in \cite{aldous-fill-2014}):
$$
	\mathbf{E}_wT_z+\mathbf{E}_zT_w
	= (e_{w}-e_{z})^T L^{+} (e_{w}-e_{z}) \sum_{v\in V} d_v.
$$
Lemma 3.3 in \cite{Friedrich:2010yu} yields
$$
	\mathbf{E}_uT_z - \mathbf{E}_vT_z
	= \frac{1}{\pi_z} \sum_{t=0}^\infty (P^t_{vz} - P^t_{uz}),
	\qquad
	\mathbf{E}_uT_w - \mathbf{E}_vT_w
	= \frac{1}{\pi_w} \sum_{t=0}^\infty (P^t_{vw} - P^t_{uw}),
$$
and the statement of the lemma follows by combining everything together.
\end{proof}

The connection between the Moore-Penrose pseudoinverse of Laplacians and the Green's functions of random walks in Lemma \ref{lem:Laplacians and random walks} is the key result that allows us to derive spectral bounds in terms of the second largest eigenvalue in magnitude of the transition matrix $P$. We now present three lemmas that, albeit generic, are instrumental to the proof of Theorem \ref{thm:Decay of correlation} in Section \ref{sec:Optimal Network Flow}. Henceforth, let $d$ denote the graph-theoretical distance on $G$: that is, $d(u,v)$ denotes the length of the shortest path between vertex $u$ and vertex $v$. Note that $d(u,v) = \inf\{t\ge 0 : P^t_{uv}\neq 0\}$, as we assumed that to each edge $\{v,w\}\in E$ is associated a non-negative weight $W_{vw}=W_{wv}>0$. Let $n:=|V|$, and let $-1\le\lambda_n \le \lambda_{n-1} \le \cdots \le \lambda_2 < \lambda_1 =1$ be the eigenvalues of $P$. Define $\lambda:=\max\{|\lambda_2|, |\lambda_n|\}$. 

\begin{lemma}
\label{lem:lap and rws}
For any $u,v\in V$ and $f=(f_w)_{w\in V}\in\mathbb{R}^V$ so that $\mathbb{1}^Tf=0$ we have
\begin{align*}
	(e_{u}-e_{v})^T L^{+} f
%	&=
%	\sum_{n=0}^\infty (e_{u}-e_{v})^T P^n D^{-1} p
	&=
	\sum_{w\in V} \sum_{t=0}^\infty (P^t_{uw} - P^t_{vw}) \frac{f_w}{d_w}.
\end{align*}
\end{lemma}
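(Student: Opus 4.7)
The plan is to reduce Lemma \ref{lem:lap and rws} to Lemma \ref{lem:Laplacians and random walks} by exploiting the zero-sum constraint $\mathbb{1}^T f = 0$ to rewrite $f$ as a superposition of difference vectors of the form $e_w - e_z$.

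First, I would fix an arbitrary reference vertex $z \in V$ and use the identity $f = \sum_{w \in V} f_w (e_w - e_z)$, which is valid precisely because $\sum_{w \in V} f_w = 0$ (so the $e_z$ pieces add up to $(\sum_w f_w) e_z = \mathbb{0}$). Substituting this decomposition into the bilinear expression $(e_u - e_v)^T L^+ f$ and using linearity gives
$$
(e_u - e_v)^T L^+ f = \sum_{w \in V} f_w \, (e_u - e_v)^T L^+ (e_w - e_z).
$$

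Next, I would invoke Lemma \ref{lem:Laplacians and random walks} on each summand to replace the pseudoinverse by the corresponding Green's-function difference, yielding
$$
(e_u - e_v)^T L^+ f = \sum_{w \in V} f_w \sum_{t=0}^\infty \left[ \frac{P^t_{uw} - P^t_{vw}}{d_w} - \frac{P^t_{uz} - P^t_{vz}}{d_z} \right].
$$
The individual Green's-function-difference series $\sum_{t=0}^\infty (P^t_{uw} - P^t_{vw})$ converges (this is exactly the fact underlying the hitting-time identity of \cite{Friedrich:2010yu} used in the proof of Lemma \ref{lem:Laplacians and random walks}), so rearranging the finite sum over $w$ with the infinite sum over $t$ is unproblematic.

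Finally, I would split the right-hand side into two pieces. The first piece is exactly the target expression $\sum_{w \in V} \sum_{t=0}^\infty (P^t_{uw} - P^t_{vw}) f_w/d_w$. The second piece factors as
$$
-\sum_{t=0}^\infty \frac{P^t_{uz} - P^t_{vz}}{d_z} \sum_{w \in V} f_w,
$$
which vanishes identically by the assumption $\mathbb{1}^T f = 0$. In particular, the answer is independent of the arbitrary choice of reference vertex $z$, as it must be. No step is genuinely hard here; the only subtlety is the bookkeeping around convergence of the Green's-function differences, which is already handled upstream in Lemma \ref{lem:Laplacians and random walks}.
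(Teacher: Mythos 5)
Your proof is correct and follows essentially the same route as the paper: both reduce the claim to Lemma \ref{lem:Laplacians and random walks} and use $\mathbb{1}^Tf=0$ to kill the reference-vertex term. The only cosmetic difference is that you fix a single reference vertex $z$ and decompose $f=\sum_w f_w(e_w-e_z)$, whereas the paper averages the identity over all $z\in V$ (using $L^+\mathbb{1}=\mathbb{0}$) before pairing with $f$; the two bookkeeping choices are interchangeable.
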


\begin{proof}
From Lemma \ref{lem:Laplacians and random walks}, by summing the quantity $(e_{u}-e_{v})^T L^{+} (e_{w}-e_{z})$ over $z\in V$, recalling that $\sum_{z\in V} e_z = \mathbb{1}$ and $L^+\mathbb{1} = 0$ we have
\begin{align*}
	(e_{u}-e_{v})^T L^{+} e_{w}
	&=
	\sum_{t=0}^\infty (P^t_{uw} - P^t_{vw}) \frac{1}{d_w}
	- \frac{1}{|V|} \sum_{z\in V} \sum_{t=0}^\infty (P^t_{uz} - P^t_{vz}) \frac{1}{d_z}.
\end{align*}
The identity in the statement of the Lemma follows easily as $f=\sum_{w\in V} f_w e_w$ and $\sum_{w\in V} f_w=0$ by assumption. 
\end{proof}

\begin{lemma}
\label{lem:Laplacians and random walks bound}
For any $U, Z\subseteq V$ and any $(f_z)_{z\in Z}\in\mathbb{R}^Z$ we have
\begin{align*}
	\sqrt{
	\frac{1}{2} \sum_{u,v\in U :\{u,v\}\in E}
	\left ( \sum_{z\in Z} \sum_{t=0}^\infty (P^t_{uz} - P^t_{vz}) f_z \right)^2
	}
	\le \alpha
	\frac{\lambda^{d(U,Z)}}{1-\lambda}
	\sqrt{\sum_{z\in Z} f_z^2 d_z},
\end{align*}
with $\alpha:= \frac{\max_{u\in U} \sqrt{2 |\mathcal{N}(u) \cap U|}}{\min_{u\in U} \sqrt{d_u}}$.
\end{lemma}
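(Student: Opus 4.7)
The plan is to reduce the bound to the spectral gap inequality for the $\pi$-reversible chain $P$, collecting the factor $\lambda^{d(U,Z)}$ from the trivial observation that $P^t$ cannot connect $U$ to $Z$ before time $d(U,Z)$. To this end, extend $(f_z)_{z\in Z}$ to $g\in\mathbb{R}^V$ via $g_v:=f_v\mathbf{1}_{v\in Z}$, so that $\sum_{z\in Z}(P^t_{uz}-P^t_{vz})f_z=(P^tg)(u)-(P^tg)(v)$. Set $r:=d(U,Z)$. Using $P^t_{uz}=0$ for $t<d(u,z)$ together with $d(u,z)\ge r$ for all $u\in U,z\in Z$, we get $(P^tg)(u)=0$ for every $u\in U$ and $t<r$, so the inner sum is effectively $\sum_{t\ge r}[(P^tg)(u)-(P^tg)(v)]$.

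Let $E_U:=\{\{u,v\}\in E:u,v\in U\}$, so the left-hand side of the lemma equals $\bigl(\sum_{\{u,v\}\in E_U}(\sum_{t\ge r}[(P^tg)(u)-(P^tg)(v)])^2\bigr)^{1/2}$. Applying the triangle inequality for the edge-norm $\|\,\cdot\,\|_{E_U}^2:=\sum_{\{u,v\}\in E_U}(\cdot)^2$ and then the elementary bound $(a-b)^2\le 2(a^2+b^2)$ yields
\begin{align*}
\sqrt{\sum_{\{u,v\}\in E_U}\!\Bigl(\sum_{t\ge r}[(P^tg)(u)-(P^tg)(v)]\Bigr)^{\!2}}\le \sum_{t\ge r}\sqrt{2\sum_{u\in U}|\mathcal{N}(u)\cap U|\,(P^tg)(u)^2}.
\end{align*}
Pulling the maximum out and passing to the $\pi$-weighted $\ell^2$-norm $\|h\|_\pi^2:=\sum_v\pi_v h_v^2$ via $\sum_{u\in U}h_u^2\le (\min_{u\in U}\pi_u)^{-1}\|h\|_\pi^2$, the right-hand side is bounded by
\begin{align*}
\frac{\max_{u\in U}\sqrt{2|\mathcal{N}(u)\cap U|}}{\min_{u\in U}\sqrt{\pi_u}}\sum_{t\ge r}\|P^tg\|_\pi.
\end{align*}

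The matrix $P=D^{-1}W$ is reversible with respect to $\pi_v=d_v/\sum_{w\in V}d_w$, hence self-adjoint on $L^2(\pi)$; its top eigenvalue is $1$ with eigenvector $\mathbb{1}$, and on the orthogonal complement of $\mathbb{1}$ the operator norm equals $\lambda=\max\{|\lambda_2|,|\lambda_n|\}$. Replacing $g$ by $\tilde g:=g-(\pi^Tg)\mathbb{1}$, which leaves the differences $(P^tg)(u)-(P^tg)(v)$ unchanged, gives $\|P^t\tilde g\|_\pi\le\lambda^t\|\tilde g\|_\pi\le\lambda^t\|g\|_\pi$ for all $t\ge 0$. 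Summing the geometric series $\sum_{t\ge r}\lambda^t=\lambda^r/(1-\lambda)=\lambda^{d(U,Z)}/(1-\lambda)$, and simplifying via $\|g\|_\pi^2=\sum_{z\in Z}f_z^2\pi_z$ together with the identity $(\min_{u\in U}\sqrt{\pi_u})^{-1}=\sqrt{\sum_w d_w}/\min_{u\in U}\sqrt{d_u}$, the normalization $\sum_w d_w$ cancels and the bound reduces to the claimed one with constant $\alpha$. The most delicate point is the spectral step: one must first recenter $g$ so that $\pi^T\tilde g=0$ before invoking reversibility to extract the rate $\lambda^t$, since the bound $\|P^th\|_\pi\le\lambda^t\|h\|_\pi$ holds only on the orthogonal complement of $\mathbb{1}$.
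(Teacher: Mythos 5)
Your proof is correct and follows essentially the same route as the paper's: both arguments truncate the time sum at $t=d(U,Z)$ using $P^t_{uz}=0$ for $t<d(u,z)$, use the identical edge-to-vertex counting that produces the factor $\max_{u\in U}\sqrt{2|\mathcal{N}(u)\cap U|}$, and extract the rate $\lambda^{d(U,Z)}/(1-\lambda)$ from the spectrum of the reversible chain (the paper via the symmetrization $D^{-1/2}WD^{-1/2}$, you via self-adjointness on $L^2(\pi)$). Your packaging of the spectral step --- Minkowski over $t$ followed by the operator-norm contraction $\|P^t\tilde g\|_\pi\le\lambda^t\|\tilde g\|_\pi$ on $\mathbb{1}^\perp$, with the correctly flagged recentering $g\mapsto\tilde g$ playing the role of the cancellation of the stationary term $\pi_z$ in the paper's expansion of $P^t_{uz}-P^t_{vz}$ --- is cleaner than the paper's explicit eigenvector expansion with its cross-term bookkeeping, and yields the same constant $\alpha$.
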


\begin{proof}
Consider the matrix $\Gamma := D^{1/2} P D^{-1/2} = D^{-1/2} W D^{-1/2}$. This matrix is clearly similar to $P$ and symmetric. Let denote by $\psi_n,\ldots,\psi_1$ the orthonormal eigenvectors of $\Gamma$ corresponding, respectively, to the eigenvalues $\lambda_n \le \lambda_{n-1} \le \cdots \le \lambda_2 \le \lambda_1$. By substitution, it is easy to check that $\sqrt{\pi}\equiv(\sqrt{\pi_v})_{v\in V}$ is an eigenvector of $\Gamma$ with eigenvalue equal to $1$, where we recall that $\pi_v=d_v/\sum_{v\in V}d_v$. Since this eigenvector has positive entries, it follows by the Perron-Frobenius theory that $-1 \le \lambda_n \le \lambda_{n-1} \le \cdots \le \lambda_2 < \lambda_1 = 1$ and that $\psi_1=\sqrt{\pi}$. As $\Gamma$ admits the spectral form $\Gamma=\sum_{k=1}^n \lambda_k \psi_k\psi_k^T$, by the orthonormality of the eigenvectors we have, for $t\ge 0$, $u,z\in V$,
$$
	P^t_{uz} = (D^{-1/2}\Gamma^t D^{1/2})_{uz}
	= \sum_{k=1}^n \lambda_k^t (D^{-1/2} \psi_k\psi_k^T D^{1/2})_{uz}
	= \pi_z + \sum_{k=2}^n \lambda^t_k \psi_{ku} \psi_{kz} \sqrt{\frac{d_z}{d_u}},
$$
where $\psi_{ku}\equiv (\psi_k)_u$ is the $u$-th component of $\psi_k$.
As $P^t_{uz}=0$ whenever $d(u,z)>t$, we have $P^t_{uz}-P^t_{vz} = \mathbf{1}_{d(U,Z) \le t} (P^t_{uz}-P^t_{vz})$ for any $u,v\in U, z\in Z$. Hence, for any $u,v\in U$, let
\begin{align*}
	g_{uv} := \sum_{z\in Z} \sum_{t=0}^\infty (P^t_{uz} - P^t_{vz}) f_z
	&= 
	\sum_{k=2}^n
	\left(
	\frac{\psi_{ku}}{\sqrt{d_u}}
	-
	\frac{\psi_{kv}}{\sqrt{d_v}}
	\right)
	\sum_{z\in Z} 
	\psi_{kz} \sqrt{d_z} f_z
	\sum_{t=d(U,Z)}^\infty \lambda^t_k.
\end{align*}
As $\lambda <1$ by assumption, the geometric series converges for any $k\neq 1$. If we define the quantity
$h_{u}:=
	\sum_{k=2}^n
	\frac{\lambda_k^{d(U,Z)}}{1-\lambda_k}
	\frac{\psi_{ku}}{\sqrt{d_u}}
	\sum_{z\in Z} \psi_{kz} \sqrt{d_z} f_z
$	
for each $u\in V$, we have $g_{uv} = h_u - h_v$, and 
the triangle inequality for the $\ell_2$-norm yields
\begin{align*}
	\sqrt{\sum_{u,v\in U :\{u,v\}\in E} g_{uv}^2}
	&\le 
	2 \sqrt{\sum_{u,v\in U :\{u,v\}\in E} 
	h_{u}^2
	}
	\le
	2 \sqrt{\max_{u\in U} |\mathcal{N}(u) \cap U|} \sqrt{\sum_{u\in U} h_u^2},
\end{align*}
where the factor $2$ comes by the symmetry between $u$ and $v$. Expanding the squares and using that $|\lambda_k| \le \lambda$ for each $k\neq 1$, we get
\begin{align*}
	d_u h_{u}^2
	\le&\
	\frac{\lambda^{2d(U,Z)}}{(1-\lambda)^2}
	\sum_{k=1}^n
	\psi^2_{ku}
	\left(\sum_{z\in Z} \psi^2_{kz} d_z f^2_z
	+ \sum_{z,z'\in Z: z\neq z'} \psi_{kz}\psi_{kz'} \sqrt{d_zd_{z'}} f_zf_{z'} \right)\\
	&\ + \sum_{k,k'\in\{2,\ldots,n\}:k\neq k'}
	\frac{\lambda_k^{d(U,Z)}}{1-\lambda_k}
	\frac{\lambda_{k'}^{d(U,Z)}}{1-\lambda_{k'}}
	\psi_{ku}\psi_{k'u}
	\sum_{z,z'\in Z} \psi_{kz} \psi_{k'z'} \sqrt{d_zd_{z'}} f_z f_{z'},
\end{align*}
where we also used that $\sum_{k=2}^n x_k \le \sum_{k=1}^n x_k$ if $x_1,\ldots,x_n$ are non-negative numbers.
Let $\Psi$ denote the matrix having the eigenvectors $\psi_1,\ldots,\psi_n$ in its columns, namely, $\Psi_{uk}=(\psi_{k})_u=\psi_{ku}$. This is an orthonormal matrix, so both its columns and rows are orthonormal, namely, $\sum_{u=1}^n \psi_{ku} \psi_{k'u} = \mathbf{1}_{k=k'}$ and $\sum_{k=1}^n \psi_{ku} \psi_{kv} = \mathbf{1}_{u=v}$. Using this facts, it is easy to check that
$$
	\sum_{u\in U} h^2_u \le 
	\frac{1}{\min_{u\in U} d_u} \sum_{u\in V} d_u h^2_u
	\le 
	\frac{1}{\min_{u\in U} d_u} \frac{\lambda^{2d(U,Z)}}{(1-\lambda)^2}
	\sum_{z\in Z} d_z f_z^2,
$$
and the proof follows easily by putting all the pieces together, realizing that the quantity that is upper-bounded in the statement of the lemma corresponds to $\frac{1}{\sqrt{2}}(\sum_{u,v\in U :\{u,v\}\in E} g_{uv}^2)^{1/2}$.
\end{proof}

\begin{lemma}
\label{lem:laplacian rw spectral bound ok}
For any $U, Z\subseteq V$ and any $(f_z)_{z\in Z}\in\mathbb{R}^Z$ such that $\sum_{z\in Z} f_z = 0$, we have
\begin{align*}
	\sqrt{
	\frac{1}{2} \sum_{u,v\in U :\{u,v\}\in E}
	( (e_{u}-e_{v})^T L^{+} f )^2
	}
	\le \gamma
	\frac{\lambda^{d(U,Z)}}{1-\lambda}
	\sqrt{\sum_{z\in Z} f_z^2},
\end{align*}
with $\gamma:= \frac{\max_{u\in U} \sqrt{2 |\mathcal{N}(u) \cap U|}}{\min_{u\in U} d_u}$.
\end{lemma}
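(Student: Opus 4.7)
The plan is to combine Lemma \ref{lem:lap and rws} and Lemma \ref{lem:Laplacians and random walks bound}: the former rewrites $(e_u-e_v)^T L^+ f$ as the same kind of random-walk sum that the latter bounds, but weighted by an extra $1/d_z$ factor. Concretely, I extend $f$ to a vector in $\mathbb{R}^V$ by setting $f_w = 0$ for $w \notin Z$; the hypothesis $\sum_{z \in Z} f_z = 0$ then reads $\mathbb{1}^T f = 0$, which is exactly what Lemma \ref{lem:lap and rws} requires. Invoking that lemma yields, for every $u, v \in U$,
\[
	(e_u - e_v)^T L^+ f
	= \sum_{z \in Z} \sum_{t=0}^\infty (P^t_{uz} - P^t_{vz}) \frac{f_z}{d_z},
\]
so $(e_u - e_v)^T L^+ f$ is precisely the inner expression appearing in Lemma \ref{lem:Laplacians and random walks bound} with test function $\tilde f_z := f_z/d_z$.

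Applying Lemma \ref{lem:Laplacians and random walks bound} with this choice, and using $\tilde f_z^{2} d_z = f_z^{2}/d_z$, gives
\[
	\sqrt{\frac{1}{2} \sum_{u,v \in U :\{u,v\}\in E} \bigl((e_u - e_v)^T L^+ f\bigr)^2}
	\le \frac{\max_{u \in U} \sqrt{2 |\mathcal{N}(u) \cap U|}}{\sqrt{\min_{u \in U} d_u}} \cdot \frac{\lambda^{d(U,Z)}}{1-\lambda} \sqrt{\sum_{z \in Z} \frac{f_z^{2}}{d_z}}.
\]

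It then remains to pass from the degree-weighted sum $\sum_{z} f_z^{2}/d_z$ to the plain $\ell_2$ sum $\sum_{z} f_z^{2}$. Using the crude bound $1/d_z \le 1/\min_{u\in U} d_u$ pulls out an extra factor of $1/\sqrt{\min_{u\in U} d_u}$, which combines with the previous constant to give precisely $\gamma = \max_{u\in U} \sqrt{2|\mathcal{N}(u)\cap U|}/\min_{u\in U} d_u$. The only delicate point is this last bound, which relies on $\min_{z\in Z} d_z \ge \min_{u\in U} d_u$; this holds in the canonical setting where $U$ contains (or reaches into) the perturbed region $Z$, as in the downstream application to Theorem \ref{thm:Decay of correlation}. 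Beyond this bookkeeping, the argument is a direct chaining of the two preceding lemmas, so essentially all of the technical work (the spectral decomposition and the summation of the geometric tail starting at $d(U,Z)$) has already been done upstream in Lemma \ref{lem:Laplacians and random walks bound}.
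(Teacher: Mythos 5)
Your proof is, in substance, the paper's proof: the paper's own argument for this lemma is the one-line chaining of Lemma \ref{lem:lap and rws} (applied to the zero-extension of $f$, for which $\mathbb{1}^Tf=\sum_{z\in Z}f_z=0$) with Lemma \ref{lem:Laplacians and random walks bound} (applied to $\tilde f_z=f_z/d_z$), exactly as you do, and your identity $\tilde f_z^2 d_z=f_z^2/d_z$ is the right bookkeeping. You are also right to single out the final step as the delicate one: the honest constant produced by the chaining is $\max_{u\in U}\sqrt{2|\mathcal{N}(u)\cap U|}\big/\bigl(\sqrt{\min_{u\in U}d_u}\,\sqrt{\min_{z\in Z}d_z}\bigr)$, and matching the stated $\gamma$ requires $\min_{z\in Z}d_z\ge\min_{u\in U}d_u$. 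However, the justification you offer for this inequality --- that $U$ contains or reaches into $Z$ --- is not available in the downstream use of the lemma: in Theorem \ref{thm:Decay of correlation} the entire point is that $d(U,Z)$ may be large, so $U$ and $Z$ are typically disjoint and no ordering between $\min_{z\in Z}d_z$ and $\min_{u\in U}d_u$ is guaranteed. This is an imprecision in the paper's stated constant rather than something your argument can repair; the clean fix is to state $\gamma$ with $\sqrt{\min_{u\in U}d_u\cdot\min_{z\in Z}d_z}$ in the denominator, or to replace both minima by the global minimum weighted degree --- which is what effectively happens in Theorem \ref{thm:error localized}, where the constants only involve the global $k_-$ and the uniform bounds on the weights, so nothing downstream is affected.
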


\begin{proof}
It follows immediately from Lemma \ref{lem:lap and rws} and Lemma \ref{lem:Laplacians and random walks bound}.
\end{proof}

We are now ready to present the proof of Theorem \ref{thm:Decay of correlation} in Section \ref{sec:Optimal Network Flow}.\\

\begin{proof}[Proof of Theorem \ref{thm:Decay of correlation}]
Fix $\varepsilon\in\mathbb{R}$. From Lemma \ref{lem:comparisontheoremnetworkflow} we have
$
	\frac{d x^\star(b(\varepsilon))}{d \varepsilon} 
	= \Sigma(b(\varepsilon))A^T L(b(\varepsilon))^{+} \frac{d b(\varepsilon)}{d \varepsilon}
$
or, entry-wise, for any $(u,v)\in \vec{E}$,
$$
	\frac{d x^\star(b(\varepsilon))_{(u,v)}}{d \varepsilon} 
	= W(b(\varepsilon))_{uv} (e_u-e_v)^T L(b(\varepsilon))^{+} \frac{d b(\varepsilon)}{d \varepsilon}.
$$
Let $(U,F)$ be the undirected graph naturally associated to $(U,\vec{F})$ (see Remark \ref{rem:notation}). Clearly,
\begin{align*}
	\sqrt{\sum_{e\in \vec{F}} \left(\frac{d x^\star(b(\varepsilon))_e}{d \varepsilon}\right)^2}
	\le
	\max_{u,v\in U} W(b(\varepsilon))_{uv}
	\sqrt{
	\frac{1}{2} \sum_{u,v\in V' :\{u,v\}\in F}	
	\left(
	(e_{u}-e_{v})^T L(b(\varepsilon))^{+} \frac{d b(\varepsilon)}{d \varepsilon}
	\right)^2
	},
\end{align*}
and, upon choosing $f=\frac{d b(\varepsilon)}{d \varepsilon}$ in Lemma \ref{lem:laplacian rw spectral bound ok}, we obtain
$$
	\left\| \frac{d x^\star(b(\varepsilon))}{d \varepsilon} \right\|_{\vec{F}}
	\le c(b(\varepsilon))\,
	\frac{\lambda(b(\varepsilon))^{d(U,Z)}}{1-\lambda(b(\varepsilon))}
	\left\|\frac{d b(\varepsilon)}{d \varepsilon}\right\|_Z,
$$
where $c(b):= \frac{\max_{v\in U} \sqrt{2 |\mathcal{N}(v) \cap U|}}{\min_{v\in U} d(b)_v} \max_{u,v\in U} W(b)_{uv}$, for any $b\in\operatorname{Im}(A)$. The proof follows immediately by taking suprema over $b\in\operatorname{Im}(A)$ and $\varepsilon\in\mathbb{R}$.
\end{proof}

% !TEX root = REBESCHINI16.tex

\section{Proof of Theorem \ref{thm:error localized} in Section \ref{sec:Scale-free algorithm}}
\label{app:error localized}

This appendix is devoted to the proof of Theorem \ref{thm:error localized} in Section \ref{sec:Scale-free algorithm}, which relies on the decay of correlation property established in Theorem \ref{thm:Decay of correlation} for the min-cost network flow problem. Recall that the constants appearing in the bounds in Theorem \ref{thm:error localized} do not depend on the choice of the subgraph $\vec G'$ of $\vec G$, but depend only on $\mu$, $Q$, $k_+$, and $k_-$. To be able to prove this type of bounds, we first need to develop estimates to relate the eigenvalues of weighted subgraphs to the eigenvalues of the corresponding unweighted graph.

\subsection{Eigenvalues interlacing}
\label{sec:Eigenvalues Interlacing}
Let $G=(V,E)$ be a simple (i.e., no self-loops, and no multiple edges), connected, undirected graph, with vertex set $V$ and edge set $E$. Let $B\in\mathbb{R}^{V\times V}$ be the vertex-to-vertex adjacency matrix of the graph, which is the symmetric matrix defined as
$$
	B_{uv}:=
	\begin{cases}
	1 &\text{if } \{u,v\}\in E,\\
	0 &\text{otherwise}.
	\end{cases}
$$
If $n:=|V|$, denote by $\mu_{n} \le \mu_{n-1} \le \cdots \le \mu_2 \le \mu_1$ the eigenvalues of $B$.
Let $G'=(V', E')$ be a connected subgraph of $G$. Assume that to each edge $\{u,v\}\in E'$ is associated a non-negative weight $W_{uv}=W_{vu}>0$, and let $W_{uv}=0$ if $\{u,v\}\not\in E$. Let $D'$ be a diagonal matrix with entries $D'_{vv}=\sum_{w\in V'} W'_{vw}$ for each $v\in V'$. Let $P':=D'^{-1}W'$. If $m:=|V'|$, denote by $\lambda'_{m} \le \lambda'_{m-1} \le \cdots \le \lambda'_2 \le \lambda'_1$ the eigenvalues of $P'$. The following proposition relates the eigenvalues of $P'$ to the eigenvalues of $B$. In particular, we provide a bound for the second largest eigenvalue in magnitude of $P'$ with respect to the second largest eigenvalue in magnitude of $B$, uniformly over the choice of $G'$.

\begin{proposition}[Eigenvalues interlacing]
\label{prop:interlacing}
For each $\{v,w\}\in E$, let $w_- \le W_{vw}\le w_+$ for some positive constants $w_-$ and $w_+$. Let $k_-$ and $k_+$ be, respectively, the minimum and maximum degree of $G$. Then,
$$
	1 - \frac{w_+k_+}{w_-k_-} + \frac{w_+}{w_-k_-} \mu_{i+n-m}
	\le
	\lambda'_i
	\le 1 - \frac{w_-k_-}{w_+k_+} + \frac{w_-}{w_+k_+} \mu_i.
$$
Therefore, if $\lambda':=\max\{|\lambda'_2|,|\lambda'_{m}|\}$ and $\mu:=\max\{|\mu_2|,|\mu_{n}|\}$, we have
$$
	\lambda' \le \frac{w_+k_+}{w_-k_-} -1 + \frac{w_+}{w_- k_-}\mu.
$$
\end{proposition}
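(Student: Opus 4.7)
The plan is to combine the Courant--Fischer min--max characterization of eigenvalues with a pointwise sandwich estimate on Rayleigh quotients. Since $P' = D'^{-1} W'$ is similar to the symmetric matrix $\Gamma' := D'^{-1/2} W' D'^{-1/2}$, the two share the eigenvalues $\lambda'_1 \ge \cdots \ge \lambda'_m$. Writing $L' := D' - W'$ for the weighted Laplacian, the identity $\Gamma' = I - D'^{-1/2} L' D'^{-1/2}$ shows that $1-\lambda'_i$ are the eigenvalues of a symmetric positive semidefinite matrix, so Courant--Fischer yields
\[
1 - \lambda'_i \;=\; \min_{\dim S = i}\, \max_{x \in S \setminus \{0\}} \frac{x^T L' x}{x^T D' x}, \qquad
\mu_j \;=\; \max_{\dim S = j}\, \min_{x \in S \setminus \{0\}} \frac{x^T B x}{x^T x}.
\]

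The heart of the argument is to sandwich $x^T L' x / x^T D' x$ by the Rayleigh quotient $R_B(x) := x^T B x / x^T x$. On the numerator, the quadratic-form identity $x^T L' x = \sum_{\{u,v\} \in E'} W'_{uv} (x_u - x_v)^2$ combined with $w_- \le W'_{uv} \le w_+$ and the standard Laplacian formula $\sum_{\{u,v\} \in E}(x_u - x_v)^2 = x^T D_G x - x^T B x$ (where $D_G$ is the degree matrix of $G$) controls $x^T L' x$ two-sidedly by $x^T D_G x - x^T B x$. On the denominator, the entrywise bounds $w_- k_- \le D'_{vv} \le w_+ k_+$ and $k_- \le (D_G)_{vv} \le k_+$ combine with these to give
\[
\frac{w_-(k_- - R_B(x))}{w_+ k_+} \;\le\; \frac{x^T L' x}{x^T D' x} \;\le\; \frac{w_+(k_+ - R_B(x))}{w_- k_-}.
\]

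Inserting these into Courant--Fischer and identifying the resulting outer min--max of $R_B$ with $\mu_i$ immediately delivers the upper bound on $\lambda'_i$; for the lower bound, the test subspaces are restricted to those supported on $V'$, so that $R_B(x)$ reduces to the Rayleigh quotient of the principal submatrix $B_{V',V'}$, and Cauchy's interlacing theorem $\mu_{i+n-m} \le \mu_i(B_{V',V'}) \le \mu_i$ accounts for the index shift $i \mapsto i+n-m$. Specializing to $i=2$ and $i=m$, bounding $\mu_2$ and $-\mu_n$ by $\mu$, and using the AM--GM inequality $\tfrac{w_+ k_+}{w_- k_-} + \tfrac{w_- k_-}{w_+ k_+} \ge 2$ to absorb $1 - \tfrac{w_- k_-}{w_+ k_+}$ into $\tfrac{w_+ k_+}{w_- k_-} - 1$ then yields the uniform bound $\lambda' \le \tfrac{w_+ k_+}{w_- k_-} - 1 + \tfrac{w_+}{w_- k_-}\mu$. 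The step I expect to be the main obstacle is the denominator bound $w_- k_- \le D'_{vv}$ when $G'$ is a strict subgraph, since $\deg_{G'}(v)$ can be smaller than $k_-$; reconciling this requires either restricting attention to spanning subgraphs or replacing $D'$ with a suitably augmented diagonal matrix whose entries agree with those of $D_G$ on $V'$, and this is intertwined with the sign bookkeeping needed to convert signed two-sided estimates on $\lambda'_i$ into a single bound on $\max\{|\lambda'_2|, |\lambda'_m|\}$.
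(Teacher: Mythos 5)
Your proposal follows essentially the same route as the paper's proof: similarity of $P'$ to the symmetric matrix $\Gamma'=D'^{-1/2}W'D'^{-1/2}$, Courant--Fischer applied to the Rayleigh quotient $x^TL'x/x^TD'x$, the two-sided comparison of the weighted Laplacian and degree matrix with their unweighted counterparts via $w_\pm$ and $k_\pm$, Cauchy interlacing for the principal submatrix $B_{V',V'}$ to shift indices, and the same final bookkeeping to convert the two signed bounds into a bound on $\max\{|\lambda'_2|,|\lambda'_m|\}$. The denominator bound $D'_{vv}\ge w_-k_-$ that you flag as the main obstacle is asserted without further comment in the paper's proof as well, so it is not a point on which your argument diverges from theirs.
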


\begin{proof}
Consider the matrix $\Gamma' := D'^{1/2} P' D'^{-1/2} = D'^{-1/2} W' D'^{-1/2}$. As this matrix is similar to $P'$, it shares the same eigenvalues with $P'$. Let $L':=D'-W'$ be the Laplacian associated to $G'$. The Courant-Fischer Theorem yields
$$
	\lambda'_i = \max_{\substack{S\subseteq\mathbb{R}^m\\\operatorname{dim}(S)=i}} \min_{x\in S} \frac{x^T \Gamma' x}{x^Tx}
	= 1 + \max_{\substack{S\subseteq\mathbb{R}^m\\\operatorname{dim}(S)=i}} \min_{y\in S} \frac{-y^T L' y}{y^TD'y},
$$
where we used that $x^T \Gamma' x= x^Tx - y^TL'y$ with $y:=D'^{-1/2}x$, and that the change of variables $y=D'^{-1/2}x$ is non-singular (note that as $G'$ is connected, then $D'$ has non-zero entries on the diagonal). The Laplacian quadratic form yields
$$
	y^T L' y = \frac{1}{2} \sum_{u,v\in V'} W_{uv} (y_u-y_v)^2
	\le w_+ \frac{1}{2} \sum_{u,v\in V'} B_{uv} (y_u-y_v)^2
	= w_+ y^T \mathcal{L}' y,
$$
%$$
%	y^T L' y = \frac{1}{2} \sum_{u,v\in V'} W_{uv} (y_u-y_v)^2
%	\ge w_- \frac{1}{2} \sum_{u,v\in V'} B_{uv} (y_u-y_v)^2
%	= w_- y^T \bar L' y,
%$$
where $\mathcal{L}'$ is the Laplacian of the \emph{unweighted} graph $G'=(V',E')\equiv(V',E',B')$ with $B':=B_{V',V'}$. Note that we have $\mathcal{L}'=K'-B'$, where $K'$ is diagonal and $K'_{vv}=\sum_{w\in V'}B'_{vw}$ is the degree of vertex $v\in V'$ in $G'$. As $y^TK'y = \sum_{v\in V'} K_{vv} y^2_v \le k_+ y^Ty$, we have
$$
	y^T L' y
	\le w_+ k_+ y^T y - w_+ y^T B' y.
$$
%$$
%	y^T L' y
%	\ge w_- k_- y^T y - w_- y^T B' y.
%$$
At the same time, $y^TD'y \ge w_-k_- y^Ty$. Therefore,
$$
	\lambda'_i
	\ge 1 - \frac{w_+k_+}{w_-k_-} + \frac{w_+}{w_-k_-} \max_{\substack{S\subseteq\mathbb{R}^m\\\operatorname{dim}(S)=i}} \min_{y\in S} \frac{y^T B' y}{y^Ty}
	= 1 - \frac{w_+k_+}{w_-k_-} + \frac{w_+}{w_-k_-} \mu'_i,
$$
where $\mu'_{n} \le \mu'_{n-1} \le \cdots \le \mu'_2 \le \mu'_1$ are the eigenvalues of $B'$, and the equality follows from the Courant-Fischer Theorem. Analogously, it is easy to prove that
$
	\lambda'_i
	\le 1 - \frac{w_-k_-}{w_+k_+} + \frac{w_-}{w_+k_+} \mu'_i.
$
As $B'$ is a principal submatrix of $B$, the eigenvalue interlacing theorem for symmetric matrices yields $\mu_{i+n-m}\le\mu'_i\le \mu_i$, and we have
$
	\alpha + \beta \mu_{i+n-m}
	\le
	\lambda'_i
	\le \gamma + \delta \mu_i,
$
with $\alpha:=1 - \frac{w_+k_+}{w_-k_-}, \beta:=\frac{w_+}{w_-k_-}, \gamma := 1 - \frac{w_-k_-}{w_+k_+},$ and $\delta:=\frac{w_-}{w_+k_+}$. Clearly,
$
	|\lambda'_i|
	\le \max\{|\alpha + \beta \mu_{i+n-m}|, |\gamma + \delta \mu_i| \}
	%\le \max\{|\alpha|,|\gamma|\} + \max\{\beta,\delta\} \max\{ |\mu_{i+n-m}|, |\mu_i| \}\\
	\le -\alpha + \beta \max\{ |\mu_{i+n-m}|, |\mu_i| \},
$
so that
$$
	\max\{|\lambda'_2|,|\lambda'_{m}|\}
	\le -\alpha + \beta \max\{
	|\mu_{2+n-m}|, |\mu_2|,|\mu_{n}|, |\mu_m|
	\}
	= -\alpha + \beta \max\{|\mu_2|,|\mu_{n}|\}.
$$
\end{proof}

\subsection{Proof of Theorem \ref{thm:error localized}}

We are now ready to present the proof of Theorem \ref{thm:error localized}. The proof relies on repeatedly applying Theorem \ref{thm:Decay of correlation} in Section \ref{sec:Optimal Network Flow} (which captures the decay of correlation for the min-cost network flow problem) and the fundamental theorem of calculus.\\

\begin{proof}[Proof of Theorem \ref{thm:error localized}]
Consider the setting of Section \ref{sec:Scale-free algorithm}.\\
%We first analyze the bias term, then the variance.\\
\textbf{Analysis of the bias term.}\\
Let us first bound the bias outside $\vec E'$. Let $n:=|V|$, and for each $b\in\operatorname{Im}(A)$ let $-1\le\lambda_n(b) \le \lambda_{n-1}(b) \le \cdots \le \lambda_2(b) < \lambda_1(b) =1$ be the eigenvalues of $P(b)$. Let $\lambda(b):=\max\{|\lambda_2(b)|, |\lambda_n(b)|\}$ and $\lambda:=\sup_{b\in\operatorname{Im}(A)} \lambda(b)$. Define $b(\varepsilon):=b+\varepsilon p$, for any non-negative real number $\varepsilon \ge 0$. If $e\in \vec E'^C$, then $T'_{b(\varepsilon)}(x^\star(b))_e = x^\star(b)_e$ and
\begin{align*}
	\operatorname{Bias}(p,\vec G')_e
	&= x^\star(b(1))_e - x^\star(b(0))_e
	= \int_0^1 d\varepsilon \, 
	\frac{d x^\star(b(\varepsilon))_e}{d \varepsilon}.
\end{align*}
By the triangle inequality for the $\ell_2$-norm and Theorem \ref{thm:Decay of correlation}, we obtain
$$
	\left\| \operatorname{Bias}(p,\vec G') \right\|_{\vec E'^C}
	\le 
	\int_0^1 d\varepsilon \, 
	\left\| \frac{d x^\star(b(\varepsilon))}{d \varepsilon} \right\|_{\vec E'^C}
	\le \sup_{\varepsilon\in\mathbb{R}} \left\| \frac{d x^\star(b(\varepsilon))}{d \varepsilon} \right\|_{\vec E'^C}
	\le c\,
	\| p \|\,
	\frac{\lambda^{d(\Delta(\vec G'),Z)}}{1-\lambda},
$$
where for the last inequality we used that $\sup_{\varepsilon\in\mathbb{R}} \|\frac{d b(\varepsilon)}{d \varepsilon}\|_Z= \|p\|$, as $\frac{d b(\varepsilon)_v}{d\varepsilon}=p_v$ for $v\in Z$ and $\frac{d b(\varepsilon)_v}{d\varepsilon}=0$ for $v\not\in Z$, and where $c:=\sqrt{2k_+} Q/k_-$.

Let us now consider the bias inside $\vec E'$.
Let $A':=A_{V',\vec E'}\in\mathbb{R}^{V'\times \vec E'}$ be the vertex-edge adjacency matrix of the subgraph $\vec G'$. For $b'\in\operatorname{Im}(A')\subseteq\mathbb{R}^{V'}$, consider the following optimization problem over $x'\in\mathbb{R}^{\vec E'}$:
\begin{align*}
\begin{aligned}
	\text{minimize }\quad   & f'(x'):=\sum_{e\in \vec E'} f_e(x'_e)\\
	\text{subject to }\quad & A' x' = b',
\end{aligned}
\end{align*}
and denote its unique optimal point as the function
$$
	x'^\star:b'\in \operatorname{Im}(A')\subseteq\mathbb{R}^{V'}\longrightarrow
	x'^\star(b') := {\arg\min}\left\{ f'(x') : x'\in\mathbb{R}^{\vec E'},A' x' = b' \right\}
	\in\mathbb{R}^{\vec E'}.
$$
For any $\varepsilon>0,\theta>0$, define $b'(\varepsilon,\theta)\in\mathbb{R}^{V'}$ as
\begin{align}
	b'(\varepsilon,\theta)
	:= b(\varepsilon)_{V'} - A_{V',\vec E'^C} x^\star(b(\theta))_{\vec E'^C}.
	\label{def:b large}
\end{align}
Without loss of generality, we can index the elements of $V'$ and $\vec E'$ so that the matrix $A$ has the following block structure:
$$
	A
	=
	\left( \begin{array}{cc}
	A_{V',\vec E'} & A_{V',\vec E'^C} \\
	A_{V'^C,\vec E'} & A_{V'^C,\vec E'^C}
	\end{array} \right)
	=
	\left( \begin{array}{cc}
	A' & A_{V',\vec E'^C} \\
	\mathbb{0} & A_{V'^C,\vec E'^C}
	\end{array} \right).
$$
For any $x$ that satisfies the flow constraints on $\vec E'^C$ with respect to $b(\varepsilon)$, namely, $A_{V\setminus V', \vec E'^C}x_{\vec E'^C}=b(\varepsilon)_{V\setminus V'}$, we clearly have
\begin{align}
	\left( \lim_{t\rightarrow\infty} T'^t_{b+p}(x) \right)_{\vec E'}
	&= \arg\min
	\left\{
	f'(x')
	:
	x'\in\mathbb{R}^{\vec E'},A (x'x_{\vec E'^C}) = b(1)
	\right\}\nonumber\\
	&= 
	\arg\min
	\left\{
	f'(x')
	:
	x'\in\mathbb{R}^{\vec E'},A' x' = 
	b(1)_{V'} - A_{V',\vec E'^C} x_{\vec E'^C}
	\right\}\nonumber\\
	&\equiv x'^\star( b(1)_{V'} - A_{V',\vec E'^C} x_{\vec E'^C} ).\nonumber
\end{align}
Clearly $x^\star(b)$ satisfies the flow constraints on $\vec E'^C$ with respect to $b(1)$, as $p$ is supported on $V'$ so that $b(\varepsilon)_{V'^C} = b_{V'^C}$. Recalling the definition of $b'(\varepsilon,\theta)$ in \eqref{def:b large}, we then have
\begin{alignat*}{2}
	\left(\lim_{t\rightarrow\infty} T'^t_{b+p}(x^\star(b))\right)_{\vec E'}
	= x'^\star(b'(1,0)).
%	\label{infinite iteration localized proj grad descent ok}
\end{alignat*}
On the other hand, as $x^\star(b(1))$ is clearly a fixed point of the map $T'_{b(1)}$, we can characterize the components of $x^\star(b(1))$ supported on $\vec E'$ as
\begin{align*}
	x^\star(b(1))_{\vec E'}
	&=
	\left(\lim_{t\rightarrow\infty} T'^t_{b(1)}(x^\star(b(1))\right)_{\vec E'}
	= x'^\star(b'(1,1)).
\end{align*}
It is easy to check that $b'(\varepsilon,\theta)\in\operatorname{Im}(A')$ for each value of $\varepsilon$ and $\theta$. In fact, as $\vec G'$ is connected by assumption, then $\operatorname{Im}(A')$ corresponds to the subspace of $\mathbb{R}^{V'}$ orthogonal to the all-ones vector $\mathbb{1}$. We have
$
	\mathbb{1}^T b'(\varepsilon,\theta)
	= \mathbb{1}^Tb_{V'} + \varepsilon \mathbb{1}^Tp_{V'} - \mathbb{1}^TA_{V',\vec E'^C} x^\star(b(\theta))_{\vec E'^C}.
$
Note that $\mathbb{1}^Tp_{V'} = 0$ by assumption. Also, $0=\mathbb{1}^Tb = \mathbb{1}^Tb_{V'} + \mathbb{1}^Tb_{V'^C}$ (note the different dimension of the all-ones vectors) so that $\mathbb{1}^Tb_{V'} = -\mathbb{1}^Tb_{V'^C}$. Analogously, as $\mathbb{1}^TA=\mathbb{0}^T$, we have $\mathbb{1}^TA_{V',\vec E'^C} = - \mathbb{1}^TA_{V'^C,\vec E'^C}$
Hence, 
$$
	\mathbb{1}^T b'(\varepsilon,\theta)
	= -\mathbb{1}^Tb_{V'^C} + \mathbb{1}^TA_{V'^C,\vec E'^C} x^\star(b(\theta))_{\vec E'^C}
	= \mathbb{0}^T,
$$
where the last equality follows as clearly $A_{V'^C,\vec E'^C} x^\star(b(\theta))_{\vec E'^C} = b_{V'^C}$. Therefore, we have
\begin{align*}
	\operatorname{Bias}(p,\vec G')_e
	&= 
	x'^\star(b'(1,1))_e - 
	x'^\star(b'(1,0))_e
	= \int_0^1 d\theta  
	\, 
	\frac{d x'^\star(b'(1,\theta))_e}{d \theta}.
%	&= \int_0^\varepsilon d\theta\,
%	\sum_{r\in V'} D'(b'(\varepsilon,\theta))_{er}
%	\frac{\partial b'(\varepsilon,\theta)_r}{\partial\theta}.
\end{align*}
For each $b'\in\operatorname{Im}(A')$, let $W'(b')\in\mathbb{R}^{V'\times V'}$ be a symmetric matrix defined as
\begin{align*}
	W'(b')_{uv}
	:=
	\begin{cases}
	\left(\frac{\partial^2 f_e(x'^\star(b')_e)}
	{\partial x_e^2}\right)^{-1} & \text{if } e=(u,w) \text{ or } e=(w,u) \in \vec E,\\
	0 & \text{otherwise},
	\end{cases}
%	\label{def:W}
\end{align*}
and let $D'(b')\in\mathbb{R}^{V'\times V'}$ be a diagonal matrix with entries $D'(b')_{vv}=\sum_{w\in V'} W'(b')_{vw}$. Let $P'(b'):=D'(b')^{-1}W'(b')$. If $m:=|V|$, let $-1\le\lambda'_m(b') \le \lambda'_{m-1}(b') \le \cdots \le \lambda'_2(b') < \lambda'_1(b') =1$ be the eigenvalues of $P'(b')$ (where this characterization holds as $G'$ is connected by assumption). Define $\lambda'(b'):=\max\{|\lambda'_2(b')|, |\lambda'_m(b')|\}$ and $\lambda':=\sup_{b'\in\operatorname{Im}(A')} \lambda'(b')$. 
Proceeding as above, applying Theorem \ref{thm:Decay of correlation} to the optimization problem defined on $\vec G'$ (recall that $G'$ is connected by assumption), we get
$$
	\left\| \operatorname{Bias}(p,\vec G') \right\|_{\vec E'}
	\le \sup_{\theta\in\mathbb{R}} \left\| \frac{d x'^\star(b'(1,\theta))}{d \theta} \right\|_{\vec E'}
	\le
	c \, \frac{1}{1-\lambda'}
	\sup_{\theta\in\mathbb{R}} 
	\left\| \frac{\partial b'(1,\theta)}{\partial\theta} \right\|_{\Delta(\vec G')},
$$
where we used that $\frac{\partial b'(\varepsilon,\theta)_v}{\partial\theta}=0$ if $v\in V'\setminus \Delta(\vec G')$, and clearly $d(V',\Delta(\vec G'))=0$ as $\Delta(\vec G')\subseteq V'$.
For $v\in \Delta(\vec G')$ we have
$
	\frac{\partial b'(\varepsilon,\theta)_v}{\partial\theta}
	=
	- \sum_{e\in \vec E'^C} A_{ve} 
	\frac{d x^\star(b(\theta))_{e}}{d \theta}.
$
If $\vec F(v):=\{e\in \vec E: e=(u,v) \text{ or } e=(v,u), e\in \vec E'^C\}$, Jensen's inequality yields
\begin{align*}
	\left(\frac{\partial b'(1,\theta)_v}{\partial\theta}\right)^2
	&\le
	\left(\sum_{e\in \vec F(v)}  
	\left|\frac{d x^\star(b(\theta))_{e}}{d \theta}\right|
	\right)^2
	=
	|\vec F(v)|^2\left(\sum_{e\in \vec F(v)} \frac{1}{|\vec F(v)|}
	\left|\frac{d x^\star(b(\theta))_{e}}{d \theta}\right|
	\right)^2
	\\
%	&\le
%	m^2\sum_{e\in \vec F(v)} \frac{1}{m}
%	\left(\frac{d x^\star(b(\theta))_{e}}{d \theta}\right)^2
	&\le
	|\vec F(v)|
	\sum_{e\in \vec F(v)}
	\left(\frac{d x^\star(b(\theta))_{e}}{d \theta}\right)^2.
\end{align*}
As $\max_{v\in \Delta(\vec G')}|\vec F(v)|\le k_+-1$, applying Theorem \ref{thm:Decay of correlation} as done above we get
$$
	\left\| \frac{\partial b'(1,\theta)}{\partial\theta} \right\|_{\Delta(\vec G')}
	\le \sqrt{k_+-1}
	\left\| \frac{d x^\star(b(\theta))}{d\theta} \right\|_{\vec E'^C}
	\le
	c\sqrt{k_+-1}\,\| p \|\,
	\frac{\lambda^{d(\Delta(\vec G'),Z)}}{1-\lambda}.
$$
Therefore,
$
	\left\| \operatorname{Bias}(p,\vec G') \right\|_{\vec E'}
	\le
	c^2\sqrt{k_+-1}\,\| p \|\,
	\frac{\lambda^{d(\Delta(\vec G'),Z)}}{(1-\lambda')(1-\lambda)}.
$
By the triangle inequality for the $\ell_2$-norm we have $\|\operatorname{Bias}(p,\vec G')\|\le \|\operatorname{Bias}(p,\vec G')\|_{\vec E'} + \|\operatorname{Bias}(p,\vec G')\|_{\vec E'^C}$, so we obtain
$$
	\left\| \operatorname{Bias}(p,\vec G') \right\|
	\le 
	c
	\left( 1
	+
	c\sqrt{k_+-1}
	\right)
	\left\| p \right\|
	\frac{\lambda^{d(\Delta(\vec G'),Z)}}{(1-\lambda')(1-\lambda)}.
$$
By Proposition \ref{prop:interlacing} we have
$
	\max\{\lambda,\lambda'\} \le \frac{Qk_+}{k_-} -1 + \frac{Q}{k_-} \mu,
$
and the bound for the bias term follows.

\textbf{Analysis of the variance term.}\\
As
$
	(\lim_{t\rightarrow\infty} T'^t_{b+p}(x^\star(b)))_{\vec E'}
	= x'^\star(b'(1,0)),
$
we have
$$
	\| \operatorname{Variance}(p,\vec G',t) \|_{\vec E'}
	= \
	\| x'^\star(b'(1,0)) - T'^t_{b+p}(x^\star(b))_{\vec E'} \|
	\le e^{- \frac{t}{2Q}} \| x'^\star(b'(1,0)) - x'^\star(b'(0,0)) \|,
$$
where in the last inequality we used that $x^\star(b)_{\vec E'}=x'^\star(b'(0,0))$. For each $e\in \vec E'$ we have
\begin{align*}
	x'^\star(b'(1,0))_e - 
	x'^\star(b'(0,0))_e
	= \int_0^1 d\varepsilon
	\, 
	\frac{d x'^\star(b'(\varepsilon,0))_e}{d \varepsilon},
\end{align*}
and using the triangle inequality for the $\ell_2$-norm, applying Theorem \ref{thm:Decay of correlation} to the optimization problem defined on $\vec G'$, we obtain
$$
	\left\| \operatorname{Variance}(p,\vec G',t) \right\|_{\vec E'}
	\le 
	\int_0^1 d\varepsilon \, 
	\left\| \frac{d x'^\star(b'(\varepsilon,0))}{d \varepsilon} \right\|_{\vec E'}
	\le \sup_{\varepsilon\in\mathbb{R}} \left\| \frac{d x'^\star(b'(\varepsilon,0))}{d \varepsilon} \right\|_{\vec E'}
	\le c\,
	\| p \|\,
	\frac{1}{1-\lambda'},
$$
where we used that $\frac{\partial b'(\varepsilon,0)_v}{\partial \varepsilon}=\frac{d b(\varepsilon)_v}{d \varepsilon}=p_v$ for $v\in Z$ and $\frac{d b(\varepsilon)_v}{d\varepsilon}=0$ for $v\not\in Z$, and that $d(V',Z)=0$ as $Z\subseteq V'$. Clearly, $\operatorname{Variance}(p,\vec G',t)_e=0$ for $e\in \vec E'^C$, as $T'_b(x^\star(b))_e = x^\star(b)_e$. Hence, $\left\| \operatorname{Variance}(p,\vec G',t) \right\|=\left\| \operatorname{Variance}(p,\vec G',t) \right\|_{\vec E'}$ and the proof is concluded as $\lambda' \le \frac{Qk_+}{k_-} -1 + \frac{Q}{k_-} \mu$ by Proposition \ref{prop:interlacing}.
\end{proof}

\end{document}